\theoremstyle{plain}
\newtheorem{theorem}{Theorem}[section]
\newtheorem*{theorem*}{Theorem}
\newtheorem{proposition}[theorem]{Proposition}
\newtheorem*{proposition*}{Proposition}
\newtheorem{lemma}[theorem]{Lemma}
\newtheorem{corollary}[theorem]{Corollary}
\newtheorem*{corollary*}{Corollary}
\newtheorem{definition}{Definition}
\newtheorem{conjecture}[theorem]{Conjecture}
\theoremstyle{remark}
\newtheorem{remark}{Remark}
\numberwithin{equation}{section}
\title{Achieving Domain-Independent \\ Certified Robustness via \textit{Knowledge Continuity}}
\author{%
  Alan Sun$^{1,2}$, Chiyu Ma$^2$, Kenneth Ge$^1$, Soroush Vosoughi$^2$ \\
  $^1$Carnegie Mellon University, $^2$Dartmouth College \\
  \texttt{\{alansun, kkge\}@andrew.cmu.edu}, \\
  \texttt{\{chiyu.ma.gr, soroush.vosoughi\}@dartmouth.edu}
}
\begin{document}
\maketitle
\begin{abstract}
    We present \textit{knowledge continuity}, a novel definition inspired by Lipschitz continuity which aims to certify the robustness of neural networks across input domains (such as continuous and discrete domains in vision and language, respectively). Most existing approaches that seek to certify robustness, especially Lipschitz continuity, lie within the continuous domain with norm and distribution-dependent guarantees. In contrast, our proposed definition yields certification guarantees that depend only on the loss function and the intermediate learned metric spaces of the neural network. These bounds are independent of domain modality, norms, and distribution. We further demonstrate that the expressiveness of a model class is not at odds with its knowledge continuity. This implies that achieving robustness by maximizing knowledge continuity should not theoretically hinder inferential performance.
    Finally, to complement our theoretical results, we present several applications of knowledge continuity such as regularization{, a certification algorithm,} and show that knowledge continuity can be used to localize vulnerable components of a neural network\footnote{{Codebase for our experiments can be found at \url{https://github.com/alansun17904/kc}}}. 
\end{abstract}

\section{Introduction}
Deep neural networks (DNNs) have demonstrated remarkable generalization capabilities. Their robustness, however, has been considerably more difficult to achieve. \textit{Robustness} refers to the preservation of model performance under natural or adversarial alterations of the input~\cite{drenkow2022systematic}. DNNs' lack of robustness, highlighted by seminal works such as \cite{goodfellow2014explaining, szegedy2014intriguing} and recently \cite{carlini2023aligned, biderman2024emergent}, poses significant challenges to their adoption in critical applications, underscoring concerns for AI safety and trustworthiness~\cite{realworldphysicaladv, naturaladvexamples, chao2023jailbreaking, carlini2023aligned}. 

Though issues of robustness emerged from computer vision applications, they have since spanned multiple domains~\cite{alzantot2018generating, jin2020textfooler,wallace2021universal,wei2023jailbroken, carlini2023aligned}. This research trajectory has not only prompted significant advancements in robustness improvements through architectural, training, and dataset augmentations, but also unveiled the sophistication of \textit{adversarial attacks}---the process through which counterexamples to robustness are generated~\cite{alzantot2018generating, jin2020textfooler, wallace2021universal, wei2023jailbroken, carlini2023aligned}. {Along the progress made in these parallel directions}, a great deal of work has gone into \textit{certified robustness} which seeks to provide theoretical robustness guarantees. Certification is desirable as it generally transcends any particular task, dataset, or model. 

As a result, \textit{Lipschitz continuity} has emerged, promising certified robustness by {essentially} bounding the derivative of a neural network's output with respect to its input. In this way, Lipschitz continuity directly captures the volatility of a model's performance, getting at the heart of robustness. Such an approach has proven its merit in computer vision, facilitating robustness under norm and distributional assumptions~\cite{hein2017formal, ruan2018reachability, weng2018evaluating, fastcomputecertified}. Its inherent ease and interpretability has lead to widespread adoption as a means to measure and regulate robustness among practitioners as well~\cite{virmaux2018lipschitz, cohen2019certified, fazlyab2019efficient, lipschitz-margin, pauli2021training}. 

Despite these successes in computer vision, there are fundamental obstacles when one tries to apply Lipschitz continuity into discrete or non-metrizable domains such as natural language. Firstly, characterizing distance in this input-output space is highly nontrivial, as language does not have a naturally-endowed distance metric. Additionally, {suppose we impose some distance metric on the input-output space~\cite{mikolov2013efficient, devlin2018bert}. For such a metric to meaningfully characterize adversarial perturbations, it cannot be universally task-invariant. Consider the two sentences (a) ``I am happy,'' (b) ``I am sad.'' The ground-truth label of (a) is invariant to the perturbation (a) $\to$ (b), if the task is sentence-structure identification, but it would not be preserved for a task like sentiment classification.} Lastly, key architectures such as the Transformer~\cite{vaswani2017attention} are provably \textit{not} Lipschitz continuous~\cite{pmlr-v139-kim21i}. \textbf{\textit{Most of these challenges are not unique to language, and they represent a strong divide of our understanding of robustness in discrete/non-metrizable and continuous domains~\cite{gama2020stability, rl-robustness}.}}

To address these issues, we propose a new conceptual framework which we call \textit{knowledge continuity}. At its core, we adopt the following axiom:
\begin{quote}
\begin{center}
\textit{Robustness is the stability of a model's performance  \\ with respect to its \textbf{perceived} knowledge of input-output relations.}
\end{center}
\end{quote}
Concretely, our framework is grounded on the premise that robustness is better achieved by focusing on the variability of a model's loss with respect to its hidden representations, rather than forcing arbitrary metrics on its inputs and outputs. Our approach results in certification guarantees independent of domain modality, norms, and distribution. We demonstrate that the expressiveness of a model class is not at odds with its knowledge continuity. In other words, achieving robustness by improving knowledge continuity should not theoretically hinder inferential performance. We show that in continuous settings (i.e. computer vision) knowledge continuity generalizes Lipschitz continuity and inherits its tight robustness bounds. Finally, we present an array of practical applications using knowledge continuity both as an indicator to predict and characterize robustness as well as an additional term in the loss function to train robust classifiers. In sum, our contributions are threefold:
\begin{itemize} [leftmargin=*]
    \item Introduction of \textit{knowledge continuity}, a new concept that frames robustness as variability of a model's loss with respect to its hidden representations.
    \item We theoretically show that knowledge continuity results in certified robustness guarantees that generalize across modalities (continuous, discrete, and non-metrizable). Moreover, this robustness does not come at the expense of inferential performance. 
    \item We present several practical applications of knowledge continuity such as using it train more robust models, {in both language processing and vision}, identify problematic hidden layers, {and using its theoretical guarantees to formulate a novel certification algorithm}.
\end{itemize}
Although our results apply to all discrete/non-metrizable and continuous spaces, throughout the paper we invoke examples from natural language as it culminates the aforementioned challenges. Further, the ubiquity of large language models make their robustness a timely focus.\label{sec:intro}

\section{Related Works}
There have been extensive studies on developing robust neural networks with theoretical guarantees. With respect to our contributions, they can be organized into the following categories.

\par\textbf{Certified robustness with Lipschitz continuity.} The exploration of Lipschitz continuity as a cornerstone for improving model robustness has yielded significant insights, particularly in the domain of computer vision. This principle, which ensures bounded derivatives of the model's output with respect to its input, facilitates a smoother model behavior and inherently encourages robustness against adversarial perturbations. This methodology, initially suggested by \cite{goodfellow2014explaining}, has since been rigorously analyzed and expanded upon. Most theoretical results in this area focus on certifying robustness with respect to the $\ell_2$-norm~\cite{cisse2017parseval, yoshida2017spectral, gouk2021regularisation, anil2019sorting,leino2021globally, hein2017formal, bartlett2017spectrally}. A recent push, fueled by new architectural developments, has also expanded these results into $\ell_\infty$-norm perturbations~\cite{zhang2022rethinking, zhang2021boosting, zhang2019towards}. Further, Lipschitz continuity-{inspired algorithms} also serve practitioners as a computationally effective way to train more robust models~\cite{lipschitz-margin,weng2018evaluating, usama2019towards,cranko2021generalised}. This stands in contrast to (virtual) adversarial training methods which brute-force the set of adversarial examples, then iteratively retrain on them~\cite{miyato2018virtual, shafahi2019adversarial, wong2020fast}. 
Though Lipschitz continuity has seen much success in continuous domains, it does not apply to non-metrizable domains such as language. Further, architectural limitations of prevalent models such as the Transformer~\cite{vaswani2017attention, pmlr-v139-kim21i} exacerbate this problem. These challenges highlight a critical need for a new approach that can accommodate the specificities of discrete and non-metrizable domains while providing robustness guarantees.

\par\textbf{Achieving robustness in discrete/non-metrizable spaces.} Non-metrizable spaces, where it is nontrivial to construct a distance metric on the input/output domains, pose a unique challenge to certified robustness. Instead of focusing on point-wise perturbations, many studies have opted to examine how the output probability distribution of a model changes with respect to input distribution shifts by leveraging information bottleneck methods~\cite{tishby2000information, wang2020infobert, oren-etal-2019-distributionally} (see also out-of-distribution generalization:~\cite{liu2023outofdistributiongeneralizationsurvey, yang2024generalized, salehi2022a}). Most of these bounds lack granularity and cannot be expressed in closed-form. In contrast to these theoretical approaches, recent efforts have refocused on directly adapting the principles underlying Lipschitz continuity to language. Virtual adversarial training methods such as~\cite{liu2020adversarial, yoo2021improving} mimic the measurement of Lipschitz continuity by comparing changes in the textual embeddings with the KL-divergence of the output logits. Along these lines, techniques akin to those used in adversarial training in vision have also been translated to language, reflecting a shift towards robustness centered around the learned representation space~\cite{li2020bertattack, garg2020bae, jin2020textfooler}. Though these approaches have seen empirical success, they lack theoretical guarantees. As a result, their implementations and success rate is heavily task-dependent~\cite{liu2020adversarial, yoo2021improving}. There have also been attempts to mitigate the non-Lipschitzness of Transformers~\cite{zhang-etal-2021-orthogonality, xu2023certifiably} by modifying its architecture. These changes, however, add significant computational overhead. 

\par\textbf{Other robustness approaches.} In parallel, other certified robustness approaches such as randomized smoothing~\cite{cohen2019certified, li2019certified, lecuyer2019certified} give state-of-the-art certification for $\ell_2$-based perturbations. Notable works such as~\cite{jia-etal-2019-certified, wang-etal-2021-certified} have sought to generalize these techniques into language, but their guarantees strongly depend on the type of perturbation being performed. On the other hand, analytic approaches through convex relaxation inductively bound the output of neurons in a ReLU network across layers~\cite{wong2018provable, wong2018scaling, weng2018towards}. These works, however, are difficult to scale and also do not transfer easily to discrete/non-metrizable domains.

Our approach, inspired by Lipschitz continuity, distills the empirical intuitions from the works of~\cite{liu2020adversarial, yoo2021improving} and provides theoretical certification guarantees independent of perturbation-type~\cite{jia-etal-2019-certified, wang-etal-2021-certified} and domain modality. We demonstrate that knowledge continuity yields many practical applications analogous to Lipschitz continuity which are easy to implement and are computationally competitive. 
\label{sec:related}

\vspace{-2mm}
\section{Preliminaries}
\vspace{-2mm}
\textbf{Notations.} Let $\R^{\geq 0} \coloneqq [0,\infty)$. For any function $f: \cX \to \cY$, we denote $\graph(f) \coloneqq \{(x,y) \in \cX \times \cY: f(x) = y\}$. {For $n \in \N$,} let $[n]$ denote the set $\{1,2,\ldots,n\}$. $(\cX, \cF_\cX, \PP_\cX)$, $(\cY, \cF_\cY, \PP_\cY)$ are probability spaces and $(\cX \times \cY, \cF_\cX \otimes \cF_\cY, \PP_\cX \times \PP_\cY)$ denotes the product measurable space of the probability spaces $\cX, \cY$. Since our contribution focuses on the supervised learning regime, we colloquially refer to $\cX,\cY$ as the input and labels, respectively. We call any probability measure $\PP_{\cX \times \cY}$ absolutely continuous to $\PP_\cX \times \PP_\cY$ (i.e. $(\PP_\cX \times \PP_\cY)(E) = 0 \Rightarrow \PP_{\cX \times \cY}(E) = 0$) a \textit{data distribution} and denote it as $\cD_{\cX,\cY}$. If $(\cZ, d_\cZ)$ is a metric space with metric $d_\cZ$ and $A \subset \cZ$, then for any $z \in \cZ$, $d_\cZ(z, A) = \inf_{a \in A} d_\cZ(a, z)$. We say that a metric space, $(\cZ, d_{\cZ})$, is bounded by some $B \in \R^{\geq 0}$, if $\sup_{z',z\in \cZ} d(z,z') < B$. Denote by $\id_\cZ: \cZ \to \cZ$ the identity function on $\cZ$. Let $\cL: \cY \times \cY \to \R^{\geq 0}$ be a loss function where $\cL(y,y') = 0$ if and only if $y = y'$. For any $f: \cX \to \cY$ and $(x,y), (x',y') \in \cX \times \cY$, we denote $\Delta \cL_f^{(x,y)}(x',y') \coloneqq |\cL(f(x), y) - \cL(f(x'),y')|$, essentially the absolute difference in loss between $(x,y)$ and $(x',y')$. Unless otherwise specified, it will be assumed that $f$ is a measurable function from $\cX$ to $\cY$ with a metric decomposition (see Def.~\ref{def:metric-decomp}). 

\textbf{Lipschitz continuity.} Given two metric spaces $(\cX, d_\cX), (\cY, d_\cY)$ a function $f:\cX\to\cY$ is $K$-Lipschitz continuous if there exists $K \in \R^{\geq 0}$ such that for all $x,x' \in \cX$, $d_\cY(f(x),f(x')) \leq K d_\cX(x,x')$.

\vspace{-2mm}
\section{Knowledge Continuity}\label{sec:kc}
\vspace{-2mm}
In this section, we provide {the formal definition} of \textit{knowlege continuity} and explore its theoretical properties. 

We start by defining a model's perceived knowledge through a rigorous treatment of its hidden representation spaces. By considering the distance between inputs in some representation space in conjunction with changes in loss, we result in a measure of \textit{volatility} analogous to Lipschitz continuity. Bounding this volatility in expectation then directly leads to our notion of knowledge continuity. With these tools, we demonstrate a host of theoretical properties of knowledge continuity including its certification of robustness, guarantees of expressiveness, and connections to Lipschitz continuity in continuous settings. We summarize our theoretical contributions as follows:
\begin{itemize} [leftmargin=*]
    \item We \textbf{\textit{define}} the perceived knowledge of a model as well as volatility and knowledge continuity within a model's representation space (see Def.~\ref{def:metric-decomp}, \ref{def:k-volatility}, \ref{def:kd}, \ref{def:kd-whole}, respectively).
    \item We \textbf{\textit{prove}} that knowledge continuity implies \textit{probabilistic} certified robustness under perturbations in the representation space and constraining knowledge continuity should not hinder the expressiveness of the class of neural networks (see Thm.~\ref{thm:robustness} and Prop.~\ref{prop:expressiveness},~\ref{prop:expressiveness-cont}, respectively).
    \item We \textbf{\textit{prove}} that in some cases knowledge continuity is equivalent (in expectation) to Lipschitz continuity. This shows that our axiomization of robustness aligns with existing results when perturbation with respect to the input is well-defined (see Prop.~\ref{prop:kd-to-lip},~\ref{prop:lip-to-kd}).
\end{itemize}

\subsection{Defining Perceived Knowledge}\label{sec:perceived-knowledge}
Herein, we capture the \textit{perceived knowledge} of a model by focusing on the relations it assigns to input-input pairs. Specifically, these relations are exposed by decomposing a function $f: \cX\to\cY$ into projections to intermediate metric spaces. Formally,

\begin{definition}[Metric Decomposition]\label{def:metric-decomp}
We say that $f$ admits a metric decomposition if there exists metric spaces $(\cZ_1, d_1), \ldots, (\cZ_n, d_n)$ with metrics $d_k$ for $k \in [n]$ such that 
\begin{enumerate}[nosep]
    \item $(\cZ_k, d_k)$ is endowed with its Borel $\sigma$-algebra.
    \item There exists measurable mappings $h_0, h_1, \ldots, h_n$ where $h_0: \cX \to \cZ_1$, $h_k: \cZ_k \to \cZ_{k+1}$ for $k \in [n-1]$, and $h_n: \cZ_n \to \cY$. 
    \item $f = h_n \circ h_{n-1} \circ \ldots \circ h_1 \circ h_0$. 
\end{enumerate}
\end{definition}

\begin{remark}\label{remark:identity-metric-decomp}
If $\cX$ is a metric space with metric $d_\cX$ and $\cF_\cX$ is its Borel $\sigma$-algebra, then for any measurable mapping $f: \cX \to \cY$ there exists the trivial metric decomposition
\begin{equation}
    f = f \circ \id_\cX.
\end{equation}
Therefore, in computer vision applications where $(\cX, d_\cX) = (\R^n, \ell_p)$ for some $n \in \Z^+$, we can apply this trivial decomposition to yield bounds which mirror the certification guarantees of Lipschitz continuity. This is discussed in detail in Section~\ref{sec:relation-to-lipschitz}.
\end{remark}

To the best of our knowledge, all deep learning architectures admit metric decompositions, since their activations are generally real-valued. So, for all subsequent functions from $\cX$ to $\cY$, unless otherwise specified, we assume they are measurable and possess a metric decomposition. Further, we denote $f^k = h_k \circ h_{k-1}\circ \ldots \circ h_1 \circ h_0$ and adopt the convention of calling $h_k$ the $k$\textsuperscript{th} hidden layer. In Appendix~\ref{appendix:metric-decomps}, we present several metric decompositions for a variety of architectures. 

For any metric-decomposible function, an immediate consequence of our definition is that its metric decomposition may not be unique. However, in the context of neural networks, this is a desirable property. Seminal works from an array of deep learning subfields such as semi-supervised learning~\citep{radford2018gpt}, manifold learning~\citep{moor2020topological}, and interpretability~\citep{chen2019looks,ma2024looks} place great emphasis on the quality of learned representation spaces by examining the induced-topology of their metrics. This often does not affect the typical performance of the estimator, but has strong robustness implications~\citep{ilyas2019adversarial}. Our results, which are dependent on particular metric decompositions, capture this trend. In Section~\ref{sec:expressiveness}, we discuss in detail the effects of various metric decompositions on our theoretical results.

% The metric is co-dependent on the data distribution.
\vspace{-0.2cm}
\subsection{Defining Knowledge Continuity}\label{sec:def-kc}
We first introduce what it means for a model's performance to be volatile at a data point relative to {its metric decomposition. Then, we contrast knowledge continuity with Lipschitz continuity, pointing out key differences that will allow us to prove more general bounds.}
\begin{definition}[$k$-Volatility]\label{def:k-volatility}
    Let $f: \cX \to \cY$ and $\cL$ be any loss function. The $k$-volatility of a point $(x,y) \in \cX \times \cY$ which we denote as $\sigma_f^k(x,y)$ is given by 
    \begin{equation}
        \sigma_f^k(x,y) \coloneqq \Exp_{\substack{(x',y') \sim \cD_{\cX,\cY} \\ f(x) \neq f(x')}} \left[\frac{\Delta\cL_f^{(x,y)}(x',y')}{d_k(f^k(x), f^k(x'))}\right],
    \end{equation}
    {where $d_k$ is the distance metric associated with $f$'s $k$\textsuperscript{th} hidden layer.}
\end{definition}
By performing some algebra on the definition, we see that it decomposes nicely into two distinct terms: \textit{sparsity} of the representation and \textit{variation} in loss. 
\begin{align}
    \sigma_f^k(x,y) &= \Exp_{(x',y') \sim\cD_{\cX, \cY}} \left[\frac{|\cL(f(x),y) - \cL(f(x'),y')|}{d_k(f^k(x),f^k(x'))}\right], \nonumber \\
    &= \cL(f(x),y) \Exp_{(x',y') \sim \cD_{\cX,\cY}} \bigg[\underbrace{\frac{1}{d_k(f^k(x),f^k(x'))}}_{\text{sparsity}}\cdot  \underbrace{\left|1 - \frac{\cL(f(x'),y')}{\cL(f(x),y)}\right|}_{\text{variation in loss}}\bigg],\label{eq:volatility-expansion}
\end{align}
Our notion of volatility essentially measures the {change in} performance with respect to perturbations to a model's perceived knowledge. In particular, Eq.~\ref{eq:volatility-expansion} reveals that there are two interactions in play which we illustrate in Fig.~\ref{fig:kd-metric}. Informally, we say that $(x,y)$ is highly volatile if there is a large discrepancy in performance between it and points that are perceived to be conceptually similar. Therefore, highly volatile points capture inaccurate input-input knowledge relations. Additionally, $(x,y)$ experiences low volatility if the space around it is sparse with respect to $\cD_{\cX,\cY}$. In other words, any set of perturbations applied in $\cZ_k$ would push $(x,y)$ far away, with high probability. This makes $(x,y)$ an isolated concept with little knowledge relationships associated with it. 

Similar to Lipschitz continuity, the boundedness of the $k$-volatility of $f$ across the data distribution is crucial and we denote this class of functions as \textit{knowledge continuous}. 

\begin{figure}
    \centering
    \includegraphics[width=0.9\textwidth]{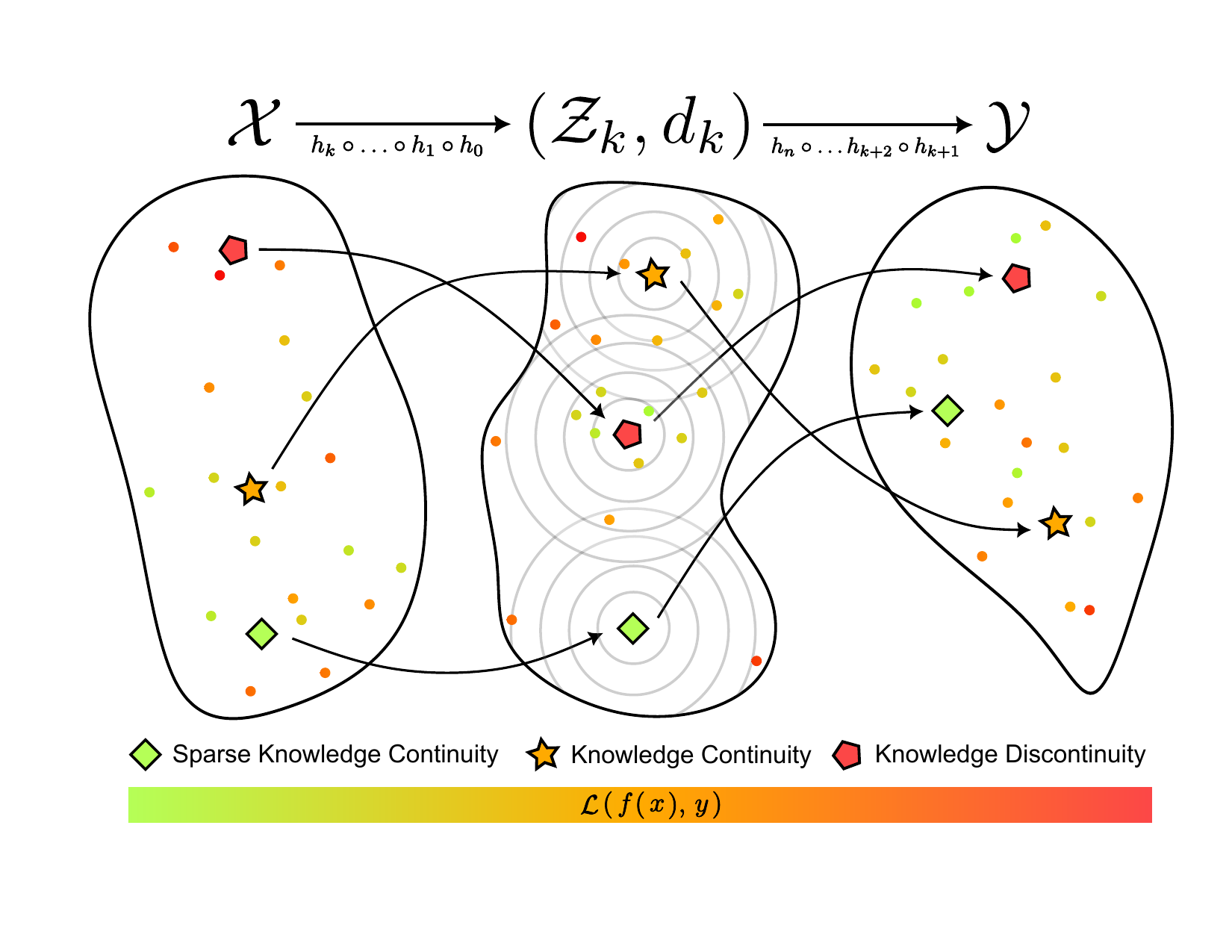}
    \caption{{Examples} of knowledge (dis)continuities. $f: \cX \to \cY$ is a measurable map, and $(\cZ_k,d_k)$ is one of its hidden representations. The color of the points indicates loss. $\vardiamondsuit$ denotes knowledge continuity {induced by sparsity}: an isolated concept with no knowledge relations close to it. So, any perturbation moves $\vardiamondsuit$ far away with high probability. Smooth changes in loss around $\bigstar$ implies knowledge continuity. Finally, $\rightpentagonblack$ {is not knowledge} continuous due to drastic changes in loss nearby. Notice that the classification of points is independent of input/output clustering behavior since $\cX,\cY$ may not be endowed with a metric.}\vspace{-0.5cm}
    \label{fig:kd-metric}
\end{figure}
\begin{definition}[{Pointwise} $\epsilon$-Knowledge Continuity]\label{def:kd}
    We say that $f$ is $\epsilon$-knowledge continuous at $(x,y) \in \cX \times \cY$ with respect to a function $f$, loss function $\cL$, and hidden layer $k$ if $\sigma_f^k(x,y) < \epsilon$. 
\end{definition}

Conversely, we say that $(x,y)$ is $\epsilon$-knowledge discontinuous if the previous inequality does not hold. Further, $(x,y)$ is simply knowledge discontinuous if $\sigma_f^k(x,y)$ is unbounded. Now, we extend this definition globally by considering the $k$-volatility between all pairs of points.

\begin{definition}[{Expected} $\epsilon$-Knowledge Continuity]\label{def:kd-whole}
    We say that $f$ is $\epsilon$-knowledge continuous with respect to a loss function $\cL$ and hidden layer $k$ if
    \begin{equation}
        \Exp_{(x,y) \sim \cD}[\sigma_f^k(x,y)] < \epsilon.
    \end{equation}
\end{definition}
Though the functional forms of Lipschitz continuity and knowledge continuity are similar, there are important differences that allow us to prove more general results. Firstly, \textbf{\textit{unlike Lipschitz continuity which is an analytical property of the model $f$, knowledge continuity is a statistical one.}} In this way, non-typical data points, even if they are volatile, are ignored, whereas Lipschitz continuity treats all points equally. This is necessary in many discrete applications, as projecting a countable input space onto a non-countable metric space inevitably results in a lack of correspondence thereof. Moreover, ground-truth relations from $\cX \to \cY$ may not be well-defined on \textit{all} of $\cX$: consider sentiment classification of an alpha-numeric UUID string or dog-cat classification of Gaussian noise. 
Secondly, \textbf{\textit{the knowledge continuity of an estimator is measured with respect to the loss function rather than its output.}} This property allows us to achieve the expressiveness guarantees in Section~\ref{sec:expressiveness}, since it places no restrictions on the function class of estimators. Lastly, \textbf{\textit{knowledge continuity measures the distance between inputs with the endowed metric in its hidden layers.}} This flexibility allows us to define knowledge continuity even when the input domain is not a metric space.  

\vspace{-0.2cm}
\subsection{Certification of Robustness}
Our first main result demonstrates that $\epsilon$-knowledge continuity implies \textit{probabilistic} certified robustness in the hidden representation space. In Theorem~\ref{thm:robustness}, given some reference set $A \subset \cX \times \cY$, we bound the probability that a $\delta$-sized perturbation in the representation space away from $A$ will result in an expected $\eta$ change in loss. In other words, knowledge continuity is able to characterize the robustness of any subset of data points with positive measure. 

\begin{theorem}\label{thm:robustness}
Let $A \subset \cX \times \cY$ such that $\PP_{\cD_{\cX,\cY}}[A] > 0$ and $\delta, \eta > 0$. Let $A' = \left\{(x',y') \in \cX \times \cY: \E_{\substack{(x,y)\sim \cD_{\cX,\cY} \\ (x,y) \in A}}\Delta\cL_f^{(x,y)}(x',y') > \eta\right\}$. If $f: \cX \to \cY$ is $\epsilon$-knowledge continuous with respect to the hidden layer indexed by $k$ and $(\cZ_k, d_k)$ is bounded by $B > 0$, then 
\begin{equation}\label{eq:robustness}
    \PP_{(x,y) \sim \cD_{\cX, \cY}}[A' \mid d_k(f^k(x),f^k(A)) < \delta] \leq 
    \frac{\epsilon\delta}{\eta\left(1 - \exp\left[-\Omega\left(\frac{\delta}{B} - \sqrt{\log\frac{1}{\PP[A]}}\right)^2\right]\right)}.
\end{equation}
\end{theorem}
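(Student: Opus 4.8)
The plan is to reduce the conditional probability in \eqref{eq:robustness} to an unconditional expectation via Markov's inequality, to control that expectation through knowledge continuity, and to lower-bound the probability of the conditioning event by concentration of measure on the intermediate space $(\cZ_k,d_k)$. Write $E_\delta := \{(x',y')\in\cX\times\cY : d_k(f^k(x'),f^k(A)) < \delta\}$ and $g(x',y') := \E_{(x,y)\sim\cD_{\cX,\cY},\,(x,y)\in A}[\Delta\cL_f^{(x,y)}(x',y')]$, so that $A' = \{(x',y') : g(x',y') > \eta\}$ and
\[
\PP_{(x,y)\sim\cD_{\cX,\cY}}[A'\mid E_\delta] \;=\; \frac{\PP[A'\cap E_\delta]}{\PP[E_\delta]} \;\le\; \frac{\E_{(x',y')\sim\cD_{\cX,\cY}}\big[g(x',y')\,\mathbf{1}_{E_\delta}(x',y')\big]}{\eta\,\PP[E_\delta]}
\]
by Markov's inequality applied to the nonnegative function $g\,\mathbf{1}_{E_\delta}$. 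It then suffices to prove (i) $\E[g\,\mathbf{1}_{E_\delta}] \le \epsilon\delta$ and (ii) $\PP[E_\delta] \ge 1-\exp[-\Omega((\delta/B - \sqrt{\log(1/\PP[A])})^2)]$.

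For (i), expand by Tonelli's theorem (the integrand is nonnegative), $\E[g\,\mathbf{1}_{E_\delta}] = \E_{(x,y)\in A}\big[\,\E_{(x',y')\sim\cD_{\cX,\cY}}[\Delta\cL_f^{(x,y)}(x',y')\,\mathbf{1}_{E_\delta}(x',y')]\,\big]$, and for each inner integral write $\Delta\cL_f^{(x,y)}(x',y') = \frac{\Delta\cL_f^{(x,y)}(x',y')}{d_k(f^k(x),f^k(x'))}\, d_k(f^k(x),f^k(x'))$, excluding the collision set $\{f(x)=f(x')\}$ (measure zero by absolute continuity of $\cD_{\cX,\cY}$ w.r.t. $\PP_\cX\times\PP_\cY$, and in any case non-contributing, as in Def.~\ref{def:k-volatility}). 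On $E_\delta$ the point $f^k(x')$ is within $\delta$ of \emph{some} point of $f^k(A)$; to convert this into a bound of the form $d_k(f^k(x),f^k(x')) < \delta$ (up to an absolute constant) for the generic $(x,y)\in A$, apply a concentration-of-measure (Lévy/Gaussian-isoperimetric-type) inequality to the pushforward of $\cD_{\cX,\cY}$ onto $(\cZ_k,d_k)$, whose rate is controlled by the diameter bound $B$: this shows $f^k(A)$ has small diameter after discarding an $\exp[-\Omega((\delta/B - \sqrt{\log(1/\PP[A])})^2)]$ fraction of the mass, and that discarded piece is absorbed crudely via $d_k\le B$ together with the definition of $A'$. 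On the bulk, the inner integral is then at most $\delta\cdot\E_{(x',y')\sim\cD_{\cX,\cY},\,f(x)\neq f(x')}\!\big[\Delta\cL_f^{(x,y)}(x',y')/d_k(f^k(x),f^k(x'))\big] = \delta\,\sigma_f^k(x,y)$; averaging over $(x,y)\in A$ and invoking expected $\epsilon$-knowledge continuity (Def.~\ref{def:kd-whole}) gives $\E[g\,\mathbf{1}_{E_\delta}]\le\epsilon\delta$, with the exceptional fraction being exactly what produces the $1-\exp[-\Omega(\cdot)]$ factor that is then moved into the denominator.

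For (ii), observe $\PP[E_\delta] = \nu(\widehat A_\delta)$, where $\nu$ is the pushforward of $\cD_{\cX,\cY}$ under $(x,y)\mapsto f^k(x)$, $\widehat A := \{f^k(x):(x,y)\in A\}\subseteq\cZ_k$, $\widehat A_\delta$ is its open $\delta$-enlargement in $(\cZ_k,d_k)$, and $\nu(\widehat A)\ge\PP[A]$; the same concentration inequality yields $\nu(\widehat A_\delta)\ge 1-\exp[-\Omega((\delta/B - \sqrt{\log(1/\nu(\widehat A))})^2)] \ge 1-\exp[-\Omega((\delta/B - \sqrt{\log(1/\PP[A])})^2)]$. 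Substituting (i) and (ii) into the Markov bound above gives \eqref{eq:robustness}.

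The main obstacle is step (i). Knowledge continuity constrains the \emph{expectational} quantity $\sigma_f^k$ rather than a uniform (pointwise) Lipschitz constant, while $E_\delta$ only places $f^k(x')$ near \emph{some} point of $f^k(A)$, not near the generic point over which $g$ averages; bridging this gap — so that the neighborhood radius $\delta$, and not merely the ambient diameter $B$, multiplies a ratio whose expectation is precisely the controlled volatility — is exactly what forces the concentration-of-measure input. Making the accounting rigorous (the measurable selection of near-optimal anchors in $A$, the precise radius produced by concentration, the exponentially small exceptional sets, and the excluded collision set $\{f(x)=f(x')\}$) is where the bulk of the technical work lies.
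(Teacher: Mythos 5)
Your overall architecture is exactly the paper's: write the conditional probability as $\PP[A'\cap E_\delta]/\PP[E_\delta]$, bound the numerator by Markov's inequality using knowledge continuity, and lower-bound the denominator by concentration on the bounded space $(\cZ_k,d_k)$. Your part (ii) is precisely the paper's Lemma on the denominator: apply McDiarmid's inequality (in a single variable, i.e.\ Hoeffding for the bounded random variable $d_k(f^k(x),f^k(A))$), bound its mean by $B\sqrt{\tfrac12\log(2/\PP[A])}$ using $\PP[d_k(f^k(x),f^k(A))=0]\geq\PP[A]$, and conclude $\PP[E_\delta]\geq 1-2\exp\bigl(-\tfrac{2}{B^2}(\delta-B\sqrt{\tfrac12\log\tfrac{2}{\PP[A]}})^2\bigr)$. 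That part is complete and matches the paper.

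The gap is in your step (i), and specifically in the repair you propose for the difficulty you (correctly) isolate. You observe that on $E_\delta$ the point $f^k(x)$ is only within $\delta$ of \emph{some} point of $f^k(A)$, not of the generic anchor $(x',y')\in A$ over which $g$ averages, and you propose to bridge this by arguing via concentration of measure that $f^k(A)$ has small diameter after discarding an exponentially small fraction of its mass. That claim is false in general: concentration of measure controls the mass of the $\delta$-\emph{enlargement} of a positive-measure set (which is what your part (ii) uses), not the \emph{diameter} of the set itself. If $A$ pushes forward to two balls of measure $\PP[A]/2$ separated by distance $B$ in $\cZ_k$, no exponentially small exceptional set makes $f^k(A)$ have diameter $O(\delta)$, and the crude absorption via $d_k\le B$ that you sketch does not recover the claimed $\epsilon\delta$ bound (it produces an additive term of order $\epsilon B$ or worse, not something hidden in the $1-\exp[-\Omega(\cdot)]$ denominator). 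For what it is worth, the paper's own proof does not perform this bridging step at all: it restricts the double expectation defining $\E\,\sigma_f^k$ to the event $\{(x',y')\in A\}\cap E_\delta$ and directly replaces $d_k(f^k(x),f^k(x'))$ by $\delta$ there, i.e.\ it treats the distance to the set as if it were the distance to the sampled anchor. So your diagnosis of the obstacle is sharper than the paper's treatment of it, but your proposed resolution does not close the argument, and as written step (i) does not go through.
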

\begin{proofoutline}
We apply the definition of conditional probability $P(A|B) = P(A\cap B)/P(B)$ and bound $P(A\cap B), P(B)$, separately. The numerator, $\PP[A'\,\text{and}\,d_k(f^k(x),f^k(A)) < \delta]$, is upper-bounded through an application of Markov's Inequality. On the other hand, we apply known concentration inequalities to lower bound $\PP[d_k(f^k(x), f^k(A)) < \delta]$, combining these results in the theorem. We present the proof in its entirety in Appendix~\ref{sec:proof-robustness}. 
\end{proofoutline}
\vspace{-2mm}
This demonstrates that knowledge continuity results in certification of robustness, independent of distance metric and domain modality. The assumption of boundedness and requirement to know $\PP[A]$ can be lost by taking limits of Eq.~\ref{eq:robustness} with respect to $B$ and $\PP[A]$. This yields the following corollary.
\begin{corollary}
If $(\cZ_k, d_k)$ is unbounded, then 
\begin{equation}
    \PP_{(x,y) \sim \cD_{\cX, \cY}}[A' \mid d_k(f^k(x),f^k(A)) < \delta] \leq 
    \frac{\epsilon\delta}{\eta(1-\PP[A])}.\label{eq:cor-robustness-1}
\end{equation}
If $\PP[A] = 0$, then 
\begin{equation}
    \PP_{(x,y) \sim \cD_{\cX, \cY}}[A' \mid d_k(f^k(x),f^k(A)) < \delta] \leq 
    \frac{\epsilon\delta}{\eta}.\label{eq:cor-robustness-2}
\end{equation}
\end{corollary}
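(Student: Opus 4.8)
The plan is to derive the corollary from Theorem~\ref{thm:robustness} by two successive passages to the limit: sending the diameter bound $B \to \infty$ to obtain Eq.~\ref{eq:cor-robustness-1}, and then sending $\PP[A] \to 0^{+}$ to obtain Eq.~\ref{eq:cor-robustness-2}. Note that $B$ and $\PP[A]$ enter the right-hand side of Eq.~\ref{eq:robustness} only through the denominator factor $1 - \exp[-\Omega((\delta/B - \sqrt{\log(1/\PP[A])})^{2})]$. As $B \to \infty$, $\delta/B \to 0$, so this factor tends to $1 - \exp[-\Omega(\log(1/\PP[A]))] = 1 - \PP[A]$ (with the hidden $\Omega$-constant read as $1$), giving Eq.~\ref{eq:cor-robustness-1}, and then $1 - \PP[A] \to 1$ as $\PP[A]\to 0^{+}$ gives Eq.~\ref{eq:cor-robustness-2}. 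So the arithmetic of the limits is trivial; all the content is in justifying that one may actually take them.

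\emph{First inequality.} If $(\cZ_k, d_k)$ is bounded with diameter $D < \infty$, then Theorem~\ref{thm:robustness} applies verbatim with every $B > D$, and since the left-hand side of Eq.~\ref{eq:robustness} is independent of $B$, letting $B \to \infty$ through $(D,\infty)$ already yields Eq.~\ref{eq:cor-robustness-1}. For a genuinely unbounded $(\cZ_k, d_k)$ I would exhaust it: fix $z_0 \in \cZ_k$, let $K_R$ be the closed $d_k$-ball of radius $R$ about $z_0$, and set $E_R = \{(x,y) : f^k(x) \in K_R\}$, so $\PP_{\cD_{\cX,\cY}}[E_R] \uparrow 1$ by continuity of measure. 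Applying Theorem~\ref{thm:robustness} to the conditioned distribution $\cD_{\cX,\cY}\mid E_R$ --- relative to which the hidden representations land in $K_R$ (of diameter $\le 2R$), $\PP[A] > 0$ still holds for $R$ large, and $f$ is $\epsilon_R$-knowledge continuous with $\epsilon_R \le \epsilon/\PP[E_R]^{2}$ --- gives the bound with $B = 2R$ and $\epsilon$ replaced by $\epsilon_R$; letting $R \to \infty$ so that $\epsilon_R \to \epsilon$ and the conditional probability, the set $A'$, and the conditioning event all converge to their counterparts under $\cD_{\cX,\cY}$, we recover Eq.~\ref{eq:cor-robustness-1}.

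\emph{Second inequality, and the main obstacle.} When $\PP[A] = 0$, the conditioning event $\{d_k(f^k(x), f^k(A)) < \delta\}$ may still have positive probability --- e.g.\ if $A$ is a single point and $f^k$ is not injective --- so the left-hand side is well defined. I would approximate $A$ from above by measurable sets $A_1 \supseteq A_2 \supseteq \cdots$ with $\bigcap_j A_j = A$ and $\PP[A_j] \downarrow 0$, apply Eq.~\ref{eq:cor-robustness-1} to each $A_j$, and send $j \to \infty$; the right-hand side $\epsilon\delta/(\eta(1-\PP[A_j])) \to \epsilon\delta/\eta$, and it remains to see that the left-hand side converges. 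This last point --- in both passages to the limit --- is the main obstacle: because $f^k$ need not be continuous or injective, $f^k(E_R)$, $f^k(A_j)$ and their $\delta$-neighborhoods do not converge to $f^k(\cX)$, $f^k(A)$ and theirs by any naive set-monotonicity argument, so one must instead establish convergence of the relevant probabilities and conditional expectations directly, by dominated convergence together with mild (outer-)regularity of $\cD_{\cX,\cY}$; the integrability hypothesis implicit in $\E[\sigma_f^k] < \epsilon < \infty$ is what makes this go through. Everything else reduces to the elementary limit computation of the first paragraph.
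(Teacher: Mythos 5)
Your first paragraph \emph{is} the paper's proof: the authors obtain Eq.~\ref{eq:cor-robustness-1} by letting $B\to\infty$ in the right-hand side of Eq.~\ref{eq:robustness} (the exponent tends to $-\log(2/\PP[A])$, so the denominator factor tends to $1-\PP[A]$ exactly, with the constants from Lemma~\ref{lem:denominator-bound}), and Eq.~\ref{eq:cor-robustness-2} by additionally letting $\PP[A]\to 0$, ``applying some of the bounds acquired in the proof of Thm.~\ref{thm:robustness}.'' Where you diverge is in the second and third paragraphs: the paper does not exhaust $\cZ_k$ by balls, does not condition on $E_R$, and does not approximate $A$ from above, so none of the convergence-of-the-left-hand-side difficulties you raise arise in its argument --- the left-hand side is held completely fixed and only the right-hand side is passed to the limit. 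You are right to sense that for a genuinely unbounded $(\cZ_k,d_k)$ the theorem's hypothesis is never satisfied for any finite $B$, so a pure ``let $B\to\infty$'' is not literally licensed; but the paper's intended repair is much lighter than yours: the numerator bound $\PP[A'\cap\{d_k(f^k(x),f^k(A))<\delta\}]\le\epsilon\delta/\eta$ in the proof of Thm.~\ref{thm:robustness} uses only Markov's inequality and $\epsilon$-knowledge continuity, with no boundedness assumption, so it holds verbatim in the unbounded case and already gives Eq.~\ref{eq:cor-robustness-2} whenever the conditioning event has probability one (and a fortiori bounds the intersection); only the McDiarmid-based denominator bound needs boundedness. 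Your conditioning construction also silently changes the objects being bounded ($A'$ and $\sigma_f^k$ are defined relative to $\cD_{\cX,\cY}$, and both move when you replace it by $\cD_{\cX,\cY}\mid E_R$), which is a real complication you acknowledge but do not resolve; the paper's route avoids it entirely, at the cost of being a formal limit rather than a fully justified one.
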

\begin{proof}
    These results follow from directly taking the limit as $B\to\infty$ and applying some of the bounds acquired in the proof of Thm.~\ref{thm:robustness}. This yields Eq.~\ref{eq:cor-robustness-1}. Next, jointly taking the limit as $\PP[A] \to 0$ and $B\to\infty$ results in Eq.~\ref{eq:cor-robustness-2}. 
\end{proof}

{In both Thm.~\ref{thm:robustness} and Cor.~\ref{cor:robustness-lip-kd}, we yield probabilistic guarantees like~\cite{cohen2019certified}, rather than deterministic ones. Though deterministic bounds are desirable, the stochasticity of our framework is necessary for its generalization across different domains. For most continuous, metrizable applications (like computer vision), models learn a hidden representation space where most minute changes in this space correspond to tangible inputs. The same cannot be said for many discrete or non-metrizable applications. In natural language processing, the correspondence between the learned representation space and the input is sparse, resulting in lots of ``dead space'': portions of the hidden representation space that do not correspond to any input~\cite{semantic-word-embeddings,elhage2022toy}. And so, by incorporating the data distribution into our bounds, we implicitly adjust for this: assigning zero-measure to the aforementioned ``dead space.''}

% \begin{proof}
%     % We simply take the limit of the right-hand side of Theorem~\ref{thm:robustness} as $B\to\infty$. Then, for the case where $\PP[A] =1$, clearly $d_k(f^k(x), f^k(A)) = 0$ for almost every $x \in \cX$. . Lastly, we take the limit as $B\to\infty, \PP[A] \to 0$ in Eq.~\ref{eq:robustness} which results in the same expression. This concludes the proof.
% \end{proof}
\vspace{-2mm}
\subsection{Expressiveness}\label{sec:expressiveness}
\vspace{-2mm}
Our second main result demonstrates that $\epsilon$-knowledge continuity can be achieved without theoretically compromising the accuracy of the model. In other words, universal function approximation is an invariant property with respect to $\epsilon$-knowledge continuity. Universal approximation results have seen a great deal of theoretical work, as they put limits on what neural networks can {represent}~\cite{cybenko1989approximation, HORNIK1989359, lu2017expressive}. As discussed in Section~\ref{sec:related}, Lipschitz continuous functions do not achieve universal function approximation {with respect to the set of all functions, in particular, non-continuous ones}. However, {we show that under strong conditions this} is achievable with knowledge continuity.

First, {let us} formally define a \textit{universal function approximator}. 

\begin{definition}[Universal Function Approximator]
Suppose that $\cL$ is Lebesgue-integrable in both coordinates. Let $\cF \subset \cY^\cX$ be a set of measurable functions from $\cX \to \cY$ such that for any $f \in \cF$, there exists $\mu_f \ll \cD_{\cX,\cY}$ such that $\mu_f(\graph(f)) = 1$. Then, $\cU \subset \cF$ is a universal function approximator of $\cF$ if for every $f \in \cF$ and every $\epsilon > 0$, there exists $\hat f \in \cU$ such that 
\begin{equation}
    \int \cL(\hat f (x), y)\,d\mu_f < \epsilon. 
\end{equation}
\end{definition}
\vspace{-2mm}
We now show {any universal function approximator can be made robust through the trivial metric decomposition.} 
\begin{proposition}\label{prop:expressiveness}
Let $\cU \subset \cY^\cX$ be a universal function approximator of $\cY^\cX$ with respect to some loss function $\cL$. Then, for any $f \in \cY^\cX$ and sequence $\epsilon_1,\epsilon_2,\ldots$ such that $\epsilon_n \to 0$ there are a sequence of $\epsilon_n$-knowledge continuous functions in $\cU$ such that $\int \cL(f_n(x), y ) \,d\mu_f < \epsilon_n$, for $n \in \N$. 
\end{proposition}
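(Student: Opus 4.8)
The plan is to exploit Remark~\ref{remark:identity-metric-decomp}: since the proposition allows us to pick whatever metric decomposition we like, we use the trivial one $f_n = f_n \circ \id_\cX$, but with a cleverly chosen metric on the "input" copy of $\cX$ that is \emph{not} the ambient one — namely a metric in which the $k$-volatility is automatically finite (indeed arbitrarily small). Concretely, for each $n$ first invoke universal approximation to obtain $\hat f_n \in \cU$ with $\int \cL(\hat f_n(x),y)\,d\mu_f < \epsilon_n'$ for a suitably small $\epsilon_n' $ to be fixed later; the only remaining task is to show $\hat f_n$ is $\epsilon_n$-knowledge continuous with respect to \emph{some} admissible metric decomposition. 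The $k$-volatility $\sigma_{\hat f_n}^0(x,y) = \Exp_{(x',y')}\big[\Delta\cL_{\hat f_n}^{(x,y)}(x',y') / d_1(\hat f_n^{\,0}(x), \hat f_n^{\,0}(x'))\big]$, where $\hat f_n^{\,0} = h_0$; with the trivial decomposition $h_0 = \id_\cX$ so $\cZ_1 = \cX$ and $d_1$ is whatever metric we endow $\cX$ with. The numerator $\Delta\cL_{\hat f_n}^{(x,y)}(x',y') = |\cL(\hat f_n(x),y) - \cL(\hat f_n(x'),y')| \le \cL(\hat f_n(x),y) + \cL(\hat f_n(x'),y')$, which integrates against $\mu_f$ to at most $2\epsilon_n'$; so if we can bound $1/d_1$ above by a constant $C_n$ (i.e. take $d_1$ to be a metric \emph{bounded below} away from $0$ on distinct points, e.g. a scaled discrete metric $d_1(x,x') = M_n \cdot \mathds{1}[x \ne x']$) then $\sigma_{\hat f_n}^0(x,y) \le \frac{1}{M_n}\big(\cL(\hat f_n(x),y) + \Exp_{(x',y')}[\cL(\hat f_n(x'),y')]\big)$ pointwise, and taking the expectation over $(x,y)\sim\cD$ gives $\Exp[\sigma_{\hat f_n}^0] \le \frac{2}{M_n}\Exp_{\cD}[\cL(\hat f_n(x),y)] \le \frac{4\epsilon_n'}{M_n}$ (handling $\cD$ vs.\ $\mu_f$ via absolute continuity). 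Choosing $M_n = 1$ and $\epsilon_n' = \epsilon_n/4$ — or more simply just $M_n$ large — makes this $< \epsilon_n$, and we set $f_n = \hat f_n$.

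The main steps, in order: (i) fix $n$, set $\epsilon_n' := \epsilon_n / 4$ (or similar), and apply the universal-approximation hypothesis to get $\hat f_n \in \cU$ with $\int \cL(\hat f_n(x),y)\,d\mu_f < \epsilon_n'$; (ii) equip $\cX$ with a metric $d_1$ that is $\ge 1$ on all pairs of distinct points (the scaled discrete metric works and is Borel-measurable since every subset is Borel in the discrete topology — but one must be careful that we are changing the $\sigma$-algebra, so instead keep $\cF_\cX$ fixed and only use $d_1$ to define distances, which is exactly what Def.~\ref{def:metric-decomp} permits), define $h_0 = \id_\cX$, $\cZ_1 = (\cX, d_1)$, $h_1 = \hat f_n$, giving an admissible metric decomposition of $\hat f_n$; (iii) bound the integrand of $\sigma_{\hat f_n}^0$ pointwise using the triangle inequality on $\cL$ and $d_1 \ge 1$; (iv) integrate, pass from $\mu_f$ to $\cD_{\cX,\cY}$ using $\mu_f(\graph(\hat f_n))$-type considerations and the integrability of $\cL$, and conclude $\Exp_\cD[\sigma_{\hat f_n}^0] < \epsilon_n$; (v) set $f_n := \hat f_n$ and observe $\int\cL(f_n(x),y)\,d\mu_f < \epsilon_n' < \epsilon_n$, completing the sequence.

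The subtle point — and where I expect the real obstacle to be — is step (ii)/(iv): the metric $d_1$ we put on $\cZ_1$ must be compatible with the Borel $\sigma$-algebra condition in Def.~\ref{def:metric-decomp}, item 1, yet the discrete metric generates the \emph{discrete} Borel $\sigma$-algebra ($2^\cX$), which is generally finer than $\cF_\cX$, so $h_1 = \hat f_n : (\cX, 2^\cX) \to \cY$ and $h_0 = \id : (\cX, \cF_\cX) \to (\cX, 2^\cX)$ — and $\id$ is \emph{not} $\cF_\cX$/$2^\cX$-measurable unless $\cF_\cX = 2^\cX$. There are two honest fixes: either (a) argue that what actually matters for $\sigma_f^k$ and Def.~\ref{def:kd-whole} is only that the relevant maps and the function $(x,x') \mapsto d_1(f^0 x, f^0 x')$ are measurable with respect to the product of the \emph{original} $\sigma$-algebras — which holds here because $d_1$ composed with $\id \times \id$ is just $M_n\mathds{1}[x\ne x']$, and the diagonal is measurable in $\cF_\cX \otimes \cF_\cX$ when $\cF_\cX$ is countably generated / $\cX$ is standard Borel — or (b) replace the strict discrete metric by a bounded-below metric that still generates $\cF_\cX$ (possible whenever $(\cX, \cF_\cX)$ is the Borel structure of a Polish space: take an equivalent bounded metric $\rho$ and use $\rho' = \min(\rho, 1) + \mathds{1}[x\ne x']$… no — cleaner: use $d_1(x,x') = 1 + \rho(x,x')$ for $x \ne x'$ and $0$ otherwise; this is a metric, generates the discrete topology again, so this doesn't help either). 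The cleanest resolution is (a): carry out the whole argument being explicit that volatility only ever sees $d_1 \circ (f^0 \times f^0)$ as a function on $(\cX\times\cX, \cF_\cX\otimes\cF_\cX)$, note this is the measurable function $M_n\mathds{1}_{\{x\ne x'\}}$ provided the diagonal $\{(x,x): x\in\cX\}$ lies in $\cF_\cX \otimes \cF_\cX$ (true under the standing assumption that the spaces are nice enough — e.g. $\cX$ countably separated, which the paper implicitly uses elsewhere for concentration arguments), and thereby avoid ever needing $\id$ to be measurable into the discrete space. The rest is the routine Markov/triangle-inequality bookkeeping sketched above.
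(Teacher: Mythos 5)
Your proposal is correct and follows essentially the same route as the paper's proof: invoke universal approximation to get $f_n$ with small integrated loss, then take the trivial one-layer decomposition with the discrete metric on $\cX$ so that the volatility is bounded by the integrated loss difference via the triangle inequality. Your extra care about the measurability of the discrete-metric decomposition (the diagonal lying in $\cF_\cX\otimes\cF_\cX$, and reading the volatility as depending only on $d_1\circ(h_0\times h_0)$ as a function on the original product $\sigma$-algebra) is a legitimate subtlety the paper's proof silently glosses over, and your resolution (a) is the right one.
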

\vspace{-3mm}
\begin{proof}
Choose $f_n \in \cU$ such that $\int \cL(f_n(x),y)~d\mu_f < \frac{1}{2}\epsilon_n$. Consider the 1-layer metric decomposition of $f$, $h_1: \cX \to \cZ_1$ where $\cZ_1 = \cX$ equipped with the trivial metric ($d_1(x,y) = 1$ if $x \neq y$ and 0 otherwise). Then, $f_n = f_n \circ h_1$. So, it follows that
\begin{align*}
    \Exp{\sigma_{f_n}^1(x,y)} &= \int \frac{\Delta \cL_{f_n}^{(x,y)}(x',y')}{d_1(h_1(x), h_1(x'))}~d\mu_f, \\
    &\leq \int \Delta \cL_{f_n}^{(x,y)}(x',y')~d\mu_f, \\
    &\leq \epsilon_n.
\end{align*}
and by the construction of $f_n$, the proof is completed. 
\end{proof}
\vspace{-3mm}
% we can tighten the intermtate metric spaces, if we relax the set of functions that we are approximation
In other words, if our estimator was given ``infinite representational capacity,'' robustness can be trivially achieved by isolating every point as its own concept (as discussed in Section~\ref{sec:def-kc}). {More generally, if we instead considered a generalized discrete metric (fix $c \in [0,\infty]$, $d(x,y) = c$ if and only if $x =y$ and $d(x,y) = 0$, otherwise), then as $c\to\infty$, $k$-volatility converges pointwise to 0 almost everywhere assuming that the loss is finite almost everywhere. In practice, we find these degenerate decompositions to be unreasonable as they also trivialize robustness. For example, if $c = \infty$, then robustness is not well-defined as any perturbation would lead to a point that is perceived to be infinitely far away. \textbf{\textit{In this sense, our framework accounts for different notions of robustness, strong and weak.}} } The next result builds on Prop.~\ref{prop:expressiveness} and demonstrates how a stronger notion of robustness will affect expressiveness. These added constraints make it so that trivial metric decompositions are no longer possible unless the metric in $\cX$ is also trivial. We state this formally below, note the highlighted differences between this and Prop.~\ref{prop:expressiveness}. 
\begin{proposition}\label{prop:expressiveness-cont}
    Suppose $(\cX,d_\cX), (\cY, d_\cY) \coloneqq (\cX, d_\cX)$ are \textbf{\textit{compact}} metric spaces, $\cF \subset \cY^\cX$ is the \textbf{set of all continuous functions} from $\cX$ to $\cY$ such that $\int d_\cX(x,x')^{-1}d\mu_f < \infty$ and $\cL$ be Lipschitz continuous in both coordinates. Then, there exists a universal function approximator $\cU$ of $\cF$ that is knowledge continuous (i.e. $\Exp \sigma_f^k(x,y) < \infty$ for some $k$).
\end{proposition}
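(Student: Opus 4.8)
\textbf{Proof proposal for Proposition~\ref{prop:expressiveness-cont}.}

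The plan is to take the universal function approximator to be $\cF$ itself (the set of all continuous functions $\cX \to \cY$), equip each $f \in \cF$ with its trivial metric decomposition $f = f \circ \id_\cX$ from Remark~\ref{remark:identity-metric-decomp} so that the relevant hidden layer is $(\cX, d_\cX)$ itself, and then show that the single quantity $\Exp \sigma_f^0(x,y)$ is finite for every $f \in \cF$. Since $\cF$ trivially approximates itself (take $\hat f = f$, giving $\int \cL(\hat f(x),y)\,d\mu_f = \int \cL(f(x),f(x))\,d\mu_f = 0$ because $\mu_f$ is supported on $\graph(f)$ and $\cL(y,y)=0$), the only real content is the knowledge-continuity bound. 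So the work reduces to bounding
\[
\Exp \sigma_f^0(x,y) = \int\!\!\int \frac{|\cL(f(x),y) - \cL(f(x'),y')|}{d_\cX(x,x')}\, d\mu_f(x',y')\, d\mu_f(x,y),
\]
and because $\mu_f$ lives on the graph we may replace $y$ by $f(x)$ and $y'$ by $f(x')$ throughout, and use $\cL(f(x),f(x)) = \cL(f(x'),f(x')) = 0$.

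First I would use the Lipschitz continuity of $\cL$ in both coordinates: there is a constant $K_\cL$ with $|\cL(a,b) - \cL(a',b')| \le K_\cL\big(d_\cY(a,a') + d_\cY(b,b')\big)$. Applying this with $(a,b) = (f(x), f(x))$ and $(a',b') = (f(x'), f(x'))$ — exploiting that the ``diagonal'' values vanish is not even needed, but it makes the telescoping clean — gives $|\cL(f(x),f(x)) - \cL(f(x'),f(x'))| \le 2 K_\cL\, d_\cY(f(x), f(x')) = 2K_\cL\, d_\cX(f(x),f(x'))$ since $(\cY,d_\cY) = (\cX,d_\cX)$. Next I would use that a continuous function on a compact metric space is uniformly continuous, but that alone only gives a modulus of continuity, not a Lipschitz bound, so $d_\cX(f(x),f(x'))/d_\cX(x,x')$ need not be bounded. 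This is where the hypothesis $\int d_\cX(x,x')^{-1}\, d\mu_f < \infty$ enters: compactness gives a uniform diameter bound $d_\cX(\cX) \le D < \infty$, hence $d_\cX(f(x), f(x')) \le D$, and therefore
\[
\frac{|\cL(f(x),f(x)) - \cL(f(x'),f(x'))|}{d_\cX(x,x')} \le \frac{2 K_\cL D}{d_\cX(x,x')}.
\]
Integrating against $\mu_f \times \mu_f$ and invoking the hypothesis $\int d_\cX(x,x')^{-1} d\mu_f < \infty$ (interpreting this double integral — I would read the stated condition as a finiteness assumption on $\int\!\!\int d_\cX(x,x')^{-1}\,d\mu_f\,d\mu_f$, or derive the product version from it via Fubini/Tonelli since the integrand is nonnegative) yields $\Exp\sigma_f^0(x,y) \le 2K_\cL D \int\!\!\int d_\cX(x,x')^{-1}\,d\mu_f\,d\mu_f < \infty$, which is exactly knowledge continuity at layer $k=0$.

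The main obstacle is interpreting and correctly using the hypothesis $\int d_\cX(x,x')^{-1}\,d\mu_f < \infty$: as written it is a double integral over the pair $(x,x')$ (otherwise $x'$ is a free variable), and one must be careful that the measure $\mu_f$ is a measure on $\cX \times \cY$ while $d_\cX$ only sees the $\cX$-coordinate, so one is really integrating against the pushforward of $\mu_f$ to $\cX$ (which, on the graph, is just $\mu_f$ reparametrized by $x$). A secondary subtlety is that $\sigma_f^0$ is defined with the conditional expectation restricted to $f(x) \ne f(x')$, i.e. $x \ne x'$ on the graph after identifying $d_\cX(f(x),f(x'))$ with $0$ exactly when $f(x)=f(x')$; one should check the set $\{x = x'\}$ (or $\{f(x)=f(x')\}$) is handled consistently — it contributes a $0/0$ which is excluded by the conditioning, so restricting to its complement is harmless and the bound above holds on that complement verbatim. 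Everything else (uniform continuity, compact diameter bound, Tonelli for the nonnegative integrand) is routine once the hypothesis is pinned down, so I would state the reading of the hypothesis explicitly at the start of the proof and then carry out the three-line estimate.
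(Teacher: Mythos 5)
Your choice $\cU = \cF$ with $\hat f = f$ collapses the proposition rather than proving its intended content. Once you substitute $y = f(x)$ and $y' = f(x')$ on the support of $\mu_f$, the numerator $|\cL(f(x),f(x)) - \cL(f(x'),f(x'))|$ is identically $0$ (since $\cL(y,y)=0$ by assumption), so $\Exp\sigma_f^k = 0$ with no appeal whatsoever to compactness, to the Lipschitzness of $\cL$, or to $\int d_\cX(x,x')^{-1}\,d\mu_f < \infty$; the fact that every hypothesis becomes superfluous is the tell that this is not the statement being proved. The substantive claim---the one the main text relies on when it asserts that knowledge continuous functions strictly contain Lipschitz continuous ones---is that approximators $\hat f \neq f$, whose loss $\cL(\hat f(x), f(x))$ against the target's labels is genuinely nonzero, are nonetheless knowledge continuous with respect to $\mu_f$. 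The paper's proof takes $\cU$ to be the set of Lipschitz continuous functions, obtains the approximation property from Stone--Weierstrass (density in the continuous functions under the uniform metric, converted to small integrated loss via $|\cL(a,b)|\le K d_\cX(a,b)$), and then bounds $\Exp\sigma_{f_n}^1$ by a triangle-inequality split of $|\cL(f_n(x),y)-\cL(f_n(x'),y')|$: the piece $|\cL(f_n(x),y)-\cL(f_n(x'),y)|\le K C_n d_\cX(x,x')$ cancels the denominator using the Lipschitz constant $C_n$ of the approximator, while the piece $|\cL(f_n(x'),y)-\cL(f_n(x'),y')|\le K d_\cX(y,y') \le Kb$ is where compactness and the integrability hypothesis are actually consumed.

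That said, your diameter estimate is salvageable and in fact yields something slightly stronger than the paper's lemma: for an \emph{arbitrary} approximator $\hat f$, Lipschitzness of $\cL$ in both coordinates together with the diameter bound $D$ gives $|\cL(\hat f(x), f(x)) - \cL(\hat f(x'),f(x'))| \le K\bigl(d_\cX(\hat f(x),\hat f(x')) + d_\cX(f(x),f(x'))\bigr) \le 2KD$, whence $\Exp\sigma_{\hat f}^1 \le 2KD \iint d_\cX(x,x')^{-1}\, d(\mu_f\times\mu_f) < \infty$ without requiring $\hat f$ to be Lipschitz at all. Had you run this estimate for a nontrivial $\hat f$---concluding, say, that under these hypotheses every element of any universal function approximator of $\cF$ is automatically knowledge continuous---you would have a correct and non-degenerate proof. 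As written, the argument only certifies the target function against its own graph measure, which is empty of content.
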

\vspace{-4mm}
\begin{proofoutline}
We show an outline of the proof here and defer the full proof to Appendix~\ref{sec:proof-expressive}. By the Stone-Weierstrass Theorem, the set of Lipschitz continuous functions is dense in the set of all continuous functions from $\cX$ to $\cY$. Since $\cL$ is Lipschitz continuous in both coordinates, through some algebra, $\Exp \sigma_f^1(x,y) < \infty$, where $h_1 = \id_\cX$ and we yield the statement of the theorem.
\end{proofoutline}
\vspace{-2mm}
{The additional constraint $\int d_\cX(x,x')^{-1}d\mu_f$ requires data points to be sparsely layed out in the representation space. As discussed previously, this assumption is generally reasonable for discrete applications. In conjunction with Prop.~\ref{prop:expressiveness}, we have shown that the class of knowledge continuous functions is \textbf{\textit{strictly larger}} than the class of Lipschitz continuous ones.} Though we show that universal approximation by knowledge continuous networks is achievable, it is unclear whether these results still hold if the ``tightness'' of the metric decompositions is bounded. Specifically, the construction in Prop.~\ref{prop:expressiveness} results in a metric decomposition with infinite Hausdorff dimension. Is it possible to achieve Prop.~\ref{prop:expressiveness} in its most general form if we only consider the set of all knowledge continuous functions with metric decompositions with finite Hausdorff dimension? Based on the theoretical and empirical results of~\cite{shafahi2018adversarial, ilyas2019adversarial}, respectively, we conjecture in the negative {and leave its resolution open.}
\begin{conjecture}
If $\cU\subset \cY^\cX$ is a universal function approximator with respect to some {Lebesgue-integrable} loss function $\cL$. Then, for any $f \in \cY^\cX$, there \textbf{does not exist} a sequence of functions with metric decompositions of \textbf{finite Hausdorff dimension} that achieve arbitrarily small approximation error {(i.e. $\int \cL(\hat f(x) , y)d\mu_f$)} and knowledge continuity.
\end{conjecture}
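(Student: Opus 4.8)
The plan is to approach the conjecture by showing that adversarial vulnerability is an unavoidable price of accurate approximation when one restricts to metric decompositions of finite Hausdorff dimension. The strategy would be to argue contrapositively: suppose $f_n \in \cU$ is a sequence with metric decompositions of uniformly bounded Hausdorff dimension and $\int \cL(f_n(x),y)\,d\mu_f \to 0$; I would derive a contradiction by exhibiting a lower bound on $\Exp\sigma_{f_n}^k(x,y)$ that blows up. The key intuition is that if $\cU$ is a genuine universal approximator for \emph{all} measurable $f$, then some target $f$ must be highly non-continuous (e.g. an indicator of a fractal-like or otherwise ``rough'' set), and approximating it with low loss forces $f_n$ to have large loss-variation between representation-space neighbors, since a finite-dimensional intermediate metric space cannot ``spread out'' the preimages of sharply differing outputs far enough to keep the ratio $\Delta\cL_{f_n}^{(x,y)}(x',y')/d_k(f_n^k(x),f_n^k(x'))$ bounded in expectation.

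The concrete steps I envision: (1) Fix a pathological target $f$ whose loss landscape under any low-error approximation must contain, with positive $\mu_f$-measure, pairs $(x,y),(x',y')$ where $\Delta\cL_f^{(x,y)}(x',y')$ is bounded below by a constant. (2) For any candidate approximator $f_n = h_n^{(n)} \circ \cdots \circ h_0^{(n)}$ with $k$-th intermediate space $(\cZ_k^{(n)}, d_k^{(n)})$ of Hausdorff dimension $\le D$, use a covering/packing argument: a bounded subset of a dimension-$D$ metric space that carries positive pushforward measure cannot separate a positive-measure family of such ``bad pairs'' by more than some scale controlled by $D$ and the measure. Formally, I would partition the relevant region of $\cZ_k^{(n)}$ into $O(r^{-D})$ balls of radius $r$; by pigeonhole, a positive-measure set of bad pairs lands in a common ball, so $d_k^{(n)}(f_n^k(x),f_n^k(x')) \le 2r$ on a set of pairs of measure $\gtrsim r^{D}$ (times the measure of bad pairs), while $\Delta\cL$ stays $\gtrsim 1$ there. (3) Plug into the expectation defining $\sigma_{f_n}^k$ and optimize over $r$; the contribution from that ball alone is $\gtrsim r^{D}/r = r^{D-1}$, and letting $r \to 0$ shows the volatility is unbounded whenever $D \ge 1$, contradicting knowledge continuity. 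One would iterate this over a sequence of shrinking scales to get genuine divergence rather than just a large constant, invoking the self-similar roughness of the chosen $f$.

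The main obstacle I anticipate is step (2): making the packing argument rigorous requires relating the \emph{Hausdorff dimension} of $\cZ_k^{(n)}$ — an intrinsic, possibly non-doubling notion — to honest covering numbers of the \emph{support of the pushforward measure} $(f_n^k)_*\mu_f$, and then quantifying how finite covering numbers obstruct separation of a prescribed-measure family of pairs. Hausdorff dimension alone does not bound covering numbers at every scale (only along a subsequence of scales via the definition of Hausdorff measure), so I would likely need to either strengthen the hypothesis to Assouad or upper box-counting dimension, or accept divergence only along a subsequence of scales — which still suffices to contradict the \emph{finite} expectation. A secondary difficulty is choosing the target $f$ and verifying that \emph{every} low-loss approximant inherits a positive-measure set of bad pairs with a uniform lower bound on $\Delta\cL$ independent of $n$; here I would exploit Lipschitz-ness of $\cL$ together with a target built from an indicator of a set whose topological boundary has positive measure, so that any continuous-ish approximant must straddle the boundary on a non-negligible set. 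Because of these gaps, I expect the honest outcome is a \emph{conditional} proof (under a box-counting or Assouad dimension bound) together with the heuristic from \cite{ilyas2019adversarial, shafahi2018adversarial} explaining why the general case should still hold, which is consistent with the paper leaving it as a conjecture.
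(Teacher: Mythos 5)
The statement you are addressing is not proved in the paper at all: it is stated as an open conjecture (the authors only cite the heuristics of Shafahi et al.\ and Ilyas et al.\ and explicitly ``leave its resolution open''), so there is no paper proof to compare against, and your candid assessment that at best a conditional, partial result would come out of your plan is consistent with that. Judged on its own terms, though, your sketch has a genuine quantitative flaw beyond the gaps you flag. In step (2)--(3), the pigeonhole bound gives a set of pairs of product measure on the order of $r^{2D}$ (or $r^{D}$ with your accounting) whose representation distance is at most $2r$ and whose loss gap is bounded below, so the contribution to $\mathbb{E}[\sigma^k_{f_n}]$ is of order $r^{2D-1}$ (resp.\ $r^{D-1}$). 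As $r \to 0$ this tends to $0$ for every $D \geq 1$, not to infinity; a single-ball estimate at one scale can never exceed a constant, so ``letting $r\to 0$'' does not show the volatility is unbounded. To get divergence you would have to accumulate contributions over infinitely many disjoint scales and show the pair mass at distance $\sim 2^{-j}$ decays strictly slower than $2^{-j}$, and upper regularity of a $D$-dimensional measure with $D>1$ works \emph{against} this: finiteness of the Riesz-type energy $\int d(x,x')^{-1}\,d\mu$ is exactly the condition under which the paper's Proposition 4.6 \emph{constructs} a knowledge-continuous universal approximator with the (finite-dimensional) identity decomposition. So finite dimension alone cannot be the obstruction; any proof must exploit the requirement of approximating \emph{arbitrary measurable} targets, which your step (1) gestures at but does not pin down — nothing prevents a low-loss approximant, which may itself be discontinuous, from arranging its representation map so that preimages of sharply differing loss values are separated as far as the finite-dimensional geometry permits.

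A secondary issue you already identified correctly: Hausdorff dimension does not bound covering numbers at every scale (only along a subsequence, and even then only for the space, not for the pushforward of $\mu_f$), so the packing step would need upper box-counting or Assouad dimension, which is a strictly stronger hypothesis than the one in the conjecture. In short, the proposal does not establish the conjecture, and its central scaling estimate points in the wrong direction; the conjecture remains open in the paper as well.
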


% maybe add the fact that if (X,d) has infinite Hausdorff dimension, then it's not possible to compress it down even farther?
\vspace{-3mm}
\subsection{Connections to Lipschitz Continuity}\label{sec:relation-to-lipschitz}
\vspace{-2mm}
We now demonstrate that our axiomization of robustness presented in Section~\ref{sec:intro} aligns with the notion of robustness\footnote{Small perturbations on the input result in small changes in performance which implies small changes in output when the loss function is Lipschitz continuous.} commonly prescribed in vision~\cite{drenkow2022systematic}. This unifies the certified robustness bounds with respect to the representation space derived in Thm.~\ref{thm:robustness} with existing work certifying robustness with respect to the input space in continuous applications such as vision.

Our first result identifies conditions under which knowledge continuity, implies Lipschitz continuity.
\begin{proposition}\label{prop:kd-to-lip}
    Suppose that $(\cX, d_\cX), (\cY, d_\cY)$ are metric spaces. Let the first $n$ metric decompositions of $f: \cX\to\cY$ be $K_i$-Lipschitz continuous, for $i \in [n]$. If $f$ is $\epsilon$-knowledge continuous with respect to the $n$\textsuperscript{th} hidden layer and $d_\cY(f(x),f(x')) \leq \eta \Delta\cL_f^{(x,y)}(x',y)$ for all $x,x' \in \cX$, $y \in \cY$, and some $\eta > 0$, then $f$ is Lipschitz continuous in expectation. That is, 
    \begin{equation}
        \Exp_{(x,y),(x',y')\sim \cD_{\cX,\cY}} \frac{d_\cY(f(x),f(x'))}{d_\cX(x,x')} \leq \epsilon\eta \prod_{j=1}^n K_j.
    \end{equation}
\end{proposition}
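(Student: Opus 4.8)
The plan is to unwind the definition of $\epsilon$-knowledge continuity at the $n$th hidden layer and replace the denominator $d_n(f^n(x), f^n(x'))$ with $d_\cX(x,x')$ using the Lipschitz hypotheses on the first $n$ hidden layers, then replace the numerator $\Delta\cL_f^{(x,y)}(x',y')$ with $d_\cY(f(x),f(x'))$ using the hypothesis $d_\cY(f(x),f(x')) \le \eta\,\Delta\cL_f^{(x,y)}(x',y)$.

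First I would note that since $h_0, h_1, \ldots, h_{n-1}$ are $K_0, \ldots, K_{n-1}$-Lipschitz (indexing as in the hypothesis), their composition $f^n = h_{n-1}\circ\cdots\circ h_0$ is $\big(\prod_{j} K_j\big)$-Lipschitz, so $d_n(f^n(x), f^n(x')) \le \big(\prod_j K_j\big) d_\cX(x,x')$. Equivalently, for $x \ne x'$,
\begin{equation*}
    \frac{1}{d_n(f^n(x), f^n(x'))} \ge \frac{1}{\prod_j K_j}\cdot\frac{1}{d_\cX(x,x')}.
\end{equation*}
This is the step I would be most careful about: the inequality goes the ``wrong'' way to be directly useful inside the expectation defining $\sigma_f^n$, since there the reciprocal distance appears in the numerator. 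So instead I would work from the other direction — start from the target quantity $d_\cY(f(x),f(x'))/d_\cX(x,x')$ and bound it \emph{above}. Using the loss hypothesis, $d_\cY(f(x),f(x')) \le \eta\,\Delta\cL_f^{(x,y)}(x',y)$, and using $d_\cX(x,x') \ge \big(\prod_j K_j\big)^{-1} d_n(f^n(x),f^n(x'))$, we get
\begin{equation*}
    \frac{d_\cY(f(x),f(x'))}{d_\cX(x,x')} \le \eta \prod_{j} K_j \cdot \frac{\Delta\cL_f^{(x,y)}(x',y)}{d_n(f^n(x),f^n(x'))},
\end{equation*}
valid whenever $f(x)\ne f(x')$ (so the right-hand denominator is nonzero; when $f(x)=f(x')$ the left side is $0$ and the bound is trivial). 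Taking expectations over $(x,y),(x',y')\sim\cD_{\cX,\cY}$, the right-hand side becomes $\eta\prod_j K_j \cdot \Exp_{(x,y)}[\sigma_f^n(x,y)] \le \eta\prod_j K_j \cdot \epsilon$ by Def.~\ref{def:kd-whole}, which is the claimed bound.

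The main obstacle — really a bookkeeping subtlety rather than a deep difficulty — is matching the conditioning event $f(x)\ne f(x')$ that appears inside $\sigma_f^n$ with the unconditioned expectation in the statement, and handling the label mismatch between $y'$ in the definition of $\sigma_f^n$ and the $y$ appearing in the loss hypothesis; I would address this by noting the loss hypothesis is assumed for all $y\in\cY$ (so in particular it applies with whichever label is relevant) and that on the event $f(x)=f(x')$ the contribution to the target expectation vanishes, so restricting to $f(x)\ne f(x')$ only drops a zero term. I should also check that $d_\cX(x,x')=0$ forces $x=x'$ hence $f(x)=f(x')$, so the target ratio is only ever evaluated where it is well-defined. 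Assembling these observations gives the result; I would defer none of it to an appendix as the argument is short.
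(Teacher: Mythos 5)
Your overall route is the same as the paper's --- the paper lower-bounds $\Exp\sigma_f^n(x,y)$ by substituting $d_n(f^n(x),f^n(x'))\leq\prod_j K_j\, d_\cX(x,x')$ and $\Delta\cL_f^{(x,y)}(x',y)\geq\eta^{-1}d_\cY(f(x),f(x'))$, which is exactly your chain of inequalities read in reverse (and, contrary to your worry, the reciprocal inequality does point the ``right'' way for that purpose). The Lipschitz substitution, the treatment of $f(x)=f(x')$, and the $d_\cX(x,x')=0$ case are all fine.

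The genuine gap is the label mismatch, which you flag but do not actually resolve. Your pointwise bound ends with $\Delta\cL_f^{(x,y)}(x',y)=|\cL(f(x),y)-\cL(f(x'),y)|$, whereas $\sigma_f^n(x,y)$ averages $\Delta\cL_f^{(x,y)}(x',y')=|\cL(f(x),y)-\cL(f(x'),y')|$. Your proposed fix --- ``the hypothesis holds for all $y\in\cY$, so apply it with whichever label is relevant'' --- does not bridge this: the hypothesis $d_\cY(f(x),f(x'))\leq\eta\,\Delta\cL_f^{(x,y)}(x',y)$ evaluates the loss at the \emph{same} label in both terms for every admissible choice of that label, so no instantiation yields a bound by $\Delta\cL_f^{(x,y)}(x',y')$ with $y\neq y'$, and there is no general inequality between the two loss differences. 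Consequently your expectation step produces $\eta\prod_j K_j\cdot\Exp\bigl[\Delta\cL_f^{(x,y)}(x',y)/d_n(f^n(x),f^n(x'))\bigr]$, a quantity that the $\epsilon$-knowledge-continuity hypothesis does not control. The paper's device is to restrict the double expectation defining $\Exp\sigma_f^n$ to the event $\{y'=y\}$, where the two loss differences coincide; nonnegativity of the integrand makes this restriction a legitimate lower bound, and the paper then re-expands to the full product measure once the integrand depends only on $(x,x')$ (that re-expansion is itself the delicate step of the paper's argument). You need this diagonal restriction, or some equivalent device, for the final step to go through.
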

\vspace{-2mm}
The proof is presented in Appendix~\ref{appendix:equiv-lipschitz-knowledge} and follows easily through some algebriac manipulation. {It is easy to see} that if $f$ is knowledge continuous {with respect} to some identity (or contractive) metric decomposition, then we can loose the repeated product. Analogous to Remark~\ref{remark:identity-metric-decomp}, the concepts of Lipschitz continuity and knowledge continuity become similar when we can assign metrics to the input-output spaces. Next, combining this proposition with an auxiliary result from~\cite{zhang2022rethinking}, we directly yield a certification on the input space.
\begin{corollary}\label{cor:robustness-lip-kd}
    Suppose that assumptions of Prop.~\ref{prop:kd-to-lip} are true. And also assume that $(\cX,d_\cX) = (\R^n,\ell_p)$, $(\cY, d_\cY) = (\R^m, \ell_p)$, for $1 \leq p \leq \infty$. Define a classifier from $f: \R^n \to \R^m$, $g$, where $g(x) \coloneqq \argmax_{k\in[m]} f_k(x)$ for any $x \in \R^n$. Then, with probability $1 - \frac{\epsilon\eta}{t}\prod_{j=1}^n K_j$, $g(x) = g(x + \delta)$ for all $\lVert \delta \rVert_p < (2^{1/p}/2t)\margin(f(x))$ and $t > 0$. 
    $f_k(x)$ is the $k$\textsuperscript{th} coordinate of $f(x)$ and $\margin(f(x))$ denotes the difference between the largest and second-largest output logits.
\end{corollary}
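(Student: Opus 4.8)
The plan is to chain three ingredients: the averaged Lipschitz bound of Prop.~\ref{prop:kd-to-lip}, Markov's inequality, and the margin-based robustness certificate of~\cite{zhang2022rethinking}. Set $L \coloneqq \epsilon\eta\prod_{j=1}^n K_j$ and regard $Z \coloneqq d_\cY(f(x),f(x'))/d_\cX(x,x')$ as a nonnegative random variable under $(x,y),(x',y')\sim\cD_{\cX,\cY}$. Under the hypotheses of Prop.~\ref{prop:kd-to-lip} (the first $n$ layers are $K_i$-Lipschitz, $f$ is $\epsilon$-knowledge continuous at layer $n$, and $d_\cY(f(x),f(x')) \le \eta\,\Delta\cL_f^{(x,y)}(x',y)$), that proposition yields $\Exp[Z] \le L$.

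First I would apply Markov's inequality: for any $t>0$, $\PP[Z \ge t] \le L/t = \frac{\epsilon\eta}{t}\prod_{j=1}^n K_j$. Hence, on an event of probability at least $1 - \frac{\epsilon\eta}{t}\prod_{j=1}^n K_j$, the draw lands where $f$ acts as a $t$-Lipschitz map, i.e.\ $\lVert f(x) - f(x')\rVert_p < t\lVert x - x'\rVert_p$ once we specialize to $(\cX,d_\cX)=(\R^n,\ell_p)$, $(\cY,d_\cY)=(\R^m,\ell_p)$. Second, on that event I would invoke the auxiliary lemma of~\cite{zhang2022rethinking}: for a $t$-Lipschitz $f:(\R^n,\ell_p)\to(\R^m,\ell_p)$, the classifier $g(x)=\argmax_{k\in[m]}f_k(x)$ satisfies $g(x)=g(x+\delta)$ whenever $\lVert\delta\rVert_p < \frac{2^{1/p}}{2t}\margin(f(x))$. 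The only computation is the elementary chain, for the top two coordinates $i,j$ of $f(x)$, $|(f_i-f_j)(x+\delta)-(f_i-f_j)(x)| \le |f_i(x+\delta)-f_i(x)| + |f_j(x+\delta)-f_j(x)| \le 2^{1-1/p}\lVert f(x+\delta)-f(x)\rVert_p \le 2^{1-1/p}\,t\,\lVert\delta\rVert_p$, combined with the power-mean inequality $a+b\le 2^{1-1/p}(a^p+b^p)^{1/p}$; forcing the left-hand side below $\margin(f(x))$ produces exactly the stated radius. Assembling the two steps gives the corollary.

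The step that needs care — and the main obstacle — is the transition from the \emph{averaged} bound of Prop.~\ref{prop:kd-to-lip} to a hypothesis that the cited pointwise certificate can consume. Markov's inequality controls a randomly drawn \emph{pair} $(x,x')$, whereas the certificate of~\cite{zhang2022rethinking} wants $f$ to behave $t$-Lipschitzly throughout a ball around the fixed point $x$. Making this rigorous requires conditioning on $x$, controlling $\Exp_{x'}[Z\mid x]$, and restricting $x'$ to a small neighborhood of $x$ (in the spirit of the conditioning that appears in Thm.~\ref{thm:robustness}), so that the local $t$-Lipschitz estimate holds with the claimed probability; one then applies the~\cite{zhang2022rethinking} radius on that neighborhood. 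The remaining pieces — the $2^{1-1/p}$ norm-conversion constant and turning a logit-gap bound into a certified radius — are routine.
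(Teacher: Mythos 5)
Your proposal follows essentially the same route as the paper's own proof: apply Prop.~\ref{prop:kd-to-lip} to get $\Exp[Z]\le \epsilon\eta\prod_{j=1}^n K_j$, use Markov's inequality to bound $\PP[Z\ge t]$, and then invoke the margin-based certificate of~\cite{zhang2022rethinking} for a $t$-Lipschitz map. The transition you flag as needing care --- passing from a probabilistic bound on the ratio $Z$ for a \emph{random pair} to the pointwise $t$-Lipschitz hypothesis that the cited certificate requires on a whole ball around $x$ --- is in fact elided in the paper's proof as well, which simply writes ``assuming that $f$ is $t$-Lipschitz continuous'' before applying the certificate; your explicit acknowledgment of this step, and your sketch of how to localize it via conditioning, is if anything more careful than the published argument.
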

We present the proof in Appendix~\ref{appendix:equiv-lipschitz-knowledge}.
Our second result identifies conditions under which Lipschitz continuity, implies knowledge continuity. 
\begin{proposition}\label{prop:lip-to-kd}
    Let $(\cX, d_\cX), (\cY, d_\cY)$ be a metric spaces. Let $f: \cX \to \cY$ be $\epsilon$-Lipschitz continuous and $\cL(f(x),y)$ be $\eta$-Lipschitz continuous with respect to both coordinates. If the first $n$ metric decompositions of $f$ are $K_i$-Lipschitz continuous, then $f$ is knowledge continuous with respect to the $n$\textsuperscript{th} hidden layer. That is, 
    \begin{equation}
        \Exp_{(x,y)\sim \cD_{\cX,\cY}} \sigma_f^n(x,y) \leq \epsilon\eta \prod_{j=1}^n \frac{1}{K_j}.
    \end{equation}
\end{proposition}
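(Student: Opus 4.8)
The plan is to bound the $n$-volatility $\sigma_f^n(x,y)$ pointwise and then take expectations, exploiting the fact that every map in the chain is Lipschitz. First I would recall the definition
\[
\sigma_f^n(x,y) = \Exp_{\substack{(x',y')\sim\cD_{\cX,\cY}\\ f(x)\neq f(x')}}\left[\frac{\Delta\cL_f^{(x,y)}(x',y')}{d_n(f^n(x),f^n(x'))}\right],
\]
and attack the two pieces of the integrand separately. For the numerator, since $\cL(f(\cdot),\cdot)$ is $\eta$-Lipschitz in both coordinates and $f$ is $\epsilon$-Lipschitz, a triangle-inequality argument gives $\Delta\cL_f^{(x,y)}(x',y') = |\cL(f(x),y)-\cL(f(x'),y')| \le \eta\, d_\cY(f(x),f(x')) + \eta\, d_\cY(y,y')$ — or, depending on exactly how ``Lipschitz in both coordinates'' is meant, something like $\eta\big(d_\cX(x,x') + d_\cY(y,y')\big)$; I would need to pin down the intended reading, but in any case the dominant term is controlled by $\epsilon\,d_\cX(x,x')$ via the Lipschitz bound on $f$.

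For the denominator I would go the other way: $f^n = h_{n-1}\circ\cdots\circ h_0$ is a composition of $K_i$-Lipschitz maps, so it is $\big(\prod_{j=0}^{n-1}K_j\big)$-Lipschitz, hence $d_n(f^n(x),f^n(x')) \le \big(\prod_j K_j\big) d_\cX(x,x')$, i.e. $\frac{1}{d_n(f^n(x),f^n(x'))} \ge \frac{1}{\prod_j K_j}\cdot\frac{1}{d_\cX(x,x')}$. Wait — that is the wrong direction: I want an \emph{upper} bound on $\sigma_f^n$, so I need a \emph{lower} bound on the denominator, which a Lipschitz (upper) bound does not directly supply. The resolution the statement seems to intend is the reverse: combine the numerator bound $\Delta\cL \le \epsilon\eta\, d_\cX(x,x')$ (using $f$ Lipschitz then $\cL$ Lipschitz) with a \emph{lower} bound $d_n(f^n(x),f^n(x')) \ge \big(\prod_j K_j^{-1}\big)^{-1}d_\cX(x,x')$? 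That also is not what Lipschitzness gives. Re-examining the target inequality $\Exp\sigma_f^n \le \epsilon\eta\prod_j K_j^{-1}$, the product of \emph{reciprocals} on the right tells me the $K_j$ must enter through the denominator as $d_n(f^n(x),f^n(x')) \ge \big(\prod_j K_j\big)^{-1}\cdots$ — which would require the $h_j$ to be \emph{co-Lipschitz} (lower-bounded), not Lipschitz. So the honest reading is that ``the first $n$ metric decompositions are $K_i$-Lipschitz'' is being used in the form $d_\cX(x,x') \le \big(\prod_j K_j\big) d_n(f^n(x),f^n(x'))$ would need bi-Lipschitzness; more likely the paper intends $h_0,\dots,h_{n-1}$ to satisfy $d_{k+1}(h_k(z),h_k(z')) \ge K_k^{-1} d_k(z,z')$ under their sign convention, making $f^n$ expansive with constant $\prod K_j^{-1}$-inverse. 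I would therefore state the needed directional hypothesis explicitly, derive $d_n(f^n(x),f^n(x')) \ge \big(\prod_{j} K_j\big)^{-1} d_\cX(x,x')$ from it, divide the numerator bound by this, and collect constants.

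The remaining step is routine: having shown $\sigma_f^n(x,y) \le \epsilon\eta\prod_j K_j^{-1}$ pointwise (the $d_\cX(x,x')$ factors cancel and the residual $d_\cY(y,y')$ term is absorbed into $\epsilon$ via the Lipschitz estimate on $\cL$), integrate over $(x,y)\sim\cD_{\cX,\cY}$ — the bound is uniform in $(x,y)$, so the expectation inherits it, giving $\Exp_{(x,y)\sim\cD_{\cX,\cY}}\sigma_f^n(x,y) \le \epsilon\eta\prod_{j=1}^n K_j^{-1}$ as claimed. I expect the main obstacle to be precisely the directional bookkeeping just flagged: a genuine \emph{upper} bound on volatility needs a \emph{lower} bound on $d_n(f^n(x),f^n(x'))$, so one cannot use ordinary Lipschitz continuity of the $h_k$ naively; the proof must invoke the hypothesis in its ``co-Lipschitz''/expansion form (or equivalently bound $d_\cX$ from above by a multiple of $d_n\circ f^n$), and making that step rigorous — and consistent with the $\prod K_j^{-1}$ appearing in the conclusion — is the crux. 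Everything else (the two triangle inequalities for $\Delta\cL$, the restriction to the event $f(x)\neq f(x')$ which keeps the denominator positive, and the final monotone passage to the expectation) is mechanical.
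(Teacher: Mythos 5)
Your route --- bound the integrand of $\sigma_f^n(x,y)$ pointwise by treating numerator and denominator separately, then pass to the expectation --- is exactly the paper's, and the directional obstruction you flag is genuine. It is worth saying plainly that the paper's own proof does not evade it: the step from $d_\cY(f(x),f(x'))/d_\cX(x,x') \leq \epsilon$ to $d_\cY(f(x),f(x'))/\bigl(\prod_{j} K_j^{-1}\, d_k(f^k(x),f^k(x'))\bigr) \leq \epsilon$ replaces the denominator $d_\cX(x,x')$ by $\prod_j K_j^{-1} d_k(f^k(x),f^k(x'))$, which preserves the inequality only if $d_k(f^k(x),f^k(x')) \geq \prod_j K_j\, d_\cX(x,x')$ --- the co-Lipschitz (lower-bound) direction. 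The property the paper actually cites, $d_{i+1}(f^{i+1}(x),f^{i+1}(x')) \leq K_i d_i(f^i(x),f^i(x'))$, gives the reverse inequality. So your diagnosis is correct: as literally stated, $K_i$-Lipschitz continuity of the $h_i$ does not yield the displayed bound, and the repair is the one you propose (invoke the hypothesis in its expansive form, i.e.\ bound $d_\cX(x,x')$ from above by $\prod_j K_j^{-1} d_n(f^n(x),f^n(x'))$, or assume bi-Lipschitzness).

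One further point you should not wave away: the residual label term. Lipschitz continuity of $\cL$ in both coordinates gives $\Delta\cL_f^{(x,y)}(x',y') \leq \eta\, d_\cY(f(x),f(x')) + \eta\, d_\cY(y,y')$, and the second summand cannot simply be ``absorbed into $\epsilon$'': after dividing by $d_n(f^n(x),f^n(x'))$ it contributes a term $\eta\, d_\cY(y,y')/d_n(f^n(x),f^n(x'))$ whose expectation need not be finite without an extra integrability or boundedness assumption (compare the hypothesis $\int d_\cX(x,x')^{-1}\,d\mu_f < \infty$ in Prop.~\ref{prop:expressiveness-cont}, or the restriction to pairs with $y=y'$ used in the proof of Prop.~\ref{prop:kd-to-lip}). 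The paper's proof silently drops this term as well. Both issues are defects of the written argument rather than of your plan; modulo them, your proof is the paper's.
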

\vspace{-2mm}
We detail the proof of this proposition in Appendix~\ref{appendix:equiv-lipschitz-knowledge}. We note that in continuous applications such as computer vision, the assumptions of both propositions are generally met (i.e. our input-output spaces are metric spaces, all hidden layers are Lipschitz, and loss functions are locally Lipschitz). Furthermore, common architectures such as fully connected networks, CNNs, RNNs, and even vision transformers are Lipschitz continuous~\cite{virmaux2018lipschitz, qi2022lipsformer}. \textbf{\textit{This implies that our notion of robustness is indeed an appropriate generalization that transcends domain modality since in continuous settings we can recover the strong bounds of Lipschitz continuity while expanding into new discrete and non-metrizable territory. }}

\vspace{-1mm}
\section{Practical Applications}\label{sec:practical}
\vspace{-2mm}

\begin{figure}
    \centering
    \includegraphics[width=\linewidth]{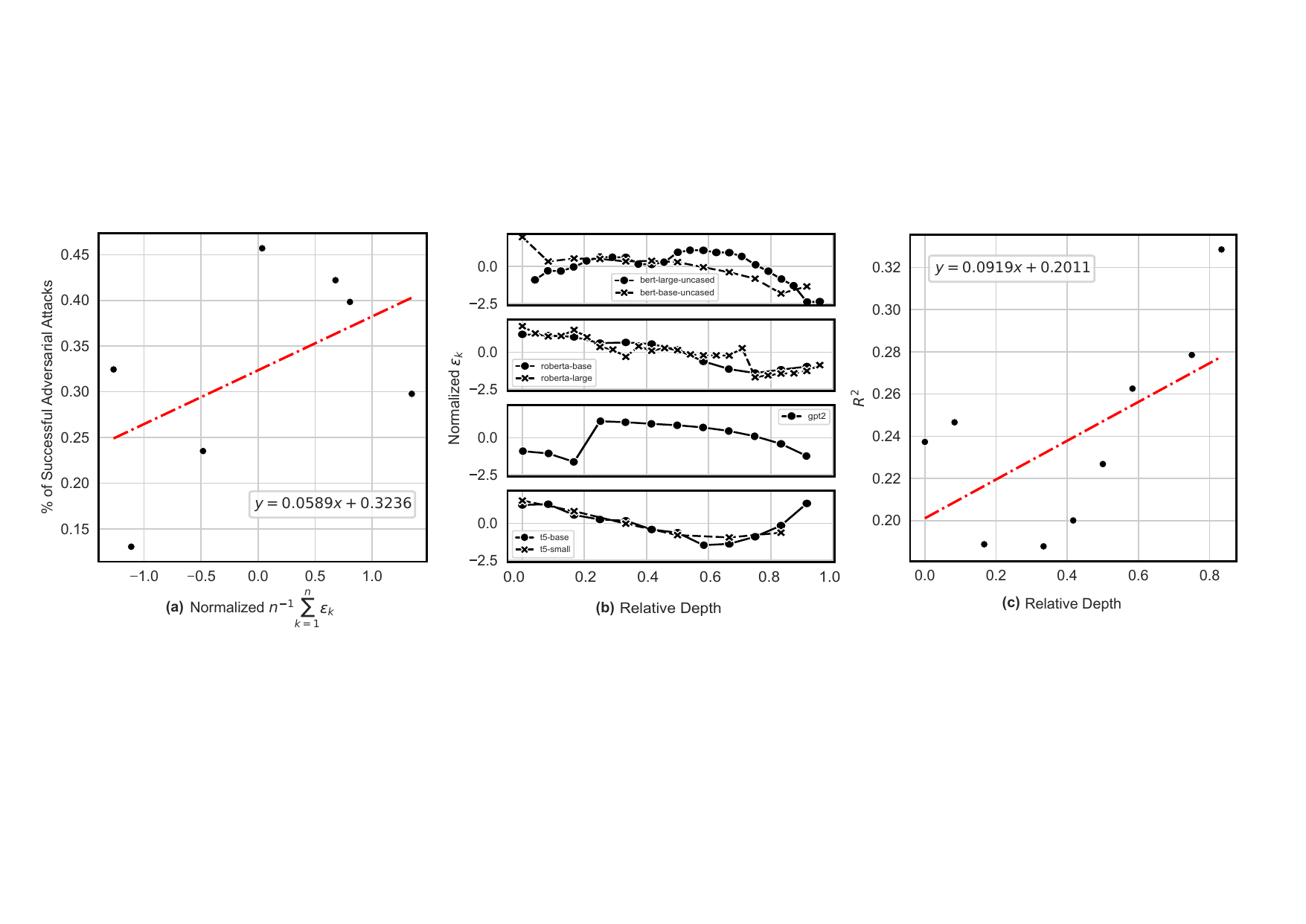}
    \vspace{-0.5cm}
    \caption{{(a) The average percentage of successful adversarial attacks by TextFooler~\cite{jin2020textfooler} on a host of models~\cite{roberts2019t5, radford2018gpt, devlin2018bert, liu2020roberta} and the IMDB~\cite{imdb} dataset regressed with the average of knowledge continuity coefficients across all hidden layers ($R^2=0.35$). (b) $k$-Volatility as $k$ is varied across a model's relative depth. (c) Correlation between $k$-volatility and adversarial vulnerability (averaged across all models shown in (b)) with respect to TextFooler~\cite{jin2020textfooler} as $k$ varies. }}\vspace{-0.5cm}
    \label{fig:app-kvs-panel}
\end{figure}

In addition to the theoretical guarantees given by knowledge continuity in Section~\ref{sec:kc}, we {also} demonstrate that knowledge continuity can be easily applied in practice. First, we find that knowledge continuity, similar to Lipschitz continuity, can be used to gauge adversarial robustness. Along these lines, our measure of volatility (see Def.~\ref{def:k-volatility}) can be used to isolate particularly vulnerable hidden representations. These applications then directly motivate regulation of knowledge continuity as a means to {enforce} robustness. 

{Unless otherwise specified, we run all of our experiments on the IMDB dataset~\cite{imdb} (a sentiment classification task) using a host of language models from different model families (encoder, decoder, encoder-decoder). We also present additional experiments on vision tasks. These experiments can be found in the Appendix~\ref{appendix:regularization}.}

\textbf{{K}nowledge continuity {can} predict adversarial robustness.} For a given model, $f$, with $n$ hidden representations, choose some $k \in [n]$. Then, consider the hidden representation index by $k$. {For this fixed $k$,} we determine its $k$-volatility by directly estimating Def.~\ref{def:k-volatility} through a naive Monte-Carlo algorithm (see Appendix~\ref{appendix:regularization} for more details). {Repeating this for all $k\in[n]$, we yield a collection of $k$-volatilities which we denote as $\{\epsilon_1, \ldots, \epsilon_n\}$, one for each hidden layer. When we regress a simple average of these coefficients, $n^{-1} \sum_{k=1}^n \epsilon_k$, with the empirical adversarial robustness (estimated using TextFooler~\cite{jin2020textfooler}), a strong correlation is observed. This is shown in Fig.~\ref{fig:app-kvs-panel}(a).} In particular, knowledge continuity alone is able to explain 35\% of the variance in adversarial attack success rate. When we combine $k$-volatility with other model properties like size, model family, even more variance can be explained ($R^2=0.48$). Thus, knowledge continuity may be used as a computationally efficient method to estimate adversarial vulnerability with respect to the input space as compared to iteratively applying real adversarial attacks. Moreover, when the adversary is unknown \textit{a priori}, knowledge continuity can also be used in this way as a diagnostic tool. A detailed discussion of these experiments are presented in Appendix~\ref{appendix:predict-robust}. 

\textbf{Knowledge continuity can localize vulnerable hidden representations.} {We plot the relationship between the $k$-volatility, $\epsilon_k$, and the relative depth of the model (i.e. $k/n$). We find that language models belonging to different model families (encoder, decoder, encoder-decoder) admit different $k$-volatility trajectories. This is shown in Fig.~\ref{fig:app-kvs-panel}(b). In this way, knowledge continuity may provide a more nuanced picture of a model's inductive biases and robustness beyond a scalar value like ``accuracy under adversarial attack.'' We present a detailed analysis of this in Appendix~\ref{appendix:localize-vulnerable}. Further, these dynamics may act as a diagnostic tool and offer a starting point for designing \textit{model-specific} robustness interventions or adversarial defenses. For example, when insights from Fig.~\ref{fig:app-kvs-panel}(b) are combined with a knowledge continuity regularization algorithm, this yields superior empirical robustness compared to existing methods. This is shown in the next subsection and in Appendix~\ref{appendix:regularization}. In addition, knowledge continuity can also quantitatively characterize an adversarial attack against a host of models which is useful for online or adaptive defenses~\cite{yao2021auto, shi2021online, croce22eval}. This is shown in in Fig.~\ref{fig:app-kvs-panel}(c), where TextFooler~\cite{jin2020textfooler} largely exploits the knowledge continuities in middle/final layers of the model to decrease performance.}

\begin{table}[t]
\caption{Comparison of our knowledge continuity algorithm to existing works across various model families and adversarial attack methods. TF, BA, ANLI denote adversarial attacks \cite{jin2020textfooler}, \cite{li2020bertattack}, and \cite{anli}, respectively. Regulating knowledge continuity to improve robustness is superior across almost all tasks and attacks.}
\label{table:adv-results}
\begin{center}
\begin{small}
\begin{tabular}{lccccccr}
\toprule
Arch. & Method &  IMDB & IMDB\textsubscript{TF}  & IMDB\textsubscript{BA} & ANLI\textsubscript{R1} & ANLI\textsubscript{R2} & ANLI\textsubscript{R3} \\
\midrule
 & Base & 93.6 & 47.9  & 45.2 &44.5 & 45.6 & 33.8 \\
BERT~\cite{devlin2018bert} & TF~\cite{jin2020textfooler} & 93.3 & 69.2  & 62.5 &\ding{55}   &\ding{55}  &\ding{55} \\ 
\scriptsize{$\sim$110M params} & ALUM~\cite{liu2020adversarial}  &  93.5  & 56.9 & 47.8  &45.2 & 46.7 & \textbf{46.3} \\
 & \textbf{KCReg (ours)} & \textbf{94.8} & \textbf{75.1} & \textbf{84.9}  &\textbf{45.6} & \textbf{46.9} & 45.3 \\
\midrule 
 & Base   & 93.6 & 63.9  & 54.9 & 42.7 & 44.9 & 43.4 \\
GPT2~\cite{radford2018gpt} & TF~\cite{jin2020textfooler} & 92.0 & 64.5 & 51.3 & \ding{55}  & \ding{55} & \ding{55} \\
\scriptsize{$\sim$1.5B params} & ALUM~\cite{liu2020adversarial}  & 94.9 & 49.4 & 27.5 & 43.8  & 45.2 &44.6 \\
 & \textbf{KCReg (ours)} & \textbf{94.9} &  \textbf{87.8} & \textbf{90.6} &\textbf{47.1} &  \textbf{48.1} & \textbf{44.7}\\
\midrule 
 &Base & 93.7 & 53.9 & 39.3 &46.1 & 44.7 & \textbf{46.0} \\
T5~\cite{roberts2019t5} & TF~\cite{jin2020textfooler} & \textbf{96.8} & 77.8 & 60.6 & \ding{55} &\ding{55} & \ding{55} \\
\scriptsize{$\sim$220M params} &ALUM~\cite{liu2020adversarial} & 95.1 & 67.1 & 51.9 & 44.5 & 44.8 &  44.4 \\
 &\textbf{KCReg (ours)} & 94.9 & \textbf{89.3} & \textbf{91.3} & \textbf{48.2} & \textbf{45.0} & 44.3 \\
\bottomrule
\end{tabular}
\end{small}
\end{center}
\vskip -0.1in
\vspace{-4mm}
\end{table}

\textbf{Regulating knowledge continuity.} {Motivated by the theoretical results in Section~\ref{sec:kc}, we augment the loss function during training to mitigate knowledge continuity. Specifically, on each training iteration (batch), we start by choosing a hidden layer at random according to a Beta distribution determined \textit{a priori}: $X \sim \Beta(\alpha,\beta)$ and let $k = \lfloor nX \rfloor$. Here, $\alpha, \beta$ are chosen according to Fig.~\ref{fig:app-kvs-panel}(b,c). We assign larger sampling probability to layers where both $k$-volatility is high and where knowledge continuity is highly correlated with adversarial robustness. In this way, our regularization objective is both model and attack specific (if the attack method is unknown, then we only apply the former). Then, we devise a Monte-Carlo algorithm to estimate this layer's $k$-volatility, $\epsilon_k$, (see Appendix~\ref{appendix:regularization}) on this minibatch. And so, the augmented loss function becomes $\cL'(f(x),y) = \cL(f(x),y) + \lambda \epsilon_k $ with $\lambda \geq 0$ as a hyperparameter, controlling the regularization strength. In contrast to existing adversarial training methods that perform inner-optimization steps~\cite{miyato2018virtual, liu2020adversarial, yoo2021improving}, our method requires only additional zeroth-order computations. As a result, it outperforms existing works in training speed} (up to $2\times$ for TextFooler~\cite{jin2020textfooler} and $3\times$ for ALUM~\cite{liu2020adversarial}), while improving robustness. We present a discussion of the results, ablation studies, and training details in Appendix~\ref{appendix:regularization}. 

{\textbf{Certifying robustness with knowledge continuity.}} {We present an algorithm based on Thm.~\ref{thm:robustness} to certify robustness during test-time. Similar to~\cite{cohen2019certified}, we estimate the probability of there existing an adversarial example within some fixed radius (in the representation space, according to a pre-defiend distance metric) through bootstrapping a one-side confidence interval. Applying these methods to our regularization results, we show that regularizing knowledge continuity increases the certified robustness. The certification algorithm, its proof of correctness, and certifications of our regularized models are presented in Appendix~\ref{appendix:cert-robust-alg}.}

\vspace{-3mm}
\section{Conclusion}
\vspace{-3mm}
In this paper, we propose a novel definition, \textit{knowledge continuity}, which addresses some of the key limitations of Lipschitz robustness. We demonstrate that our definition certifies robustness across domain modality, distribution, and norms. We also show that knowledge continuity, in contrast to Lipschitz continuity, does not affect the universal approximation property of neural networks. We also establish conditions under which knowledge continuity and Lipschitz continuity are equivalent. Lastly, we present several practical applications that directly benefit the practitioner. {The broader impacts, reproducibility, and limitations of our work can be found in Appendix~\ref{appendix:broader-impacts},~\ref{appendix:reprodicibility},~\ref{sec:limitations}, respectively.}

%%%%%%%%%%% Acknowledgements %%%%%%%%%%% 
\clearpage
\section{Acknowledgements}
Alan Sun thanks Fengwen Sun for the helpful feedback on early drafts of the work as well as Jeffrey Jiang and Andrew Koulogeorge for thoughtful discussions.

%%%%%%%%%%% References %%%%%%%%%%%
{\footnotesize
\bibliography{literature}
\bibliographystyle{abbrv}
}

%%%%%%%%%%% Appendices %%%%%%%%%%%
\newpage
\appendix
\startcontents[appendices]
\section*{Table of Contents}
\tableofcontents

\newpage

\section{More on Metric Decompositions}\label{appendix:metric-decomps}

In Section~\ref{sec:perceived-knowledge}, we introduced the notion of a \textit{metric decomposition} to rigorously define the hidden representations of a neural network. Herein, we show that our notion of a metric decomposition well-describes a host of neural architectures and also point to possible applications of this concept beyond just deep learning. Let us first consider possible metric decompositions of common neural architectures. 

\subsection{Metric Decompositions of Common Neural Architectures}

\textbf{Fully-Connected Neural Network.} Suppose that $f: \R^d \to \R^m$ is a fully-connected neural network with $n$ hidden layers. Each hidden layer indexed by $i \in [n]$ has a weight matrix $W_i \in \R^{d_{i+1} \times d_i}$, bias $b_i \in \R^{d_{i+1}}$, and activation function $\sigma_i: \R^{d_{i+1}} \to \R^{d_{i+1}}$, where $d_i \in \N, d_1 = d, d_n = m$. Define the hidden layers as
\begin{align*}
    h_k(x) &= \sigma_k(W_kx + b_k),
\end{align*}
for all $k \in [n]$. Clearly, $f = h_n \circ h_{n-1} \circ \ldots \circ h_1$. And our intermediate spaces are simply $\{\R^{d_i}\}_{i=1}^n$. It remains to define a metric on these hidden spaces. There are many ways of doing this. For example,
\begin{itemize}[nosep]
    \item For any $1 \leq p \leq \infty$, endow each intermediate space with the $\ell_p$-norm. 
    \item Define $d(x,y) = 1 - \cos(\theta_{x,y})$ where $\theta_{x,y}$ is the angle between $x,y$. Then, if we choose $\sigma_i$ to restrict the image of $h_i$ to be the unit sphere, we may endow each intermediate space with this \textit{cosine distance}.
\end{itemize}
Note here that there are two steps here: we first identify what the intermediate spaces are, then assign metrics to them. The process of identfying these intermediate spaces may be independent of the metrics we end of assigning them. 

\textbf{Convolutional Neural Network.} For simplicity, we only consider the case of a single 2d-convolution layer, a convolutional network with higher dimensions or more layers can be derived inductively. Let $f : \R^{c\times h \times w} \rightarrow \R^{c'\times h' \times w'}$. Suppose that this layer is parameterized by kernels $W_i \in \R^{k\times k}$ for $i\in [c']$ and some $k \in \N$ as well as a bias $b \in \R^{c'}$. Then, it follows that
\begin{align*}
    f(x)_j = \left(\mathbf{1}_{h'\times w'}b_j + \sum_{i=1}^c W_j \ast x[i,:,:]\right),
\end{align*}
for $j \in [c']$ where $f(x)_j \in \R^{h'\times w'}$ for $h',w'$ being the resulting dimension after convolution with a $k\times k$ kernel. Here, $\mathbf{1}_{h'\times w'} \in \R^{h'\times w'}$ is a one matrix. To induce a distance metric on this output space, we can simply define a matrix norm on each of the output channels and sum them. Let $\{\norm{\cdot}_i\}_{i=1}^{c'}$ be a collection of matrix norms. Then, we define 
\begin{align*}
    d(f(x), f(x')) = \sum_{i=1}^{c'} \norm{f(x)_i - f(x')_i}_i.
\end{align*}
It is easy to verify that this is a metric. Thus, the availability of a metric decomposition is not affected by parameter sharing. 

Instead of incorporating every individual channel into our metric, we may also consider applying a pooling operation before passing the result through a single matrix norm, $\norm{\cdot}$. For example, 
\begin{align*}
    d(f(x), f(x')) = \frac{1}{c'}\norm{\sum_{i=1}^{c'} f(x)_i - \sum_{i=1}^{c'} f(x')_i}.
\end{align*}
This, however, is no longer a metric, as definiteness is not preserved. That is, there exists $f(x) \neq f(x')$ where $d(f(x),f(x')) = 0$. This issue can be easily resolved by having $d(\cdot,\cdot)$ operate on a quotient space with respect to the equivalence relation $f(x) \sim f(x')$ if and only if $\sum_{i=1}^{c'} f(x)_i = \sum_{i=1}^{c'} f(x')_i$. This technique is further explored in the next subsection. 

\textbf{Residual Connections.}
We present two distinct metric decompositions of a residual network. Consider two fully-connected layers with one residual connection. This is visualized below. 
\begin{center}
\begin{tikzpicture}[n/.style={circle, draw, very thick, minimum size=2mm, fill, inner sep=0pt}]
\node[n, label=$x$] (x) at (0,0) {};
\node[n, label=$A(x)$] (ax) at (4,0) {};
\node[n, label=$B(A(x))$] (bx) at (8,0) {};
\node[n, label=$B(A(x)) + x$] (xe) at (12,0) {};
\draw[->, very thick] (x) edge [bend left=10] node[above] {$A(\cdot)$} (ax);
\draw[->, very thick] (ax) edge [bend left=10] node[above] {$B(\cdot)$} (bx);
\draw[->, very thick] (bx) edge [bend left=10] (xe);
\draw[->, very thick] (x) edge [bend right=10] node[below] {$+x$} (xe);
\end{tikzpicture}
\end{center}
Let us assume that $A: \R^{d_1} \to \R^{d_2}$ and $B: \R^{d_2} \to \R^{d_1}$. Here, the input $x$ feeds back into the output layer $B$ creating a residual block (the set of layers between the input and the residual connection).  

Trivially, we can aggregate the entire residual block as one metric decomposition. That is, let $h(x) = B(A(x)) + x$ be our metric decomposition. Then, define a metric on the image of $h$, $\R^{d_1}$, analogous to the hidden layers of a fully-connected neural network. This is the approach we use throughout our practical applications section (Section~\ref{sec:practical}), and it is the standard way to counter layers in computer vision~\cite{He_2016_CVPR} and natural langauge processing~\cite{devlin2018bert}. 

To operate at a finer lever of granularity, we can also represent each layer within the residual block as a part of a metric decomposition. Let us redefine the residual block such that at every layer, we keep track of the input. The computational graph for this is shown below.
\begin{center}
\begin{tikzpicture}[n/.style={circle, draw, very thick, minimum size=2mm, fill, inner sep=0pt}]
\node[n, label={$x$}] (x) at (0,0) [above] {};
\node[n, label={$(A(x), x)$}] (ax) at (4,0) [above] {};
\node[n, label={$(B(A(x)), x)$}] (bx) at (8,0) [above] {};
\node[n, label={$(B(A(x)) + x, x)$}] (xe) at (12,0) [above] {};
\draw[->, very thick] (x) edge [bend left=10] node[above] {$A(\cdot) \oplus x$} (ax);
\draw[->, very thick] (ax) edge [bend left=10] node[above] {$B(\cdot) $} (bx);
\draw[->, very thick] (bx) edge [bend left=10] node[above] {$+x$} (xe);
\end{tikzpicture}
\end{center}

Define $A': x \mapsto (A(x), x)$, $B': (A(x), x) \mapsto (B(A(x)), x)$ and $x': (B(A(x)), x) \mapsto (B(A(x)) + x, x)$. Then, it follows that $x \to A' \to B' \to x'$ forms a metric decomposition. Here, the metric in each layer is with respect to the quotient space where $(a,a') \sim (b,b')$ if and only if $a = b$. Therefore, we also recover the same vector space structure. 

\textbf{Transformers.} By chaining our metric decompositions for the residual blocks with our metric decompositions for the fully-connected networks we can easily create a metric decomposition for any transformer. Throughout the paper, we use two distinct methods to generate representations of its hidden layers: 
\begin{itemize}
    \item After each attention block which consists of multiheaded attention and multilayered perceptrons, we retrieve the last token. 
    \item We average all of the tokens together. 
\end{itemize}
In both of these methods, we are significantly reducing the dimension of the hidden layer. Thus, to formalize these metrics, we need to quotient out points that break the definiteness of our metric, as we have done before with the residual block. 

\subsection{Beyond Neural Networks: Inducing Metric Decompositions}
We have shown that our notion of a metric decomposition can well-describe many deep learning architectures, but what about models that are not neural networks (like a decision tree)? Herein, we demonstrate that we can induce metric decompositions even when the model itself does not have explicit hidden layers. 

Let us now consider an arbitrary function $f: \cX \to \cY$. We can induce a metric decomposition on $f$ through an auxiliary function $f': \cX \to \cX$, for a metric-decomposable $f'$. If $f' = \id_\cX$, then, $f = f \circ f'$ and the metric decomposition of $f$ would be exactly the metric decomposition of $f'$. This is visualized below.

\begin{center}
\begin{tikzpicture}[n/.style={circle, draw, very thick, minimum size=2mm, fill, inner sep=0pt}]
\node[n, label={$\cX$}] (x) at (0,0) [above] {};
\node[n, label={$(\cZ_1, d_1)$}] (ax) at (2,0) [above] {};
\node (bx) at (4,0) {$\ldots$};
\node[n, label={$(\cZ_n, d_n)$}] (nz) at (6,0) [above] {};
\node[n, label={$\cX$}] (xe) at (8,0) [above] {};
\node[n, label=$\cY$] (y) at (10, 0) [above] {}; 
\draw[->, very thick] (x) edge [bend left=10] node[above] {$f'_0$} (ax);
\draw[->, very thick] (ax) edge [bend left=10] node[above] {$f_1'$} (bx);
\draw[->, very thick] (bx) edge [bend left=10] node[above] {$f_{n-1}'$} (nz);
\draw[->, very thick] (nz) edge [bend left=10] node[above] {$f_n'$} (xe);
\draw[->, very thick, blue] (xe) edge [bend left=10] node[above] {$f$} (y);
\draw[very thick, decoration={brace, mirror,raise=0.5cm}, decorate, color=red] (x) -- (xe) node [pos=0.5,anchor=north,yshift=-0.55cm] {Metric Decomposition of $f'$};
\end{tikzpicture}
\end{center}

Essentially, we have created an autoencoder for $\cX$. This is common in many applications where a neural network or some other method is used as a feature extractor. In this way, we can simply define our metric with respect to these extracted features. However, this requires that either the autoencoder to be exact or that our function $f$ is invariant under representations that collide. Thus, this would allow models such as decision trees to also be metric decomposed.

\section{Proof of Robustness}\label{sec:proof-robustness}

%%%%%%%%%%%%% MAIN THEOREM %%%%%%%%%%%%%

\begin{theorem*}[See~Thm.~\ref{thm:robustness}]
Let $A \subset \cX \times \cY$ such that $\PP_{\cD_{\cX,\cY}}[A] > 0$ and $\delta, \eta > 0$. Let $A' = \{(x',y') \in \cX \times \cY: \E_{\substack{(x,y)\sim \cD_{\cX,\cY} \\ (x,y) \in A}}\Delta\cL_f^{(x,y)}(x',y') > \eta\}$. If $f: \cX \to \cY$ is $\epsilon$-knowledge continuous with respect to the hidden layer indexed by $k$ and $(\cZ_k, d_k)$ is bounded by $B > 0$, then 
\begin{equation}
    \PP_{(x,y) \sim \cD_{\cX, \cY}}[A' \mid d_k(f^k(x),f^k(A)) < \delta] \leq 
    \frac{\epsilon\delta}{\eta\left(1 - \exp\left[-\Omega\left(\frac{\delta}{B} - \sqrt{\log\frac{1}{\PP[A]}}\right)^2\right]\right)},
\end{equation}
where $f^k(A) = \{f^k(a) : a\in A\}$.
\end{theorem*}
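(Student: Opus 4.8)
The plan is to expand the conditional probability as $\PP[A'\mid E_\delta]=\PP[A'\cap E_\delta]/\PP[E_\delta]$, where I write $(z,w)\sim\cD_{\cX,\cY}$ for the random draw and $E_\delta\coloneqq\{(z,w):d_k(f^k(z),f^k(A))<\delta\}$ for the conditioning event, and to bound numerator and denominator separately. For the numerator, the point is that ``$(z,w)\in A'$'' is a super-level-set condition for a nonnegative expectation, so Markov's inequality followed by Tonelli gives $\PP[A'\cap E_\delta]\le \eta^{-1}\,\Exp_{(x,y)\in A}\,\Exp_{(z,w)\sim\cD_{\cX,\cY}}\!\big[\mathbf{1}_{E_\delta}(z,w)\,\Delta\cL_f^{(x,y)}(z,w)\big]$. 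For each fixed $(x,y)\in A$ I would split the inner integral over $(z,w)$ according to whether $f(z)=f(x)$; on the complementary event one has $f^k(z)\ne f^k(x)$ (since $f^k$ collides only if $f$ does), hence $d_k(f^k(x),f^k(z))>0$, so writing $\Delta\cL_f^{(x,y)}(z,w)=\tfrac{\Delta\cL_f^{(x,y)}(z,w)}{d_k(f^k(x),f^k(z))}\cdot d_k(f^k(x),f^k(z))$, controlling the distance factor by means of $E_\delta$ together with the diameter bound $B$, and recognizing the remaining ratio as the integrand of $\sigma_f^k$ (Definition~\ref{def:k-volatility}), expected $\epsilon$-knowledge continuity ($\Exp_{(x,y)\sim\cD_{\cX,\cY}}\sigma_f^k(x,y)<\epsilon$, Definition~\ref{def:kd-whole}) collapses the numerator to a bound of the shape $\epsilon\delta/\eta$.

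For the denominator I would invoke concentration of measure for the pushforward $(f^k)_*\cD_{\cX,\cY}$ on the bounded metric space $(\cZ_k,d_k)$: the event $E_\delta$ is exactly the $\delta$-enlargement of $f^k(A)$, and a two-set / Lévy-type sub-Gaussian concentration estimate with scale proportional to the diameter $B$ yields $\PP[E_\delta^c]=\PP[d_k(f^k(z),f^k(A))\ge\delta]\le \PP[A]^{-1}\exp(-\Omega(\delta^2/B^2))$. Rewriting $\PP[A]^{-1}\exp(-c\,\delta^2/B^2)=\exp\!\big(\log\tfrac{1}{\PP[A]}-c\,\delta^2/B^2\big)$ and completing the square produces the stated form $\exp\!\big[-\Omega\big(\tfrac{\delta}{B}-\sqrt{\log\tfrac{1}{\PP[A]}}\big)^2\big]$, so $\PP[E_\delta]$ is at least one minus this quantity, on the regime $\delta/B>\sqrt{\log(1/\PP[A])}$ where the statement is non-vacuous. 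Dividing the $\epsilon\delta/\eta$ numerator bound by this lower bound yields \eqref{eq:robustness}; the subsequent corollary then follows by sending $B\to\infty$ (so $\delta/B\to 0$ and $\exp[-\Omega(\log\tfrac{1}{\PP[A]})]\to\PP[A]$) and afterwards $\PP[A]\to 0$.

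I expect the concentration step for the denominator to be the main obstacle: a merely bounded metric measure space need not enjoy sub-Gaussian concentration, so one must either make explicit the concentration hypothesis on $(f^k)_*\cD_{\cX,\cY}$ that is being used, or derive the enlargement estimate from a bounded-differences argument applied to $z\mapsto d_k(f^k(z),f^k(A))$, all while carrying the $\PP[A]^{-1}$ factor correctly through the two-set form. A secondary subtlety is making the numerator genuinely scale with $\delta$ rather than with the diameter $B$: this forces one to use the conditioning event $E_\delta$ in tandem with the averaging structure of $A'$ (and to dispose of the $f(z)=f(x)$ contribution), since the distance $d_k(f^k(x),f^k(z))$ appearing inside $\sigma_f^k$ is controlled by $\delta$ only through the conditioning and not unconditionally.
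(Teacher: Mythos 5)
Your proposal follows essentially the same route as the paper's proof: the same conditional-probability decomposition, the same Markov/Tonelli argument for the numerator (trading the distance factor against the $\delta$-conditioning to invoke expected knowledge continuity and obtain $\epsilon\delta/\eta$), and the same concentration estimate for the denominator. The ``main obstacle'' you flag is resolved exactly as you suggest in your fallback: the paper applies McDiarmid's bounded-differences inequality to the $1$-Lipschitz map $z\mapsto d_k(f^k(z),f^k(A))$ on the $B$-bounded space, bounds $\E\,d_k(f^k(z),f^k(A))$ via $\PP[A]$, and completes the square to get the stated $\exp[-\Omega(\cdot)^2]$ form.
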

\begin{proof}
    \begin{align}
        \PP_{(x,y)\sim \cD_{\cX,\cY}}\left[A' \mid d_k(f^k(x), f^k(A)) < \delta\right] &= 
        \frac{\PP_{(x,y)\sim\cD_{\cX,\cY}} \left[A' \cap d_k(f^k(x), f^k(A)) < \delta\right]}{\PP_{(x,y)\sim \cD_{\cX,\cY}}[d_k(f^k(x), f^k(A)) < \delta]}. \label{eq:conditional}
    \end{align}

    We bound the numerator and denominator of Eq.~\ref{eq:conditional} separately. The denominator is given by Cor.~\ref{cor:negate-lemma}. We upper-bound the numerator using Markov's inequality. Firstly, we find the expectation of $\cL_{f}^{(x,y)}(x',y')$ over $A' \cap d_k(f^k(x), f^k(A)) < \delta$:
    \begin{align}
        \Exp_{(x,y)\sim \cD_{\cX,\cY}} \sigma_f^k(x,y) &= \Exp_{(x,y) \sim \cD_{\cX,\cY}} \left(\Exp_{(x',y') \sim \cD_{\cX,\cY}} \left[\frac{\Delta \cL_f^{(x,y)}(x',y')}{d_k(f^k(x), f^k(x'))}\right]\right), \\
        &= \Exp_{(x,y), (x',y') \sim (\cD_{\cX,\cY} \times \cD_{\cX,\cY})} \left[\frac{\Delta \cL_f^{(x,y)}(x',y')}{d_k(f^k(x), f^k(x'))}\right].
        \intertext{The previous inequality follows from Fubini's theorem, then}
        \Exp_{(x,y)\sim \cD_{\cX,\cY}} \sigma_f^k(x,y) &\geq \Exp_{\substack{(x,y), (x',y') \sim (\cD_{\cX,\cY} \times \cD_{\cX,\cY}) \\ (x',y') \in A \\ d_k(f^k(x), f^k(A)) < \delta}} \left[\frac{\Delta \cL_f^{(x,y)}(x',y')}{d_k(f^k(x), f^k(x'))}\right],\\
        &\geq \frac{1}{\delta} \Exp_{\substack{(x,y), (x',y') \sim (\cD_{\cX,\cY} \times \cD_{\cX,\cY}) \\ (x',y') \in A \\ d_k(f^k(x), f^k(A)) < \delta}} \left[\Delta \cL_f^{(x,y)}(x',y')\right], \\
        \delta \Exp_{(x,y)\sim\cD_{\cX,\cY}} \sigma_f^k(x,y) &\geq \Exp_{\substack{(x,y), (x',y') \sim (\cD_{\cX,\cY} \times \cD_{\cX,\cY}) \\ (x',y') \in A \\ d_k(f^k(x), f^k(A)) < \delta}} \left[\Delta \cL_f^{(x,y)}(x',y')\right].
        \intertext{And by $\epsilon$-knowledge continuity,}
        \delta \epsilon &\geq \Exp_{\substack{(x,y), (x',y') \sim (\cD_{\cX,\cY} \times \cD_{\cX,\cY}) \\ (x',y') \in A \\ d_k(f^k(x), f^k(A)) < \delta}} \left[\Delta \cL_f^{(x,y)}(x',y')\right].
    \end{align}

    This gives us an upper-bound of expectation of $\Delta\cL_f^{(x,y)}(x',y')$ over the set of all points that are within $\delta$-radius from $f^k(A)$. Since $\Delta \cL_f^{(x,y)}(x',y') \geq 0$ everywhere, by Markov's inequality, 
    \begin{align}
        \PP_{(x,y)\sim\cD_{\cX,\cY}}[A' \cap d_k(f^k(x),f^k(A)) < \delta] &\leq \frac{\delta\Exp\sigma_f^k(x,y)}{\eta}, \\
        &\leq \frac{\delta\epsilon}{\eta}.
    \end{align}
    The last inequality follows from $\Exp_{(x,y)\sim\cD_{\cX,\cY}} \sigma_f^k(x,y) < \epsilon$, by the definition of $\epsilon$-knowledge continuity. Now, by applying the complement of Lem.~\ref{lem:denominator-bound}, we lower-bound the denominator and yield the following
    \begin{equation}
        \PP_{(x',y')\sim \cD}\left[A' \mid d_k(f^k(x), f^k(x')) < \delta\right] \leq \frac{\epsilon\delta}{\eta \left(1 - \exp\left(-\frac{2}{B^2}\left(\delta - B\sqrt{\frac{1}{2}\log\frac{2}{\PP[A]}}\right)^2\right)\right)}.
    \end{equation}
    The proof is concluded by applying $\Omega(\cdot)$ notation to the denominator. 
\end{proof}

\subsection{Technical Lemmas}

\begin{definition}
    A function $f: \cX_1\times \ldots\times \cX_n \to \R$ has \textit{bounded variation} if there are $c_1,\ldots,c_n \in \R$ such that for all $1 \leq i \leq n$ and $x_1 \in \cX_1, \ldots, x_n \in \cX_n$,
    \begin{equation}
        \sup_{x_i' \in \cX_i} |f(x_1,\ldots,x_i,\ldots,x_n) - f(x_1,\ldots,x_i',\ldots,x_n)| \leq c_i.
    \end{equation}
\end{definition}

\begin{lemma}[McDiarmid's Inequality]\label{thm:mcdiarmid}
    Assume that the function $f: \cX_1 \times \ldots\times \cX_n \to \R$ satisfy the bounded differences property with bounds $c_1, \ldots, c_n$. Consider the independent random variables $X_1, \ldots, X_n$ where $X_i \in \cX_i$ for all $1 \leq i \leq n$. Then, for any $\epsilon > 0$, 
    \begin{align}
        \PP[|f(X_1,\ldots,X_n) - \Exp[f(X_1,\ldots,X_n)| \geq \epsilon] &\leq 2\exp\left(-\frac{2\epsilon^2}{\sum_{i=1}^n c_i^2}\right).
    \end{align}
\end{lemma}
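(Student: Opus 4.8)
The statement is the classical bounded-differences (McDiarmid) inequality, so the plan is the standard Azuma--Hoeffding martingale argument applied to the Doob martingale of $f$. First I would set $V_0 = \Exp[f(X_1,\dots,X_n)]$ and, for $i \in [n]$, $V_i = \Exp[f(X_1,\dots,X_n)\mid X_1,\dots,X_i]$, so that $V_n = f(X_1,\dots,X_n)$ almost surely and $(V_i)_{i=0}^n$ is a martingale with respect to the filtration generated by the $X_i$. The increments $D_i \coloneqq V_i - V_{i-1}$ then form a martingale difference sequence, and $V_n - V_0 = \sum_{i=1}^n D_i$.

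The key step is to show that, conditionally on $X_1,\dots,X_{i-1}$, the increment $D_i$ takes values in an interval of length at most $c_i$. This is where independence of $X_1,\dots,X_n$ is essential: because the $X_j$ are independent, $V_i$ as a function of $X_i$ (with $X_1,\dots,X_{i-1}$ frozen) equals $g_i(X_i) \coloneqq \Exp[f(X_1,\dots,X_{i-1},X_i,X_{i+1},\dots,X_n)\mid X_1,\dots,X_i]$, and the bounded-differences hypothesis gives $\sup_{x} g_i(x) - \inf_{x} g_i(x) \le c_i$; subtracting the $(X_1,\dots,X_{i-1})$-measurable quantity $V_{i-1}$ does not change this oscillation. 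Hence there exist $X_1,\dots,X_{i-1}$-measurable random variables $L_i \le D_i \le U_i$ with $U_i - L_i \le c_i$ and $\Exp[D_i\mid X_1,\dots,X_{i-1}] = 0$.

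Given this, I would apply Hoeffding's lemma to the conditional distribution of $D_i$: for any $\lambda \in \R$, $\Exp[e^{\lambda D_i}\mid X_1,\dots,X_{i-1}] \le \exp(\lambda^2 c_i^2/8)$. Iterating via the tower property, conditioning successively on $X_1,\dots,X_{n-1}$, then $X_1,\dots,X_{n-2}$, and so on down to $X_1$, yields $\Exp[e^{\lambda(V_n - V_0)}] \le \exp\!\big(\tfrac{\lambda^2}{8}\sum_{i=1}^n c_i^2\big)$. A Chernoff bound then gives, for $\epsilon > 0$ and $\lambda > 0$, $\PP[V_n - V_0 \ge \epsilon] \le \exp(-\lambda\epsilon + \tfrac{\lambda^2}{8}\sum_{i=1}^n c_i^2)$; optimizing at $\lambda = 4\epsilon/\sum_{i=1}^n c_i^2$ produces the bound $\exp(-2\epsilon^2/\sum_{i=1}^n c_i^2)$. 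Applying the same argument to $-f$, whose bounded-differences constants are unchanged, controls the lower tail, and a union bound over the two tails contributes the factor of $2$, completing the proof.

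The \textbf{main obstacle} — or at least the only non-mechanical point — is the conditional oscillation bound $U_i - L_i \le c_i$: one must be careful that the supremum and infimum are taken over the $i$-th coordinate alone with the earlier coordinates fixed, and that independence is precisely what lets the conditional expectation over the later coordinates be pulled through so that the bounded-differences constant $c_i$ transfers directly to $D_i$. Everything after that (Hoeffding's lemma, the telescoping moment-generating-function bound, and the Chernoff optimization) is routine.
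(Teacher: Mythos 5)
Your proposal is the standard and correct proof of McDiarmid's inequality via the Doob martingale, conditional Hoeffding's lemma, and Chernoff optimization; the paper itself states this classical result as a technical lemma without proof, so there is no alternative argument to compare against. All the steps you outline --- in particular the conditional oscillation bound $U_i - L_i \le c_i$, which is indeed where independence enters --- are sound, and the constant $2\epsilon^2/\sum_i c_i^2$ with the factor of $2$ from the two-sided union bound matches the statement exactly.
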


%%%%%%%%%%%%% MAIN LEMMA %%%%%%%%%%%%%
\begin{lemma}\label{lem:denominator-bound}
    Suppose that $(\cX,d)$ is a bounded metric space such that $\sup_{x,x' \in \cX} d(x,x') < B$ for some $B > 0$. Let $A \subset X$ such that $\PP[A] > 0$ and $\epsilon > 0$. Then, 
    \begin{equation*}
        \PP[d(x,A) \geq \epsilon] \leq \exp\left(-\frac{2}{B^2}\left(\epsilon - B\sqrt{\frac{1}{2}\log\frac{2}{\PP[A]}}\right)^2 \right).
    \end{equation*}
\end{lemma}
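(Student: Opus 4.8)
The plan is to express the distance-to-$A$ function as a function of i.i.d. samples and apply McDiarmid's inequality (Lemma~\ref{thm:mcdiarmid}) to concentrate it around its mean, then control the mean using the fact that $\PP[A]$ is not too small. The cleanest route is to work with $m$ independent copies $X_1,\dots,X_m \sim \PP$ and consider the statistic
\[
  g(X_1,\dots,X_m) \coloneqq \min_{1\le i\le m} d(X_i, A),
\]
or, better, use a single sample together with the observation that $d(x,A) < \epsilon$ fails only if $x$ avoids the $\epsilon$-neighborhood of $A$. Actually the sharpest approach uses the classical concentration-of-measure consequence: define $F(x) = d(x,A)$, which is $1$-Lipschitz in $x$; then McDiarmid applied to the function on the product space (with each coordinate contributing bound $c_i = B$ since the metric is bounded by $B$) gives that an empirical quantity concentrates. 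I would first set up $A$ as a ``seed'' event of probability $p \coloneqq \PP[A] > 0$, observe that on the event $\{x \in A\}$ we have $d(x,A) = 0$, and then invert McDiarmid to push the bulk of the mass of $d(x,A)$ near $0$.

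Concretely, the key steps in order: (1) Note $d(\cdot, A): \cX \to \R^{\ge 0}$ satisfies $|d(x,A) - d(x',A)| \le d(x,x') \le B$ by the triangle inequality and boundedness, so it has bounded differences with every constant equal to $B$ when viewed appropriately; (2) apply McDiarmid/the bounded-differences inequality to get $\PP[d(x,A) \ge \Exp[d(x,A)] + t] \le \exp(-2t^2/B^2)$ for $t > 0$; (3) bound the mean: since $\PP[d(x,A) = 0] \ge \PP[A] = p > 0$, the median of $d(x,A)$ is $0$ whenever $p > 1/2$, and more generally one shows $\Exp[d(x,A)]$ is small — specifically, using $\PP[d(x,A) \ge \Exp[d(x,A)] - t] \ge 1 - \exp(-2t^2/B^2)$ combined with $\PP[d(x,A) = 0] \ge p$, one deduces $\Exp[d(x,A)] \le B\sqrt{\tfrac12\log\tfrac{2}{p}}$ (if the mean were larger, the lower-tail bound would force $\PP[d(x,A) = 0] < p$, a contradiction); (4) substitute $t = \epsilon - \Exp[d(x,A)] \ge \epsilon - B\sqrt{\tfrac12\log\tfrac2p}$ into the upper tail bound from step (2) to conclude
\[
  \PP[d(x,A) \ge \epsilon] \le \exp\!\left(-\frac{2}{B^2}\Big(\epsilon - B\sqrt{\tfrac12\log\tfrac{2}{\PP[A]}}\Big)^2\right),
\]
valid (nontrivially) when $\epsilon > B\sqrt{\tfrac12\log\tfrac2p}$ and trivially otherwise since the RHS exceeds $1$.

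The main obstacle I anticipate is step (3) — bounding $\Exp[d(x,A)]$ in terms of $\PP[A]$ — since McDiarmid as literally stated in Lemma~\ref{thm:mcdiarmid} is a statement about functions of $n$ \emph{independent} coordinates, whereas $d(x,A)$ is a function of a single draw $x$; one must either invoke the $n=1$ case of the bounded-differences inequality (which is really just Hoeffding/Azuma and does hold) or, more robustly, phrase everything via the concentration function of the metric-measure space $(\cX, d, \PP)$ and cite that the $1$-Lipschitz function $d(\cdot,A)$ concentrates around its median, with the median being $0$ because $A$ (a set of measure $\ge p$) lies in the zero set. Getting the constants to match the stated bound — in particular the factor $\tfrac12\log\tfrac2p$ rather than, say, $\log\tfrac1p$ — requires care in choosing how the $2$ in McDiarmid's two-sided bound is split between the upper and lower tails; I would handle this by applying the one-sided version to the lower tail of $d(x,A)$ to extract the mean bound, keeping the benign factor of $2$ exactly where the statement places it.
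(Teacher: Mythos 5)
Your proposal is correct and follows essentially the same route as the paper's proof: apply the bounded-differences inequality to the $1$-Lipschitz function $d(\cdot,A)$, use the lower tail together with $\PP[d(x,A)=0]\ge\PP[A]$ to bound $\Exp[d(x,A)]\le B\sqrt{\tfrac12\log\tfrac{2}{\PP[A]}}$, and substitute into the upper tail. The subtleties you flag (the $n=1$ instance of McDiarmid and the bookkeeping of the factor $2$ between the two tails) are present in the paper's argument as well and are handled there exactly as you propose.
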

\begin{proof}
For brevity, denote $f_A(x) = d(A,x) = \inf_{a \in A} d(x,a)$. Since $(\cX, d)$ is a bounded metric space, by Lem.~\ref{thm:mcdiarmid},
\begin{gather}
    \PP[|f_A(x) - \E f_A(x)| \geq \epsilon] = 2\exp\left(-\frac{2\epsilon^2}{B^2}\right), \\
    \PP[f_A(x) - \E f_A(x) \geq \epsilon] + \PP[ f_A(x) - \E f_A(x) \leq -\epsilon] \leq 2\exp\left(-\frac{2\epsilon^2}{B^2}\right), \label{eq:lemma-summands}\\
    \PP[f_A(x) - \E f_A(x) \leq -\epsilon ] \leq 2\exp\left(-\frac{2\epsilon^2}{B^2}\right),
\end{gather}
Let $\epsilon = \E f_A(x)$. Then, 
\begin{gather}
    \PP[f_A(x) \leq 0] \leq 2\exp\left(-\frac{2(\E f_A(x))^2}{B^2}\right), \\
    \PP[A] \leq 2\exp\left(-\frac{2(\E f_A(x))^2}{B^2}\right), \\
    \E f_A(x) \leq \sqrt{\frac{B^2}{2}\log\left(\frac{2}{\PP[A]}\right)}.\label{eq:lemma-expectation-bound}
\end{gather}
The second inequality follows from $\PP[f_A(x) \leq 0] = \PP[f_A(x) = 0] \geq \PP[A]$. By Eq.~\ref{eq:lemma-summands}, 
\begin{gather*}
    \PP[f_A(x) - \E f_A(x) \geq \epsilon] +  \PP[ f_A(x) - \E f_A(x) \leq -\epsilon] \leq 2\exp\left(-\frac{2\epsilon^2}{B^2}\right), \\
    \PP[f_A(x) - \E f_A(x) \geq \epsilon] \leq 2\exp\left(-\frac{2\epsilon^2}{B^2}\right), \\
    \PP[f_A(x) \geq \epsilon + \E f_A(x)] \leq 2\exp\left(-\frac{2\epsilon^2}{B^2}\right), \\
    \PP\left[f_A(x) \geq \epsilon + \sqrt{\frac{B^2}{2} \log\left(\frac{2}{\PP[A]}\right)}\right] \leq 2\exp\left(-\frac{2\epsilon^2}{B^2}\right), \tag{by Eq.~\ref{eq:lemma-expectation-bound},}\\ 
    \intertext{for any $\delta > 0$, let $\epsilon = \delta - \sqrt{\frac{B^2}{2} \log\frac{2}{\PP[A]}}$. And so, }
    \PP[f_A(x) \geq \delta] \leq 2\exp\left(-\frac{2}{B^2}\left(\delta - B\sqrt{\frac{1}{2}\log\left(\frac{2}{\PP[A]}\right)}\right)^2\right),
\end{gather*}
which is the desired expression.
\end{proof}
\begin{corollary}\label{cor:negate-lemma}
$\PP[f_A(x) < \delta] \geq 1 - 2\exp\left(-\frac{2}{B^2}\left(\delta - B\sqrt{\frac{1}{2}\log\left(\frac{2}{\PP[A]}\right)}\right)^2\right)$.
\end{corollary}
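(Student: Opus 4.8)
The plan is to obtain this statement as the direct logical complement of Lemma~\ref{lem:denominator-bound}. Recall from the proof of that lemma that $f_A(x) = d(x,A) = \inf_{a \in A} d(x,a)$ is measurable (it is $1$-Lipschitz in $x$, hence Borel), so the events $\{f_A(x) < \delta\}$ and $\{f_A(x) \geq \delta\}$ are well-defined and partition the sample space. Under the same hypotheses ($(\cX,d)$ bounded by $B$, $\PP[A] > 0$, $\delta > 0$), the concentration estimate established at the end of the proof of Lemma~\ref{lem:denominator-bound} via McDiarmid's inequality reads
\begin{equation*}
    \PP[f_A(x) \geq \delta] \leq 2\exp\left(-\frac{2}{B^2}\left(\delta - B\sqrt{\tfrac{1}{2}\log\tfrac{2}{\PP[A]}}\right)^2\right).
\end{equation*}

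First I would write $\PP[f_A(x) < \delta] = 1 - \PP[f_A(x) \geq \delta]$ using the partition above. Substituting the tail bound from Lemma~\ref{lem:denominator-bound} and flipping the inequality direction (subtracting a larger quantity from $1$ gives a smaller lower bound) yields exactly
\begin{equation*}
    \PP[f_A(x) < \delta] \geq 1 - 2\exp\left(-\frac{2}{B^2}\left(\delta - B\sqrt{\tfrac{1}{2}\log\tfrac{2}{\PP[A]}}\right)^2\right),
\end{equation*}
which is the claimed corollary. I would also remark briefly on the edge case: the bound in Lemma~\ref{lem:denominator-bound} is derived under the implicit assumption $\delta > B\sqrt{\tfrac{1}{2}\log\tfrac{2}{\PP[A]}}$ (so that McDiarmid is applied with a strictly positive deviation); when $\delta$ falls below this threshold the right-hand side above is nonpositive and the inequality holds trivially since probabilities are nonnegative. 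Hence the statement is valid for all $\delta > 0$ and nonvacuous precisely in the regime of interest.

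There is no genuine obstacle here—the entire analytic content (the McDiarmid argument controlling both the expectation $\E f_A(x)$ and the upper tail of $f_A(x)$) is already packaged in Lemma~\ref{lem:denominator-bound}, and this corollary is merely its restatement on the complementary event. I emphasize it separately only because this complementary form is the one consumed in the proof of Theorem~\ref{thm:robustness}, where it is used to lower-bound the denominator $\PP[d_k(f^k(x), f^k(A)) < \delta]$ in the conditional-probability decomposition.
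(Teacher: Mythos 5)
Your proposal is correct and is essentially the paper's own (implicit) argument: the corollary is obtained purely by taking the complement of the tail bound in Lemma~\ref{lem:denominator-bound}, exactly as you do. One small caveat: your edge-case remark that the right-hand side becomes nonpositive when $\delta < B\sqrt{\tfrac{1}{2}\log\tfrac{2}{\PP[A]}}$ is not accurate---because the deviation is squared, the bound can again look nontrivial for small $\delta$ (at $\delta=0$ it reads $1-\PP[A]$)---but this regime restriction is inherited from the lemma itself and does not affect your complementation step.
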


\section{Proof of Expressiveness}\label{sec:proof-expressive}

%%%%%%%% MAIN THEOREM %%%%%%%%
\begin{proposition*}[See Prop.~\ref{prop:expressiveness-cont}]
    Suppose $(\cX,d_\cX), (\cY, d_\cY) \coloneqq (\cX, d_\cX)$ are \textbf{\textit{compact}} metric spaces, $\cF \subset \cY^\cX$ is the \textbf{set of all continuous functions} from $\cX$ to $\cY$ such that $\int d_\cX(x,x')^{-1}d\mu_f < \infty$ and $\cL$ be Lipschitz continuous in both coordinates. Then, there exists a universal function approximator $\cU$ of $\cF$ that is knowledge continuous (i.e. $\Exp \sigma_f^k(x,y) < \infty$ for some $k$).
\end{proposition*}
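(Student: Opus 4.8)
The plan is to build the universal function approximator $\cU$ from the dense subclass of Lipschitz continuous functions and verify knowledge continuity via the trivial (identity) metric decomposition $h_1 = \id_\cX$. First I would invoke density: since $(\cX, d_\cX)$ is compact and $(\cY, d_\cY) = (\cX, d_\cX)$, the Stone–Weierstrass theorem (or, more directly, the fact that Lipschitz functions are uniformly dense in $C(\cX, \cY)$ on a compact metric space) gives, for every continuous $f \in \cF$ and every $\epsilon > 0$, a Lipschitz continuous $\hat f$ with $\sup_{x} d_\cY(\hat f(x), f(x))$ arbitrarily small. Taking $\cU$ to be (a suitable subset of) the Lipschitz continuous functions closed under this approximation yields a universal function approximator of $\cF$ in the sense of the definition: since $\cL$ is Lipschitz in both coordinates, a uniformly small output perturbation translates to a small $\int \cL(\hat f(x), y)\, d\mu_f$.

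Next I would check knowledge continuity of each $\hat f \in \cU$ using $k = 1$ and the decomposition $\hat f = \hat f \circ \id_\cX$, so that $\cZ_1 = \cX$, $d_1 = d_\cX$, and $\hat f^1 = \id_\cX$. Then
\begin{align*}
    \Exp_{(x,y) \sim \cD} \sigma_{\hat f}^1(x,y) &= \Exp_{(x,y),(x',y') \sim \cD_{\cX,\cY} \times \cD_{\cX,\cY}} \left[\frac{\Delta \cL_{\hat f}^{(x,y)}(x',y')}{d_\cX(x,x')}\right].
\end{align*}
Here I would bound the numerator: since $\cL$ is Lipschitz in both coordinates with some constant $M$, and $\hat f$ is $K$-Lipschitz, $\Delta \cL_{\hat f}^{(x,y)}(x',y') = |\cL(\hat f(x),y) - \cL(\hat f(x'),y')| \le M\, d_\cY(\hat f(x), \hat f(x')) + M\, d_\cY(y, y') \le M K\, d_\cX(x,x') + M\, d_\cY(y,y')$. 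For the $d_\cY(y,y')$ term I would use that $\mu_f$ is supported on $\graph(f)$ with $f$ Lipschitz, so $d_\cY(y,y') = d_\cY(f(x), f(x')) \le K_f\, d_\cX(x,x')$; hence the whole numerator is $O(d_\cX(x,x'))$ plus possibly a term controlled by the hypothesis. Dividing by $d_\cX(x,x')$, the $d_\cX(x,x')$-linear parts give a finite constant, and the leftover — if any term survives that is not linear in $d_\cX(x,x')$ — is dominated by $\int d_\cX(x,x')^{-1}\, d\mu_f < \infty$, exactly the sparsity hypothesis in the statement. Taking expectation over the product distribution and using compactness (so $d_\cX$ is bounded) finishes the bound: $\Exp \sigma_{\hat f}^1(x,y) < \infty$.

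I expect the main obstacle to be handling the measure-theoretic bookkeeping around $\mu_f$ and the product distribution correctly: the $k$-volatility is an expectation over $(x',y') \sim \cD_{\cX,\cY}$, but the expressiveness definition controls $\int \cL(\hat f(x),y)\, d\mu_f$, where $\mu_f$ is absolutely continuous with respect to $\cD_{\cX,\cY}$ and concentrated on $\graph(f)$; one must be careful that the integrand and the measure against which knowledge continuity is assessed are compatible, and that "$f(x) \ne f(x')$" conditioning in $\sigma^k$ does not cause a degenerate division. The subtlety is that $\sigma_{\hat f}^1$ involves $d_\cX(x,x')$ in the denominator rather than $d_\cX(\hat f(x), \hat f(x'))$ — which is what makes the identity decomposition so convenient — but one must still argue the set where $\hat f(x) = \hat f(x')$ while $x \ne x'$ is handled correctly (on that set $d_\cX(x,x') > 0$, so no blow-up occurs, and on the diagonal $x = x'$ the point is excluded). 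A secondary technical point is making the approximation argument for $\cU$ genuinely constructive enough that $\cU$ is a bona fide subset of $\cY^\cX$ satisfying the universal approximation definition, including the existence of the dominating measures $\mu_{\hat f}$; since Lipschitz functions on a compact space are uniformly bounded and the approximation is uniform, this should go through without difficulty.
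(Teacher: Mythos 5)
Your proposal is correct and follows essentially the same route as the paper: Stone--Weierstrass density of Lipschitz functions, Lipschitzness of $\cL$ to turn uniform approximation into a small loss integral, the identity decomposition $h_1=\id_\cX$, a triangle-inequality split of $\Delta\cL$, and the sparsity hypothesis $\int d_\cX(x,x')^{-1}d\mu_f<\infty$ to absorb the label term. One small caution: the target $f\in\cF$ is only continuous, not Lipschitz, so your primary bound $d_\cY(y,y')\leq K_f d_\cX(x,x')$ is not available; your stated fallback --- bounding $d_\cY(y,y')$ by the diameter of the compact space and invoking the sparsity hypothesis --- is exactly what the paper does and is the step to keep.
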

\begin{proof}
    By Lem.~\ref{lem:lipschitz-dense}, the set of Lipschitz continuous functions $\sL$ is dense in the set of all continuous functions $\sC$ with respect to the uniform metric. By Lem.~\ref{lem:loss-to-dist}, since $|\cL(x,y)| \leq Kd(x,y)$, if $\sup_{x \in \cX} d(f(x), g(x)) < \epsilon$, then for any probability measure $\PP$ over $\cX$, 
    \begin{align*}
        \int \cL(f(x),g(x))~d\PP \leq \int |\cL(f(x), g(x))|~d\PP \leq K\epsilon,
    \end{align*}
    where $K$ is the Lipschitz constant of $\cL$. 
     This implies that for any sequence $\epsilon_1,\epsilon_2,\ldots$ we can choose Lipschitz continuous functions $f_1,f_2,\ldots$ with Lipschitz constants $C_1,C_2,\ldots$ such that $\int \cL(f_n(x),y)~d\mu_f < \epsilon_n$. It remains to show that each of these functions are in fact knowledge continuous. Since $\cX$ is a metric space, we consider the trivial metric decomposition of our sequence of functions (see Remark~\ref{remark:identity-metric-decomp}). Specifically, we denote $h_1 = \id_\cX$ and proceed to bound $\Exp\sigma_f^1(x,y)$.
    \begin{align}
        \Exp\sigma_{f_n}^1(x,y) &= \iint \frac{\Delta \cL_{f_n}^{(x,y)}(x',y')}{d_\cX(x,x')}~(d\mu_f \times d\mu_f), \\
        &\leq \iint \frac{|\cL(f_n(x),y) - \cL(f_n(x'),y) + \cL(f_n(x'),y) - \cL(f_n(x'),y')|}{d_\cX(x,x')}~(d\mu_f \times d\mu_f), \\
        &\leq \iint \frac{|\cL(f_n(x),y) - \cL(f_n(x'),y)|}{d_\cX(x,x')}~d(\mu_f\times\mu_f) \\  
        &\qquad\qquad+ \iint\frac{|\cL(f_n(x'),y) - \cL(f_n(x'),y')|}{d(x,x')}~(d\mu_f \times d\mu_f), \\
        &\leq \iint \frac{Kd_\cX(f(x), f(x'))}{d_\cX(x,x')}~d(\mu_f\times\mu_f) + \iint\frac{Kd_\cX(y,y')}{d_\cX(x,x')}~d(\mu_f\times\mu_f),
        \intertext{By Lem.~\ref{lem:compact}, any compact metric space is bounded. So, let $(\cX, d)$ be bounded by $b > 0$. It follows that $d_\cX(y,y') \leq b$ and}
        &\leq \iint KC_n~d(\mu_f\times \mu_f) + Kb\int \frac{1}{d_\cX(x,x')}~d\mu_f, \\
        &= KC_n + Kb\int d_\cX(x,x')^{-1}d\mu_f,
    \end{align}
    By assumption $\int d_\cX(x,x')^{-1}d\mu_f < \infty$ and the statement of the proposition follows.
\end{proof}

%%%%%%%% MAIN LEMMAS %%%%%%%%
\subsection{Technical Lemmas}
\begin{lemma}\label{lem:loss-to-dist}
If $\cL(\cdot, \cdot)$ is Lipschitz continuous in both coordinates, then for any $x,x' \in \cX$, $|\cL(x,x')| \leq Kd(x,x')$, where $K$ is the Lipschitz constant of $\cL$. 
\end{lemma}
\begin{proof}
By Lipschitz continuity,
\begin{align*}
    | \cL(x,x') - \cL(x,x) | &\leq Kd(x,x'), \\
    | \cL(x,x') | &\leq Kd(x,x'). 
\end{align*}
\end{proof}

\begin{lemma}\label{lem:separate}
The set of all Lipschitz continuous functions from $\cX \to \cX$ separates all points in $\cX$.
\end{lemma}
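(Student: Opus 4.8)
\textbf{Proof proposal for Lemma~\ref{lem:separate}.}

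The plan is to exhibit, for any two distinct points $x_0, x_1 \in \cX$, a single Lipschitz continuous function $g: \cX \to \cX$ with $g(x_0) \neq g(x_1)$; in fact the natural candidate is a real-valued distance function, which we then need to promote to a function landing in $\cX$. First I would fix $x_0 \neq x_1$ and consider the map $\varphi: \cX \to \R$ given by $\varphi(z) = d_\cX(z, x_0)$. This is $1$-Lipschitz by the reverse triangle inequality $|d_\cX(z,x_0) - d_\cX(z',x_0)| \leq d_\cX(z,z')$, and it separates $x_0$ from $x_1$ since $\varphi(x_0) = 0$ while $\varphi(x_1) = d_\cX(x_0,x_1) > 0$.

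The remaining issue is that the lemma asks for functions $\cX \to \cX$, not $\cX \to \R$, so I need to convert $\varphi$ into a self-map of $\cX$ without destroying either Lipschitzness or the separation property. The clean way: pick any two points $a, b \in \cX$ (if $\cX$ is a singleton the statement is vacuous, so assume $|\cX| \geq 2$ and take $a \neq b$), and define $g: \cX \to \cX$ by interpolating between $a$ and $b$ along a geodesic-like path parametrized by $\varphi$. If $\cX$ has enough structure (e.g.\ it is a convex subset of a normed space, as in the vision setting $(\cX,d_\cX) = (\R^n, \ell_p)$) one can literally set
\begin{equation}
g(z) = a + \min\!\left(1, \tfrac{\varphi(z)}{\varphi(x_1)}\right)(b - a),
\end{equation}
which is Lipschitz as a composition of the Lipschitz map $z \mapsto \varphi(z)$, the $1$-Lipschitz clamp $t \mapsto \min(1,t/\varphi(x_1))$, and the affine map into $\cX$; and it separates $x_0, x_1$ because $g(x_0) = a \neq b = g(x_1)$ when $\varphi(x_1) = d_\cX(x_0,x_1)$ is at least the clamping threshold — and otherwise one simply rescales so the threshold is hit. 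For a general metric space $\cX$ without linear structure, I would instead note that a single separating real-valued Lipschitz function already suffices for all the downstream uses (Stone--Weierstrass applied to $C(\cX,\R)$ in the proof of Prop.~\ref{prop:expressiveness-cont} only needs the real-valued algebra to separate points), so the lemma can be read/proved in that form; alternatively, bundle all such coordinate functions into a Lipschitz embedding of $\cX$ into $\ell^\infty(\cX)$ via the Kuratowski embedding $z \mapsto (d_\cX(z,w))_{w \in \cX}$ and compose.

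The main obstacle I anticipate is purely a matter of matching the statement to its intended use: proving the $\R$-valued separation is a one-line triangle-inequality argument, but if the paper genuinely needs \emph{self}-maps $\cX \to \cX$ then one must supply the target geometry (convexity / geodesics / an embedding) to carry the scalar separating function back into $\cX$, and the cleanest route is to invoke the Kuratowski isometric embedding $\cX \hookrightarrow \ell^\infty(\cX)$, whose coordinates are exactly the $1$-Lipschitz functions $z \mapsto d_\cX(z,w)$, and observe these already separate points. I would present the real-valued version as the core content and remark that the self-map version follows whenever $\cX$ carries the mild additional structure present in all cases of interest.
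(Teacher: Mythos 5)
You have missed the one-line argument the paper uses: the identity map $\id_\cX:\cX\to\cX$ is itself $1$-Lipschitz, and it separates every pair of distinct points trivially, since $\id_\cX(x)=x\neq x'=\id_\cX(x')$. No distance functions, no convexity or geodesic structure, and no Kuratowski embedding are needed; the family of Lipschitz self-maps separates points because it contains the identity.

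This matters because your proposal, as written, does not actually establish the statement in the generality in which it is stated (and used in Cor.~\ref{lem:lipschitz-dense}, where $\cX$ is only a compact metric space). Your real-valued separating function $\varphi(z)=d_\cX(z,x_0)$ is correct but proves a different statement; your promotion of $\varphi$ to a self-map requires linear/convex structure on $\cX$ that is not assumed, and your fallback via the embedding $\cX\hookrightarrow\ell^\infty(\cX)$ produces maps into $\ell^\infty(\cX)$, not into $\cX$. So the gap is not a subtle technical one but a missed elementary observation: the lemma is about self-maps, and the simplest self-map already does the job.
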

\begin{proof}
The identity function is 1-Lipschitz continuous and it also separates all points in $\cX$. 
\end{proof}

\begin{corollary}\label{lem:lipschitz-dense}
Let $\sC \subset \cX^\cX$ be the set of all continuous functions from $\cX \to \cX$ and $\sL \subset \cX^\cX$ be the set of all Lipschitz continuous functions from $\cX \to \cX$. If $\cX$ is compact, then $\sL$ is dense in $\sC$ with respect to the uniform metric: $d'(f,g) = \sup_{x \in \cX} d(f(x), g(x))$.
\end{corollary}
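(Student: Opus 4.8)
The plan is to run a Stone--Weierstrass-style argument in disguise: reduce the approximation to the scalar case, and then separately force the approximant to be a self-map of $\cX$ rather than merely a map into an ambient space. First I would record two easy reductions. Since $\cX$ is compact, any $f \in \sC$ is uniformly continuous and has compact image, and $(\cX,d)$ is bounded (Lem.~\ref{lem:compact}); so it suffices, given $\epsilon > 0$, to produce \emph{some} Lipschitz map into a convenient ambient Banach space that is uniformly $\epsilon$-close to $f$, and then to push that map back into $\cX$. Accordingly, fix a Kuratowski-type isometric embedding $\iota\colon \cX \hookrightarrow E$ into a Banach space $E$ --- e.g. $x \mapsto d(x,\cdot) \in C(\cX)$, legitimate because $\cX$ is bounded, or simply $\cX \subseteq \R^n$ with $E = \R^n$ in the vision setting.

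Next I would prove scalar density: every bounded $\varphi \in C(\cX,\R)$ is a uniform limit of Lipschitz functions $\cX \to \R$. Two interchangeable routes are available: (i) \emph{Stone--Weierstrass} --- the Lipschitz functions $\cX \to \R$ form a subalgebra of $C(\cX,\R)$ (closed under sums, and, since all such functions are bounded on the compact $\cX$ by Lem.~\ref{lem:compact}, under products), it contains the constants, and it separates points via $x \mapsto d(x,a)$, which is exactly the separation phenomenon recorded in Lem.~\ref{lem:separate}; hence it is dense; or (ii) the explicit Pasch--Hausdorff/Moreau envelope $\varphi^{(n)}(x) = \inf_{y\in\cX}\bigl(\varphi(y) + n\,d(x,y)\bigr)$, which is $n$-Lipschitz, lies below $\varphi$, and converges to $\varphi$ uniformly by uniform continuity and boundedness. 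Applying this to the coordinates of $\iota\circ f$ (finitely many when $E=\R^n$; in general, to $\iota\circ f$ together with a Lipschitz partition of unity subordinate to a finite $\delta$-net of $\cX$, weighting the values $\iota(f(p_i))$) yields a Lipschitz $G\colon \cX \to E$ with $\sup_{x}\lVert \iota(f(x)) - G(x)\rVert < \epsilon$.

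The only non-routine step --- and the one I expect to cost the most care --- is that $G$ need not take values in $\iota(\cX)$. To repair this I would post-compose with a Lipschitz retraction of a neighborhood of $\iota(\cX)$ onto $\iota(\cX)$. This is immediate when $\cX$ is a compact convex subset of a normed (or Hilbert) space, the case pertinent to the vision applications: the nearest-point projection $\pi$ onto $\iota(\cX)$ is globally $1$-Lipschitz, so $g \coloneqq \iota^{-1}\circ\pi\circ G$ is a Lipschitz self-map of $\cX$, and since $\iota(f(x)) \in \iota(\cX)$ we get $\lVert \iota(f(x)) - \pi(G(x))\rVert \le \lVert \iota(f(x)) - G(x)\rVert < \epsilon$, whence $d'(f,g) = \sup_x d(f(x),g(x)) < 2\epsilon$ by the triangle inequality and the isometry of $\iota$. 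More generally it suffices that $\iota(\cX)$ be a Lipschitz neighborhood retract of $E$ (true, e.g., for compact convex sets and for compact $C^1$ submanifolds), since $G$ lands in any prescribed neighborhood of $\iota(\cX)$ once $\epsilon$ is small; for a \emph{completely arbitrary} compact metric $\cX$ this retraction hypothesis is where the genuine content sits, and I would either adopt it explicitly or argue ad hoc from the structure of $\cX$ at hand. Letting $\epsilon \to 0$ then gives density of $\sL$ in $\sC$ for the uniform metric $d'$.
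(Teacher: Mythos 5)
Your proposal is correct where it closes, but it takes a genuinely different---and considerably more careful---route than the paper, whose entire proof is the single line ``this follows from Lem.~\ref{lem:separate} and the Stone--Weierstrass theorem.'' That invocation is problematic as stated: Stone--Weierstrass concerns subalgebras of the \emph{real-valued} continuous functions $C(\cX,\R)$, whereas $\sL$ is a set of self-maps of a metric space carrying no algebra structure, and Lem.~\ref{lem:separate} only observes that the identity separates points. Your decomposition into (i) an isometric Kuratowski embedding $\iota\colon\cX\hookrightarrow E$, (ii) scalar density of Lipschitz functions in $C(\cX,\R)$ (where Stone--Weierstrass, or equivalently the Pasch--Hausdorff envelope $\inf_y(\varphi(y)+n\,d(x,y))$, genuinely applies), and (iii) a Lipschitz retraction of a neighborhood of $\iota(\cX)$ back onto $\iota(\cX)$ is the honest version of the argument, and step (iii) is exactly where the mathematical content lives.

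You are also right that step (iii) is not free: for a completely arbitrary compact metric space $\cX$ there is no guarantee that $\iota(\cX)$ is a Lipschitz neighborhood retract of $E$, so neither your argument nor the paper's establishes the corollary in the generality in which it is stated. This is a gap in the paper as much as in your proposal; the difference is that you flag it explicitly. In the regime where the corollary is actually used (Prop.~\ref{prop:expressiveness-cont}, and concretely the continuous/vision setting where one may take $\cX$ to be a compact convex subset of a normed space with the nearest-point projection as a $1$-Lipschitz retraction), your proof is complete, and the factor-of-$2$ loss from the triangle inequality is harmless since $\epsilon$ is arbitrary. To recover full generality one would need either to add the Lipschitz-neighborhood-retract hypothesis to the statement or to replace step (iii) by a construction intrinsic to $\cX$; as written, both proofs should be read as establishing the corollary only under that additional assumption.
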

\begin{proof}
This follows directly from Lem.~\ref{lem:separate} and the Stone-Weierstrass theorem~\cite{stone1948generalized}.
\end{proof}

\begin{lemma}\label{lem:compact}
Any compact metric space $(\cX, d)$ is also bounded. 
\end{lemma}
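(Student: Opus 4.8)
The plan is to exploit compactness via an open cover by unit balls and then control the diameter with the triangle inequality. First I would dispose of the trivial case $\cX = \emptyset$ (then $\sup_{z,z'} d(z,z')$ is vacuously $-\infty$, so $\cX$ is bounded by any $B > 0$), and assume henceforth $\cX \neq \emptyset$. Next I would consider the open cover $\{B(x,1) : x \in \cX\}$ of $\cX$ by open balls of radius $1$. By compactness this admits a finite subcover $B(x_1,1), \ldots, B(x_n,1)$ with $n \geq 1$.

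The key step is then to set $M \coloneqq \max_{i,j \in [n]} d(x_i, x_j)$, which is finite since it is a maximum over the finite set $[n] \times [n]$. For arbitrary $z, z' \in \cX$, pick indices $i, j$ with $z \in B(x_i,1)$ and $z' \in B(x_j,1)$; the triangle inequality gives
\begin{equation*}
    d(z,z') \leq d(z, x_i) + d(x_i, x_j) + d(x_j, z') < 1 + M + 1 = M + 2.
\end{equation*}
Taking the supremum over $z, z' \in \cX$ yields $\sup_{z,z' \in \cX} d(z,z') \leq M + 2 < M + 3$, so $(\cX, d)$ is bounded by $B \coloneqq M + 3$ in the sense of the paper's definition.

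I do not anticipate any genuine obstacle here — the argument is entirely routine. The only points requiring a moment's care are ensuring the subcover is nonempty so that $M$ is well-defined, and using the strict inequality in the definition of ``bounded by $B$'' (which is why I pad the constant to $M+3$ rather than $M+2$).
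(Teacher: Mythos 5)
Your proof is correct, but it takes a different route from the paper. You argue directly from the covering definition of compactness: extract a finite subcover by unit balls and bound the diameter by $M+2$ via the triangle inequality, yielding an explicit bound $B = M+3$ that respects the paper's strict-inequality convention for ``bounded by $B$.'' The paper instead proves the contrapositive using sequential compactness: assuming $\sup_{x,x'\in\cX} d(x,x') = \infty$, it fixes $x_1$ and picks points $x_n$ with $d(x_n,x_1) > n$, then observes that this sequence can have no convergent subsequence, so $\cX$ is not compact. Your argument is slightly longer but constructive (it exhibits a concrete bound) and works straight from the open-cover definition; the paper's is shorter but leans on the equivalence of compactness and sequential compactness in metric spaces and, implicitly, on the triangle-inequality fact that an infinite diameter forces $d(\cdot,x_1)$ to be unbounded. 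Both are standard and fully rigorous; the only care points in yours — nonemptiness of the subcover and the strict inequality in the boundedness definition — are handled explicitly.
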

\begin{proof}
By way of contraposition suppose that $(\cX, d)$ is not bounded. Then, $\sup_{x,x'\in\cX} d(x,x') = \infty$. Pick $x_1\in \cX$ arbitrarily and pick $x_n$ for $n \in \Z^+, n > 1$ such that $d(x_n, x_1) > n$. Clearly, there does not exist a convergent subsequence of the sequence $x_1,x_2,\ldots$. Thus, $(\cX,d)$ cannot be compact.
\end{proof}

\section{Proof of Equivalence Between Lipschitz Continuity and Knowledge Continuity}\label{appendix:equiv-lipschitz-knowledge}

\begin{proposition*}(See Prop.~\ref{prop:kd-to-lip})
    Suppose that $(\cX, d_\cX), (\cY, d_\cY)$ are metric spaces. Let the first $n$ metric decompositions of $f: \cX\to\cY$ be $K_i$-Lipschitz continuous, for $i \in [n]$. If $f$ is $\epsilon$-knowledge continuous with respect to the $n$\textsuperscript{th} hidden layer and $d_\cY(f(x),f(x')) \leq \eta \Delta\cL_f^{(x,y)}(x',y)$ for all $x,x' \in \cX$, $y \in \cY$, and some $\eta > 0$, then $f$ is Lipschitz continuous in expectation. That is, 
    \begin{equation}
        \Exp_{(x,y),(x',y')\sim \cD_{\cX,\cY}} \frac{d_\cY(f(x),f(x'))}{d_\cX(x,x')} \leq \epsilon\eta \prod_{j=1}^n K_j.
    \end{equation}
\end{proposition*}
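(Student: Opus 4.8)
The plan is to collapse the statement to the definition of $\epsilon$-knowledge continuity through a single chain of pointwise estimates, and then integrate against $\cD_{\cX,\cY}\times\cD_{\cX,\cY}$ exactly as in the proof of Thm.~\ref{thm:robustness}. First I would dispose of degenerate pairs: for $(x,y),(x',y')$ with $f(x)=f(x')$ the integrand $d_\cY(f(x),f(x'))/d_\cX(x,x')$ is $0$, so it suffices to bound the expectation over pairs with $f(x)\neq f(x')$; for such pairs one has $x\neq x'$, hence $d_\cX(x,x')>0$, and also $f^n(x)\neq f^n(x')$ because $f$ factors through $f^n$, hence $d_n(f^n(x),f^n(x'))>0$, so every denominator below is strictly positive. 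Next, composing the first $n$ hidden layers, each $K_j$-Lipschitz, the projection onto the $n$-th hidden representation space is $\prod_{j=1}^n K_j$-Lipschitz, so
\[
d_n(f^n(x),f^n(x')) \le \Big(\textstyle\prod_{j=1}^n K_j\Big)\, d_\cX(x,x'),
\qquad\text{hence}\qquad
\frac{1}{d_\cX(x,x')} \le \frac{\prod_{j=1}^n K_j}{d_n(f^n(x),f^n(x'))}.
\]
For the numerator, the standing hypothesis bounds $d_\cY(f(x),f(x'))$ by $\eta\,\Delta\cL_f^{(x,y)}(x',y')$. Multiplying the two estimates, every such pair satisfies
\[
\frac{d_\cY(f(x),f(x'))}{d_\cX(x,x')}
\le \eta\Big(\textstyle\prod_{j=1}^n K_j\Big)\,
\frac{\Delta\cL_f^{(x,y)}(x',y')}{d_n(f^n(x),f^n(x'))}.
\]

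Then I would take $\Exp_{(x,y),(x',y')\sim\cD_{\cX,\cY}}$ of both sides. On the right, Fubini's theorem together with the tower property — the same manoeuvre used in the proof of Thm.~\ref{thm:robustness} — rewrites $\Exp_{(x,y),(x',y')}\big[\Delta\cL_f^{(x,y)}(x',y')/d_n(f^n(x),f^n(x'))\big]$ as $\Exp_{(x,y)\sim\cD_{\cX,\cY}}[\sigma_f^n(x,y)]$, which is strictly less than $\epsilon$ by $\epsilon$-knowledge continuity at the $n$-th hidden layer. Since the left-hand side of the displayed inequality is nonnegative and vanishes on the pairs we discarded, integrating it over the full product measure is still bounded by $\eta\big(\prod_{j=1}^n K_j\big)\Exp[\sigma_f^n] < \epsilon\eta\prod_{j=1}^n K_j$, which is the claim.

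The chain of inequalities is elementary, so the work is in the bookkeeping rather than in any idea. The two things to pin down are: (i) that Fubini genuinely applies — one should argue on $\cD_{\cX,\cY}\times\cD_{\cX,\cY}$ with the nonnegative integrand, as in the appendix proof of Thm.~\ref{thm:robustness}, and ensure the pairs where $d_\cX$ or $d_n$ vanish are treated with the same convention used to define $\sigma_f^n$; and (ii) matching the loss difference in the co-Lipschitz hypothesis with the one appearing in $\sigma_f^n$, i.e., applying it with the labels $y,y'$ of the two sampled points so that the numerator is exactly $\Delta\cL_f^{(x,y)}(x',y')$. Point (ii) is the only place a subtlety could bite: if only a common-label form of the hypothesis is available, one would need to absorb the discrepancy with a Lipschitz-in-the-label bound on $\cL$ (as assumed in Prop.~\ref{prop:lip-to-kd}), and establishing that would be the main obstacle in that variant.
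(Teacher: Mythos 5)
Your chain of estimates is the paper's argument run in reverse (the paper lower-bounds $\Exp\sigma_f^n$ until it becomes $\tfrac{1}{\eta}\prod_j K_j^{-1}$ times the target expectation; you upper-bound the target pointwise until it becomes $\eta\prod_j K_j$ times the integrand of $\sigma_f^n$), and the Lipschitz-composition and Fubini steps are fine. But the step you flag as ``point (ii)'' is not a subtlety you can defer --- it is a genuine gap, and the variant you worry about is exactly the situation at hand. The hypothesis of the proposition is the \emph{common-label} bound $d_\cY(f(x),f(x'))\leq \eta\,\Delta\cL_f^{(x,y)}(x',y)=\eta\,|\cL(f(x),y)-\cL(f(x'),y)|$; your central displayed inequality instead uses $\eta\,\Delta\cL_f^{(x,y)}(x',y')=\eta\,|\cL(f(x),y)-\cL(f(x'),y')|$ with the two independently sampled labels. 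These are different quantities, and the latter can be strictly smaller than the former (indeed zero: pick $y'$ with $\cL(f(x'),y')=\cL(f(x),y)$ while $f(x)\neq f(x')$), so the pointwise bound you integrate does not follow from the stated hypothesis, and no Lipschitz-in-the-label assumption on $\cL$ is available in this proposition to absorb the discrepancy.

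The paper closes this hole differently: since the integrand defining $\Exp\sigma_f^n$ is nonnegative, it first \emph{restricts} the expectation to the sub-event $y'=y$, obtaining $\Exp\sigma_f^n(x,y)\geq \Exp_{(x,y)}\Exp_{(x',y'),\,y'=y}\bigl[\Delta\cL_f^{(x,y)}(x',y)/d_n(f^n(x),f^n(x'))\bigr]$; on that sub-event the common-label hypothesis applies verbatim, and after substituting $d_\cY(f(x),f(x'))/\eta$ and the Lipschitz bound on the denominator, the resulting integrand $d_\cY(f(x),f(x'))/\bigl(\eta\prod_j K_j\,d_\cX(x,x')\bigr)$ no longer references the labels, so the restriction to $y'=y$ is then dropped to recover the full expectation over $\cD_{\cX,\cY}\times\cD_{\cX,\cY}$. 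To repair your write-up you would need to insert this restrict-then-release manoeuvre (or add a Lipschitz-in-label assumption on $\cL$, which would change the statement); as written, the proof does not go through from the given hypotheses.
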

\begin{proof}
    We proceed to bound the knowledge continuity of $f$ from below. 
    \begin{align}
        \Exp \sigma_f^k(x,y) &\geq \Exp_{(x,y)\sim \cD_{\cX,\cY}}\Exp_{\substack{{(x',y')\sim \cD_{\cX,\cY}} \\ y' = y}} \frac{\Delta\cL_f^{(x,y)}(x',y)}{d_k(f^k(x),f^k(x'))}, \label{eq:1}\\
        &\geq \Exp_{(x,y)\sim \cD_{\cX,\cY}}\Exp_{\substack{{(x',y')\sim \cD} \label{eq:2}\\ y' = y}} \frac{\Delta \cL_f^{(x,y)}(x',y)}{\prod_{j=1}^n K_j d_\cX(x',x)}, \\
        &\geq \Exp_{(x,y)\sim \cD_{\cX,\cY}}\Exp_{\substack{{(x',y')\sim \cD} \\ y' = y}} \frac{\frac{1}{\eta} d_\cY(f(x),f(x'))}{\prod_{j=1}^n K_j d_\cX(x,x')}, \label{eq:3}\\ 
        &= \Exp_{(x,y),(x',y')\sim\cD_{\cX,\cY}} \frac{\frac{1}{\eta} d_\cY(f(x),f(x'))}{\prod_{j=1}^n K_j d_\cX(x,x')}\label{eq:4}.
    \end{align}
    Eq.~\ref{eq:1} comes from the fact that we take the expectation only over pairs of points $(x,y), (x',y')$ where $y = y'$ and also because the summand is always nonnegative. Then, we inductively apply the definition of $K_i$-Lipschitz continuity to yield Eq.~\ref{eq:2}. Eq.~\ref{eq:3} follows directly from the assumption in the statement of the proposition. Since the expression in Eq.~\ref{eq:3} now has no dependence on the label distribution, we may expand the expectation which results in Eq.~\ref{eq:4}. Lastly, by the definition of $\epsilon$-knowledge continuity, 
    \begin{align*}
        \epsilon &\geq \Exp_{(x,y),(x',y')\sim\cD_{\cX,\cY}} \frac{\frac{1}{\eta} d_\cY(f(x),f(x'))}{\prod_{j=1}^n K_j d_\cX(x,x')}, \\
        \epsilon\eta \prod_{j=1}^n K_j &\geq  \Exp_{(x,y),(x',y')\sim\cD_{\cX,\cY}} \frac{d_\cY(f(x),f(x'))}{d_\cX(x,x')},
    \end{align*}
    and this concludes the proof of the proposition.
\end{proof}

To prove Cor.~\ref{cor:robustness-lip-kd}, we need the following auxiliary result from~\cite{zhang2022rethinking}. 
\begin{proposition}[See~\cite{zhang2022rethinking}]\label{prop:rethinking}
    For a neural network $f: \R^n \to \R^K$ with Lipschitz constant $L$ under $\ell_p$-norm, define the resulting classifier $g$ as $g(x) \coloneqq \argmax_{k\in[K]} f_k(x)$ for an input $x$. Then, $g$ is provably robust under perturbations $\lVert \delta \rVert_p < \frac{\sqrt[p]{2}}{2L}\margin(f(x))$, i.e. 
    \begin{equation}
        g(x + \delta) = g(x) \qquad \text{for all}\, \lVert \delta\rVert_p < \frac{\sqrt[p]{2}}{2L}\margin(f(x)).
    \end{equation}
    Here, $\margin(f(x))$ is the difference between the largest and second largeset output logit.
\end{proposition}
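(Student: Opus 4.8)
The plan is to reduce the stated robustness guarantee to a pointwise comparison of output logits and then control the relevant logit differences via H\"older's inequality against the dual exponent $q$ defined by $1/p + 1/q = 1$. Fix an input $x$ and set $c \coloneqq g(x) = \argmax_{k \in [K]} f_k(x)$. If $\margin(f(x)) = 0$ the admissible perturbation ball is empty and there is nothing to prove, so assume $\margin(f(x)) > 0$; in particular the maximizing coordinate $c$ is unique, so $g$ is unambiguously defined at $x$. Since $g(x+\delta) = c$ holds as soon as $f_c(x+\delta) > f_k(x+\delta)$ for every $k \neq c$, it suffices to establish this strict inequality for all $\delta$ with $\lVert \delta \rVert_p < \tfrac{\sqrt[p]{2}}{2L}\margin(f(x))$.

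First I would introduce, for each $k \neq c$, the scalar logit-gap $m_k(z) \coloneqq f_c(z) - f_k(z) = \langle f(z),\, e_c - e_k \rangle$. By the definition of the margin as the difference between the largest and the second-largest logit, $m_k(x) = f_c(x) - f_k(x) \geq \margin(f(x))$ for every $k \neq c$. Next I would quantify how far $m_k$ can move under a perturbation: for all $z, z'$,
\[
  |m_k(z) - m_k(z')| = |\langle f(z) - f(z'),\, e_c - e_k \rangle| \leq \lVert f(z) - f(z') \rVert_p \, \lVert e_c - e_k \rVert_q \leq 2^{1/q}\, L \, \lVert z - z' \rVert_p,
\]
using H\"older's inequality, the $\ell_p$-Lipschitzness of $f$ (with the $\ell_p$-norm on both input and output), and the elementary identity $\lVert e_c - e_k \rVert_q = (1+1)^{1/q} = 2^{1/q}$ valid for $c \neq k$.

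Then I would combine the two bounds. For any $\delta$ in the admissible ball,
\[
  f_c(x+\delta) - f_k(x+\delta) = m_k(x+\delta) \geq m_k(x) - |m_k(x+\delta) - m_k(x)| > \margin(f(x)) - 2^{1/q} L \cdot \frac{\sqrt[p]{2}}{2L}\,\margin(f(x)).
\]
The decisive arithmetic fact is that $2^{1/q}\cdot 2^{1/p} = 2^{1/p + 1/q} = 2$, so the right-hand side collapses to $\margin(f(x)) - \margin(f(x)) = 0$; hence $f_c(x+\delta) > f_k(x+\delta)$ for every $k \neq c$, and therefore $g(x+\delta) = c = g(x)$, as claimed.

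I do not anticipate a genuine obstacle: the single point that needs care is the constant $\sqrt[p]{2}$ itself, which is why one must bound the logit gap through the sharp H\"older pairing against $e_c - e_k$ (dual norm $2^{1/q}$) rather than through the cruder estimate $|u_c - u_k| \leq |u_c| + |u_k| \leq 2\lVert u \rVert_p$, which would only yield the radius $\tfrac{1}{2L}\margin(f(x))$. As a sanity check one can verify the endpoints: $p = \infty$ gives $q = 1$ and $\sqrt[p]{2} = 1$, while $p = 1$ gives $q = \infty$ and $\sqrt[p]{2} = 2$, both reproducing the familiar certified radii, which confirms the constant is correct over the whole range $1 \leq p \leq \infty$.
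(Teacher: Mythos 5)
Your proof is correct. Note that the paper does not prove this proposition at all --- it is imported verbatim as an auxiliary result from the cited work of Zhang et al. --- so there is no in-paper argument to diverge from; your H\"older pairing of the Lipschitz bound on $f(x+\delta)-f(x)$ against $e_c - e_k$ (dual norm $2^{1/q}$, so that $2^{1/q}\cdot 2^{1/p}=2$) is exactly the standard sharp argument behind the stated radius $\tfrac{2^{1/p}}{2L}\margin(f(x))$, and your endpoint checks at $p=1,\infty$ confirm the constant.
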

 
\begin{corollary*}[See Cor.~\ref{cor:robustness-lip-kd}]
    Suppose that assumptions of Prop.~\ref{prop:kd-to-lip} are true. And also assume that $(\cX,d_\cX) = (\R^n,\ell_p)$, $(\cY, d_\cY) = (\R^m, \ell_p)$, for $1 \leq p \leq \infty$. Define a classifier from $f: \R^n \to \R^m$, $g$, where $g(x) \coloneqq \argmax_{k\in[m]} f_k(x)$ for any $x \in \R^n$. Then, with probability $1 - \frac{\epsilon\eta}{t}\prod_{j=1}^n K_j$, $g(x) = g(x + \delta)$ for all $\lVert \delta \rVert_p < \frac{\sqrt[p]{2}}{2t}\margin(f(x))$ and $t > 0$. 
    $f_k(x)$ is the $k$\textsuperscript{th} coordinate of $f(x)$ and $\margin(f(x))$ denotes the difference between the largest and second-largest output logits.
\end{corollary*}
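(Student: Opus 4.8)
The plan is to derive the corollary purely by plugging the ``Lipschitz-in-expectation'' conclusion of Proposition~\ref{prop:kd-to-lip} into the deterministic certified radius of Proposition~\ref{prop:rethinking}, using Markov's inequality as the bridge between the two. Since Proposition~\ref{prop:kd-to-lip} already bounds the \emph{expected} local slope of $f$, the only new ingredient is to convert that expectation bound into a high-probability statement about the effective Lipschitz modulus, and then feed that modulus into Proposition~\ref{prop:rethinking}.

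Concretely, I would proceed in three steps. \textbf{Step 1.} Apply Proposition~\ref{prop:kd-to-lip} under the present hypotheses, specializing $(\cX,d_\cX)=(\R^n,\ell_p)$ and $(\cY,d_\cY)=(\R^m,\ell_p)$, to get
\[
\Exp_{(x,y),(x',y')\sim\cD_{\cX,\cY}}\frac{\lVert f(x)-f(x')\rVert_p}{\lVert x-x'\rVert_p}\leq \epsilon\eta\prod_{j=1}^n K_j.
\]
\textbf{Step 2.} Let $Z\coloneqq \lVert f(x)-f(x')\rVert_p/\lVert x-x'\rVert_p\geq 0$ be this (random) local slope. Since $t>0$, Markov's inequality gives $\PP[Z\geq t]\leq t^{-1}\,\epsilon\eta\prod_{j=1}^n K_j$; equivalently, with probability at least $1-\tfrac{\epsilon\eta}{t}\prod_{j=1}^n K_j$ we have $Z<t$, i.e.\ writing $x'=x+\delta$, $\lVert f(x+\delta)-f(x)\rVert_p< t\,\lVert\delta\rVert_p$, so $f$ behaves like a $t$-Lipschitz map around the drawn point $x$. \textbf{Step 3.} On this event, invoke Proposition~\ref{prop:rethinking} with $L$ replaced by $t$ for the classifier $g(x)=\argmax_{k\in[m]}f_k(x)$: the gap between the top two logits can change by at most $2^{1-1/p}\lVert f(x+\delta)-f(x)\rVert_p< 2^{1-1/p}t\lVert\delta\rVert_p$, which stays strictly below $\margin(f(x))$ exactly when $\lVert\delta\rVert_p<\tfrac{\sqrt[p]{2}}{2t}\margin(f(x))$; hence $g(x+\delta)=g(x)$ throughout that ball, and this holds with the stated probability.

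The main obstacle is the alignment of quantifiers. Markov's inequality controls $Z$ for a single random pair $(x,x')$ — equivalently, for the drawn $x$ and one perturbation direction — whereas the conclusion quantifies over \emph{all} $\delta$ in a ball around $x$. A fully rigorous version must reinterpret $t$ as a bound on the \emph{local Lipschitz modulus} $\sup_{\delta:\lVert\delta\rVert_p<r}\lVert f(x+\delta)-f(x)\rVert_p/\lVert\delta\rVert_p$ at $x$, and verify that this modulus is $<t$ with probability at least $1-\tfrac{\epsilon\eta}{t}\prod_j K_j$ — which needs either Fubini applied to the Step~1 bound after restricting the inner expectation to a neighbourhood, or a continuity/union-bound argument over the ball — rather than the heuristic ``swap $L$ for $t$.'' Settling this quantifier issue is the delicate point; once it is granted, the rest is a direct substitution into Propositions~\ref{prop:kd-to-lip} and~\ref{prop:rethinking}, and I would defer the full details to the appendix.
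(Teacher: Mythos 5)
Your proposal follows the paper's proof exactly: apply Prop.~\ref{prop:kd-to-lip}, use Markov's inequality on the random slope to get the $1-\frac{\epsilon\eta}{t}\prod_j K_j$ event, then invoke Prop.~\ref{prop:rethinking} with $L=t$. The quantifier issue you flag in your final paragraph is real, but the paper's own proof glosses over it in the same way (it simply ``assum[es] that $f$ is $t$-Lipschitz continuous'' on the Markov event), so your treatment is, if anything, more candid than the original.
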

\begin{proof}
    By Prop.~\ref{prop:kd-to-lip}, we have that 
    \begin{equation}
        \Exp_{(x,y),(x',y')\sim\cD_{\cX,\cY}} \frac{d_\cY(f(x),f(x'))}{d_\cX(x,x')} \leq \epsilon\eta \prod_{j=1}^n K_j.
    \end{equation}
    By Markov's inequality, 
    \begin{equation}
        \PP_{(x,y),(x',y')\sim\cD_{\cX,\cY}}\left[\frac{d_\cY(f(x),f(x'))}{d_\cX(x,x')} \geq t\right] \leq \frac{\epsilon\eta}{t} \prod_{j=1}^n K_j.
    \end{equation}
    We yield the corollary by directly applying Prop.~\ref{prop:rethinking} assuming that $f$ is $t$-Lipschitz continuous.
\end{proof}
Next, we establish conditions under which Lipschitz continuity implies knowledge continuity.
\begin{proposition*}[Prop.~\ref{prop:lip-to-kd}]
    Let $(\cX, d_\cX), (\cY, d_\cY)$ be a metric spaces. Let $f: \cX \to \cY$ be $\epsilon$-Lipschitz continuous and $\cL(f(x),y)$ be $\eta$-Lipschitz continuous with respect to both coordinates. If the first $n$ metric decompositions of $f$ are $K_i$-Lipschitz continuous, then $f$ is knowledge continuous with respect to the $n$\textsuperscript{th} hidden layer. That is, 
    \begin{equation}
        \Exp_{(x,y)\sim \cD_{\cX,\cY}} \sigma_f^n(x,y) \leq \epsilon\eta \prod_{j=1}^n \frac{1}{K_j}.
    \end{equation}
\end{proposition*}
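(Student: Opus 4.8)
The plan is to prove the bound pointwise on the integrand and then integrate, essentially running the argument of Prop.~\ref{prop:kd-to-lip} in the reverse direction, since we now want an \emph{upper} bound on volatility rather than a lower bound. First I would invoke Tonelli's theorem---legitimate since the integrand is nonnegative---to rewrite $\Exp_{(x,y)\sim\cD_{\cX,\cY}}\sigma_f^n(x,y)$ as a single integral of $\Delta\cL_f^{(x,y)}(x',y')\,/\,d_n(f^n(x),f^n(x'))$ against $\cD_{\cX,\cY}\times\cD_{\cX,\cY}$ over the set where $f(x)\neq f(x')$. On that set $f^n(x)\neq f^n(x')$ (since $f$ factors through $f^n$), so $d_n(f^n(x),f^n(x'))>0$ and the integrand is well-defined there.

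Next I would bound the integrand pointwise. For the numerator, the triangle inequality together with the $\eta$-Lipschitz continuity of $\cL(f(\cdot),\cdot)$ in both coordinates gives $\Delta\cL_f^{(x,y)}(x',y') \leq |\cL(f(x),y)-\cL(f(x'),y)| + |\cL(f(x'),y)-\cL(f(x'),y')|$. The first summand is at most $\eta\, d_\cY(f(x),f(x'))$, and since $f$ is $\epsilon$-Lipschitz this is at most $\epsilon\eta\, d_\cX(x,x')$; the second summand is the ``label term'' that vanishes on $\{y=y'\}$, the regime that the hypothesis of Prop.~\ref{prop:kd-to-lip} also isolates. For the denominator, I would telescope the $K_i$-Lipschitz estimates on the first $n$ metric-decomposition maps $h_1,\dots,h_n$ to relate $d_n(f^n(x),f^n(x'))$ to $d_\cX(x,x')$---this is the mirror of the step in the proof of Prop.~\ref{prop:kd-to-lip} that passes between $d_k(f^k(x),f^k(x'))$ and $\prod_{j=1}^n K_j\, d_\cX(x,x')$. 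Combining the two estimates bounds the ratio $\Delta\cL_f^{(x,y)}(x',y)\,/\,d_n(f^n(x),f^n(x'))$ above by $\epsilon\eta\prod_{j=1}^n K_j^{-1}$ uniformly in $x,x',y$.

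Finally, since this estimate on the integrand is uniform, I would take the double expectation back and read off from Def.~\ref{def:kd-whole} that $\Exp_{(x,y)\sim\cD_{\cX,\cY}}\sigma_f^n(x,y) \leq \epsilon\eta\prod_{j=1}^n K_j^{-1}$, which is the assertion.

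The step I expect to be the main obstacle is the denominator estimate: one must use the $K_i$-Lipschitz information in the direction that controls $1/d_n(f^n(x),f^n(x'))$ from above---equivalently $d_n(f^n(x),f^n(x'))$ from below---in terms of $d_\cX(x,x')$ and the $K_j$, which is exactly why the comparison is naturally carried out on the diagonal $y=y'$, as in Prop.~\ref{prop:kd-to-lip}; one must also verify that the label-mismatch contribution $|\cL(f(x'),y)-\cL(f(x'),y')|$ does not spoil the clean constant $\epsilon\eta\prod_{j=1}^n K_j^{-1}$. The remaining ingredient---the interchange of the iterated expectations into a single integral---is purely technical and is justified by nonnegativity of the integrand via Tonelli's theorem.
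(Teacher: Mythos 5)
Your reconstruction of the skeleton matches the paper's proof exactly: the paper argues pointwise, starting from $d_\cY(f(x),f(x'))\le \epsilon\, d_\cX(x,x')$, replacing $d_\cX(x,x')$ in the denominator by $\prod_{j=1}^n K_j^{-1}\, d_n(f^n(x),f^n(x'))$ via the telescoped layer bounds, replacing the numerator by $\eta^{-1}\,\Delta\cL_f^{(x,y)}(x',y')$ via the Lipschitzness of $\cL$, and then integrating. But your proposal is not a proof, because the two steps you defer as ``obstacles'' are precisely the load-bearing inequalities, and you leave both open. For the denominator, you correctly observe that one needs $d_n(f^n(x),f^n(x'))$ bounded \emph{below} by a multiple of $d_\cX(x,x')$, whereas $K_i$-Lipschitz continuity of the first $n$ decomposition maps yields only the upper bound $d_n(f^n(x),f^n(x'))\le \prod_{j=1}^n K_j\, d_\cX(x,x')$; equivalently $\prod_{j=1}^n K_j^{-1}\, d_n(f^n(x),f^n(x'))\le d_\cX(x,x')$, so substituting this quantity into the denominator of the Lipschitz quotient can only \emph{increase} the quotient. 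The paper nonetheless performs exactly this substitution in passing from its first to its second displayed inequality; the step you flag as the main obstacle is therefore the step the whole argument rests on, and your own (correct) reading of the hypotheses gives you no way to carry it out. Declining to execute it leaves the proposition unproved.

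The second unresolved point is the label term. The expectation defining $\sigma_f^n(x,y)$ ranges over all $(x',y')\sim\cD_{\cX,\cY}$ with $f(x)\ne f(x')$, not over the diagonal $y'=y$; restricting to $y'=y$ was admissible in Prop.~\ref{prop:kd-to-lip} only because there one seeks a \emph{lower} bound on a nonnegative integrand. For the \emph{upper} bound required here, the contribution $|\cL(f(x'),y)-\cL(f(x'),y')|\le \eta\, d_\cY(y,y')$ survives, and nothing in the hypotheses relates $d_\cY(y,y')$ to $d_\cX(x,x')$ or to $d_n(f^n(x),f^n(x'))$. The paper sidesteps this by bounding $\Delta\cL_f^{(x,y)}(x',y')$ directly by $\eta\, d_\cY(f(x),f(x'))$, i.e.\ by invoking only the first-coordinate Lipschitz bound and discarding the $y$-increment. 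In short, you have located exactly where the difficulty lives, but a complete argument must either close these two estimates or strengthen the hypotheses (e.g.\ layers that are bounded below as well as above, and a loss whose dependence on the second coordinate is controlled), and your proposal does neither.
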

\begin{proof}
    Let us start with the definition of $\epsilon$-Lipschitz continuity and lower-bound it. For any $(x,y),(x',y') \in \cX\times \cY$,
    \begin{gather}
        \frac{d_\cY(f(x),f(x'))}{d_\cX(x,x')} \leq \epsilon, \label{eq:l2k1}\\
        \frac{d_\cY(f(x),f(x'))}{\prod_{j=1}^n \frac{1}{K_j} d_k(f^k(x),f^k(x'))} \leq \epsilon, \label{eq:l2k2}\\
        \frac{\frac{1}{\eta}|\cL(x,y) - \cL(x',y')|}{\prod_{j=1}^n \frac{1}{K_j} d_k(f^k(x),f^k(x'))} \leq \epsilon, \label{eq:l2k3}\\
        \frac{|\cL(x,y)-\cL(x',y')|}{d_k(f^k(x),f^k(x'))} \leq \epsilon\eta \prod_{j=1}^n \frac{1}{K_j}.\label{eq:l2k4}
    \end{gather}
    Eq.~\ref{eq:l2k2} follows from inductively applying the definition of Lipschitz continuity on the metric decompositions of $f$. Specifically, $d_{i+1}(f^{i+1}(x),f^{i+1}(x')) \leq K_id_i(f^i(x),f^i(x))$. Then, by the Lipschitz continuity of $\cL$ in both coordinates we yield Eq.~\ref{eq:l2k3}. Since the Lebesgue integral preserves order, Eq.~\ref{eq:l2k4} directly implies the statement of the proposition and this concludes the proof.
\end{proof}

\section{Predicting Adversarial Robustness with Volatility}\label{appendix:predict-robust}

In this section, we detail the experimental methods and results that use knowledge continuity to predict adversarial vulnerability, briefly discussed in Section~\ref{sec:practical}. We focus on langauge models of various sizes and their ability to perform sentiment classification on the IMDB dataset~\cite{imdb}. Before computing any statistics of the model, we finetune it against the IMDB dataset and reserve a test set on which we compute a \textit{vulnerability score} and estimate the model's adversarial vulnerability.

\textbf{Vulnerability Score.} As described in the main text, given a model with $n$ hidden layers, we compute all of its $k$-volatility scores. This is done with a naive Monto-Carlo algorithm which we present in Appendix~\ref{appendix:regularization}. This results in a list of $k$-volatility scores $\{\epsilon_1,\ldots,\epsilon_n\}$, one for each hidden layer. Then, we perform a simple average $n^{-1}\sum_{k=1}^n \epsilon_k$. Let us denote this quantity as the \textit{vulnarability score}.

\textbf{Estimating Adversarial Robustness.} It remains to estimate the adversarial vulnerability of a given model. We do this empirically by applying an out-of-the-box adversarial attack (specifically, TextFooler~\cite{jin2020textfooler}) on the given model with respect to the reserved test set. We then measure the number of successful adversarial attacks defined as 
\[
  \sharp \textrm{Successful Adversarial Attacks} = \frac{|\cX^{\textrm{adversarial}} \cap \cX^{\textrm{correct}})}{|\cX^{\textrm{correct}}|},
\]
where $\cX^{\textrm{correct}}$ is the set of examples in the test set that are correctly classified by the model (after finetuning) without any intervention. And, $\cX^{\textrm{adversarial}}$ are the set of examples that are incorrectly classified after an adversarial attack is applied. In other words, we only consider points where a perturbation will \textit{worsen} performance. In expectation, this estimate of adversarial robustness should be a $\nicefrac{1}{2}$ factor of the notion of vulnerability we present in Thm.~\ref{thm:robustness}, where we also consider a point to be vulnerable if perturbation \textit{increases} its performance. 

We then perform a linear regression using vulnerability score and a host of other model properties to predict the number of successful adversarial attacks. Concretely, we seek to learn the relationship:
\[
  \sharp \textrm{Successful Adversarial Attacks} = m^T \Big(n^{-1}\sum_{k=1}^n \epsilon_k \oplus \underbrace{\ldots}_{\textrm{additional architectural variables}} \Big) + b,
\]
where $m \in \R^{d}$ and $b\in \R$ are the learnable regression parameters. We also incorporate $d-1$ size and architectural variables into our regression as we found that significantly increases its predictiveness. And so, the input variables to our regression and their types are:
\begin{center}
\begin{tabular}{|c|c|}
\hline 
Feature & Type \\ \hline 
\texttt{Encoder Only} & $\{0,1\}$ \\ \hline 
\texttt{Decoder Only} & $\{0,1\}$ \\ \hline 
\texttt{Encoder-Decoder} & $\{0,1\}$ \\ \hline 
$\log(\sharp \texttt{Parameters})$ & $\R$ \\ \hline
$n^{-1}\sum_{k=1}^n \epsilon_k$ & $\R$ \\ \hline
\end{tabular}
\end{center}
For the regression itself, we perform a Ridge regression with $\alpha=1$. We test three experimental conditions where we regress the model's adversarial robustness against: (1) only vulnerability score, (2) vulnerability score and model characteristics, (3) only model characteristics. We experiment with seven models: \texttt{RoBERTa} (Base/Large)~\cite{liu2020roberta}, \texttt{BERT-Uncased} (Base/Large)~\cite{devlin2018bert}, \texttt{GPT2}, and \texttt{T5} (Small/Base)~\cite{roberts2019t5}. Our regression results are shown in Table.~\ref{table:regression-results}.

\begin{table}
\centering
\begin{tabular}{ccccccc}
\toprule
\multirow{2}{*}{Variables} &
      \multicolumn{2}{c}{(1)} &
      \multicolumn{2}{c}{(2)} &
      \multicolumn{2}{c}{(3)} \\
      & {Coefficients} & {$\Delta R^2$} & {Coefficients} & {$\Delta R^2$} & {Coefficients} & {$\Delta R^2$} \\
\midrule
\texttt{Encoder Only} & \ding{55} & \ding{55} & ${1485}$ & 0.40 & $-548$ & $0.07$\\ 
\texttt{Decoder Only} & \ding{55} & \ding{55} & $-2816$ & 0.71 & $-557$ & $0.02$\\ 
\texttt{Encoder-Decoder} & \ding{55} & \ding{55} & $1332$ & 0.29 & $1105$ & $0.18$\\
$\log(\sharp \texttt{Parameters})$ & \ding{55} & \ding{55} & $66$ & $-6.1 \times 10^{-5}$ & $-363$ & $0.04$\\ 
$n^{-1}\sum_{k=1}^n \epsilon_k$ & \textbf{49} & \textbf{0.35} & $96$ & ${2.57}$ & \ding{55} & \ding{55}\\ 
\midrule
$R^2$ & & 0.35 & & 0.48 & & 0.28 \\ \bottomrule \\ 
\end{tabular}
\caption{Regression results from our three previously described experimental settings. We regress the number of successful adversarial attacks against (1) only the vulnerability score (2) vulnerability score and model characteristics (3) only model characteristics. The coefficients for each of these regressions results are shown in the column \textit{Coefficients}. We also run permutation tests for each coefficient and the change in $R^2$ is shown in the column $\Delta R^2$ (higher the better).}
\label{table:regression-results}
\end{table}
After yielding an initial line-of-best fit (see Fig.~\ref{fig:app-kvs-panel}(a)), we run permutation tests to determine the contribution of each feature to the explained variance. Specifically, for each feature, keeping all else constant, we permute its values. If this feature is a significant contributor to the explained variance, intuitively, we should see a large decrease in $R^2$ after this intervention. If $s$ is the $R^2$ without any intervention and $s_{\sigma_i(d)}$ is the $R^2$ after permuting the data by $\sigma_i(\cdot): [n] \to [n]$ (for a dataset of $n$ data points). Then, we define 
\[
\Delta R^2 = s - \frac{1}{N}\sum_{k=1}^N s_{\sigma_k(d)},
\]
where $N$ controls the number of permutations that we apply. For all experiments we choose $N = 100$. For formal theory on permutation tests, see~\cite{casella2024statistical}. 

We find that when our \textit{vulnerability score} is added to the regression, it contributes significantly to the explained variance. Moreover, in (2), we see that vulnerability score has the highest feature importance among all regression variables.

\section{Localizing Volatile Hidden Representations}\label{appendix:localize-vulnerable}

In this section, we localize adversarially vulnerable hidden representations in two ways. Firstly, we use $k$-volatility to gauge which layers are vulnerable across a selection of models. Then, we focus on model-specific characterizations of robustness with respect to $k$-volatility. We present experiments on the same selection of models in Appendix~\ref{appendix:predict-robust}, the same dataset (IMDB~\cite{imdb}), and the same adversarial attack (TextFooler~\cite{jin2020textfooler}) to empirically measure adversarial vulnerability. 

\subsection{Layerwise Volatility}
As mentioned in the previous section (Section~\ref{appendix:predict-robust}), for a given model with $n$ hidden layers, we can measure its $k$-volatility for $k \in [n]$ through a Monte-Carlo algorithm. For each model, we then plot its $k$-volatility against its relative depth which is defined as $\lfloor{k/n}\rfloor$. These curves are shown in Fig.~\ref{fig:app-kvs-panel}(b). We see that models which have different architectures independent of size have very different $k$-volatility curves. 

We have already shown in the previous section that there is a positive correlation between $k$-volatility and adversarial vulnerability. However, this correlation is derived from the simple average of all $k$-volatility scores. Are the $k$-volatility scores in some layers more predictive of adversarial vulnerability than others? If the $k$-volatility in some layers is more correlated with $k$-volatility in others, then it should suffice to minimize $k$-volatility in these former layers. This would also speed up regularization and training. 

We repeat the experiments in the previous settings. But, instead of collating $k$-volatility through a simple average, we run one regression for each relative depth across all models (which we discretize into 9 bins). This result is shown in Fig.~\ref{fig:app-kvs-panel}(c). Surprisingly, we find that the magnitude of $k$-volatility is not necessarily predictive of adversarial vulnerability. For example, in Fig.~\ref{fig:app-kvs-panel}(b), almost all of the models exhibit low average $k$-volatility in the latter layers. However, the $k$-volatility of latter layers predict adversarial vulnerability the best.

\subsection{Model-Specific Volatility}
We start by exploring the $k$-volatility across each of our test models. We notice that $k$-volatility cannot be predicted by surface-level features such as size or model type alone. This is shown clearly in Fig.~\ref{fig:bar-size-kvd}. Yet, as discussed in Appendix~\ref{appendix:predict-robust}, it is still able to predict actual adversarial vulnerability with moderate power. Thus, we conjecture that $k$-volatility captures a complex aspect of the model's vulnerability which cannot be solely attributed to its size or type. 

\begin{figure}[h]
    \centering
    \includegraphics[width=.7\textwidth]{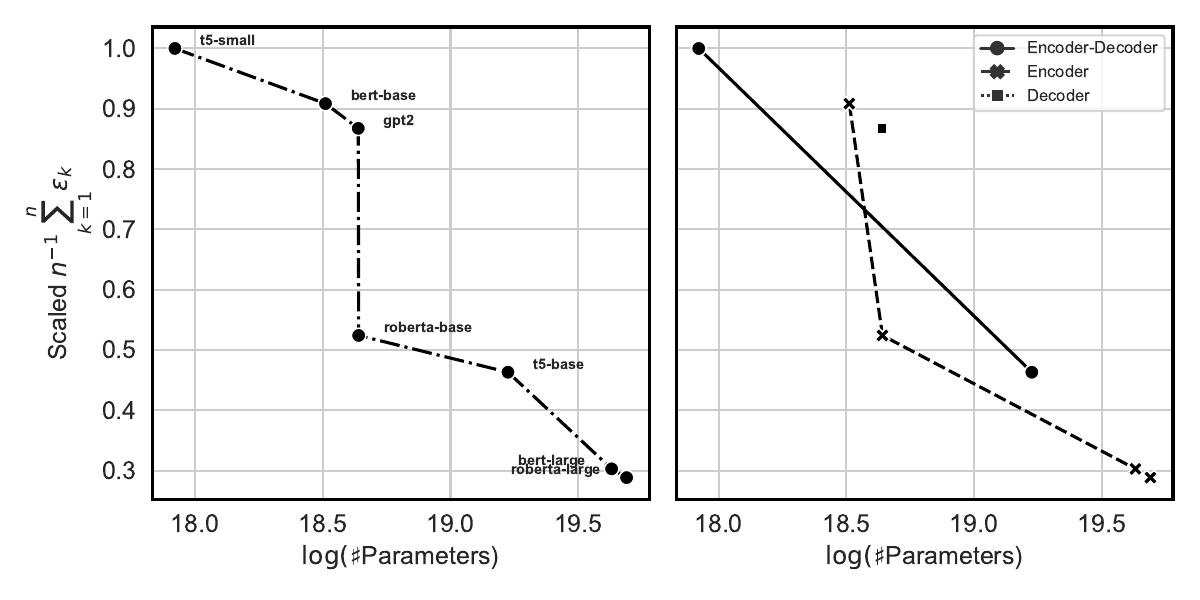}
    \caption{Average $k$-volatility plotted against the log of number of model parameters (left). We see that although there is a strong negative correlation, the exactly relationship is nontrivial. Moreover, this negative correlation is also consistently observed across model families (right).}
    \label{fig:bar-size-kvd}
\end{figure}

\section{Regularizing Knowledge Continuity}\label{appendix:regularization}

In this section, we provide a comprehensive overview of regulating knowledge continuity to achieve robustness. We first show a simple algorithm that estimates $k$-volatility. Then, we demonstrate how this can incorporated into any loss function as a regularization term. We then prove guarantees that revolve around the unbiasedness of our estimation algorithm. Lastly, we present detailed discussion of the results shown in Table~\ref{table:adv-results} including training details and ablation studies over the hyperparameters.

\subsection{Estimating Knowledge Continuity Algorithmically}
We first present a method for estimating $k$-volatility. This is shown in Alg.~\ref{alg:kd}(\textsc{EstKVol}). In theory, one should choose $M = N$, as this will lead to a most accurate estimate. This is similar to contrastive learning methods where it is desirable to make the minibatch sizes as large as possible~\cite{radford2021learning}. However, if $N \gg 1$, this can become quickly intractable. In practice, during regularization we keep $N$ to be the same as if we were doing normal finetuning (i.e. 32/64) and set $M=N$. This works well, and, anecdotally, we find that in contrast to contrastive learning increasing $N$ or $M$ past this threshold yields marginal returns. Further work could examine this relationship in more detail. 

\begin{algorithm}
\centering
\caption{A Monte-Carlo algorithm for estimating $k$-volatility of some metric decomposable function $f$ with $n$ hidden layers (left). Augmenting any loss function to regularize $k$-volatility (right), given some Beta distribution parameterized by $\alpha,\beta$ and regularization strength $\lambda \geq 0$.}
\label{alg:kd}
\small
\begin{minipage}{0.5\textwidth}
\begin{algorithmic}
\Procedure{EstKVol}{$\{(x_i,y_i)\}_{i=1}^N, M, f, k$}
\State Sample $\{n_1,\ldots,n_M\} \subset [N]$ uniformly
\State $\sigma_f^k \gets 0$
\State $\text{Losses} \gets \{\cL(f(x_{n_i}),y_{n_i})\}_{i=1}^M$
\For{$(i,j) \in [M]\times [M]$}
    \State $\text{Dist} \gets d_k(f^k(x_{n_i}),f^k(x_{n_j}))$
    \State $\sigma_f^k \gets \sigma_f^k + |\text{Losses}_i - \text{Losses}_j|/\textsc{Dist}$
\EndFor
\State \textbf{return} $\sigma_f^k$
\EndProcedure
\end{algorithmic}
\end{minipage}
\begin{minipage}{0.48\textwidth}
\begin{algorithmic}
\Procedure{KCReg}{$\alpha, \beta, M, \lambda$}
\State $X\sim \Beta(\alpha,\beta)$
\State $k\gets \max(\lfloor Xn \rfloor,1)$
\State $\sigma_f^k \gets$ \Call{EstKVol}{$\{(x_i,y_i)\}_{i=1}^N,f,M,k$}
\State \textbf{return} $\frac{1}{N}\sum_{i=1}^N\cL(f(x_i),y_i) + \frac{1}{M^2}\lambda\sigma_f^k$ 
\EndProcedure
\end{algorithmic}
\end{minipage}
\end{algorithm}

As discussed in the main text, the choice of metric (or representation space) which we enforce knowledge continuity against is crucial as it determines the type of robustness we will achieve. Therefore, in Alg.~\ref{alg:kd}(\textsc{KCReg}), we incorporate this detail by sampling a hidden layer of interest using a Beta distribution specified by hyperparameters $\alpha,\beta$. Then, on that minibatch, regularize $k$-volatility with respect to that sampled layer. Note that we choose the Beta distribution for simplicity, however, it can be replaced by any distribution like a mixture of Gaussians.

In contrast to existing adversarial training methods such as~\cite{huang2021training} and~\cite{shafahi2019adversarial} which only use the embeddings, our algorithm gives the practitioner more control over which hidden layer (or distance metric) to enforce smoothness. In this way, if the practitioner has some knowledge \textit{a priori} of the attacker's strategy, they may choose to optimize against the most suitable metric decomposition. We present a brief discussion of the various tradeoffs when choosing $\alpha,\beta$ in the following section as well as a detailed empirical analysis in the following subsections. 

$\lambda$ is the weight we put on the regularizer in relation to the loss function $\cL$. We provide a detailed ablation study of the effects of $\lambda$ in the following subsections. We surprisingly find that even for $\lambda \ll 1$ we can achieve significant edge in terms of robustness over existing methods. This is in contrast to virtual adversarial training methods such as~\cite{liu2020adversarial} which requires applying a $\lambda$-value magnitudes larger. Moreover, for larger $\lambda$, we find that the accuracy of the model is not compromised. This provides some empirical support for Theorem~\ref{prop:expressiveness}.

\subsection{Theoretical Guarantees of $k$-Volatility Estimation}
In this subsection, we show that our Monte-Carlo algorithm presented in Alg.~\ref{alg:kd}(\textsc{EstKVol}) is an unbiased estimator. The proof is simple and follows from some bookkeeping.

\begin{proposition}[Alg.~\ref{alg:kd}(\textsc{EstKVol}) is an Unbiased Estimator]\label{prop:unbiased}
Assuming that each data point in the batch, $\{(x_i,y_i)\}_{i=1}^N \sim \cD_{\cX,\cY}$, is sampled i.i.d., then Alg.~\ref{alg:kd}(\textsc{EstKVol}) is an unbiased estimator for $\Exp \sigma_f^k(x,y)$.
\end{proposition}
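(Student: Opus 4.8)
The plan is to unwind the two layers of randomness in Alg.~\ref{alg:kd}(\textsc{EstKVol})---the i.i.d.\ minibatch $\{(x_i,y_i)\}_{i=1}^N \sim \cD_{\cX,\cY}$ and the uniform index sample $\{n_1,\dots,n_M\}\subset[N]$---and reduce the claim to exactly the Fubini/Tonelli manipulation already used in the proof of Thm.~\ref{thm:robustness}. Write the quantity returned, after the $1/M^2$ normalization applied in \textsc{KCReg}, as $\widehat\sigma_f^k = \frac{1}{M^2}\sum_{(i,j)\in[M]\times[M]} T_{ij}$ with $T_{ij} = \Delta\cL_f^{(x_{n_i},y_{n_i})}(x_{n_j},y_{n_j}) \,/\, d_k(f^k(x_{n_i}),f^k(x_{n_j}))$, adopting the convention $0/0 \coloneqq 0$ so that the diagonal terms $T_{ii}$, and more generally any collision $f^k(x_{n_i}) = f^k(x_{n_j})$, are well-defined. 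Since every $T_{ij}\ge 0$, linearity of expectation applies unconditionally---no integrability hypothesis is needed and the conclusion survives even when $\Exp_{(x,y)\sim\cD_{\cX,\cY}}[\sigma_f^k(x,y)]=\infty$---so $\Exp[\widehat\sigma_f^k] = \frac{1}{M^2}\sum_{(i,j)}\Exp[T_{ij}]$.

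Next I would evaluate a single off-diagonal expectation $\Exp[T_{ij}]$ for $i\neq j$. Condition first on the index draw: because the minibatch is i.i.d.\ hence exchangeable, for any fixed pair of distinct indices $(a,b)$ the tuple $(x_a,y_a,x_b,y_b)$ is distributed as two independent draws from $\cD_{\cX,\cY}$; averaging over the uniformly random distinct pair $(n_i,n_j)$---taking the set notation at face value, i.e.\ sampling without replacement with $M\le N$---preserves this, so $(x_{n_i},y_{n_i},x_{n_j},y_{n_j})\sim \cD_{\cX,\cY}\times\cD_{\cX,\cY}$. Hence $\Exp[T_{ij}] = \Exp_{(x,y),(x',y')\sim\cD_{\cX,\cY}\times\cD_{\cX,\cY}}\big[\Delta\cL_f^{(x,y)}(x',y')/d_k(f^k(x),f^k(x'))\big]$, and by Tonelli's theorem (nonnegative integrand on a product measure space) this iterates to $\Exp_{(x,y)\sim\cD_{\cX,\cY}}\big[\Exp_{(x',y')\sim\cD_{\cX,\cY}}[\Delta\cL_f^{(x,y)}(x',y')/d_k(f^k(x),f^k(x'))]\big] = \Exp_{(x,y)\sim\cD_{\cX,\cY}}[\sigma_f^k(x,y)]$, the last equality being the constraint-free form of $\sigma_f^k$ recorded in Eq.~\ref{eq:volatility-expansion} (equivalently, $\{f(x)=f(x')\}$ is $(\cD_{\cX,\cY}\times\cD_{\cX,\cY})$-null, so adding or dropping the constraint $f(x)\neq f(x')$ together with the convention $0/0=0$ does not change the value). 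The diagonal terms contribute $T_{ii}=0/0=0$. Summing over the $M(M-1)$ off-diagonal pairs gives $\Exp[\widehat\sigma_f^k] = \tfrac{M-1}{M}\,\Exp_{(x,y)\sim\cD_{\cX,\cY}}[\sigma_f^k(x,y)]$; normalizing instead by $M(M-1)$, equivalently excluding the diagonal from both the sum and the count, makes the estimator exactly unbiased.

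The steps are elementary, so the obstacles are bookkeeping rather than analysis. The first is the $0/0$ on the diagonal of the $[M]\times[M]$ loop and on any exact collision in $\cZ_k$, which I would dispatch with the convention $0/0\coloneqq 0$ together with the standing assumption---already implicit in the way Def.~\ref{def:k-volatility} is rewritten as Eq.~\ref{eq:volatility-expansion}---that collisions are null under $\cD_{\cX,\cY}\times\cD_{\cX,\cY}$. The second is the precise normalizing constant: with without-replacement sampling and the diagonal removed the estimator is exactly unbiased, whereas with the literal $1/M^2$ (or with with-replacement index sampling) one picks up a multiplicative factor tending to $1$, which is worth stating explicitly rather than glossing over. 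No limiting arguments, concentration inequalities, or integrability conditions enter; everything rests on linearity of expectation, exchangeability of the i.i.d.\ batch, and one application of Tonelli---precisely why the proof reduces to bookkeeping.
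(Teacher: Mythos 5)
Your proposal follows the same route as the paper's own proof---expand the double sum, apply linearity of expectation, and use the i.i.d.\ assumption (via exchangeability of the batch and independence of the index draw) to identify each pairwise term's expectation with $\Exp_{(x,y)\sim\cD_{\cX,\cY}}[\sigma_f^k(x,y)]$ through a Tonelli step---so there is no methodological divergence. Where you differ is in the bookkeeping, and your version is the sharper one: the paper's proof treats all $M^2$ summands identically and concludes $\E[\hat\theta]=\Exp\sigma_f^k(x,y)$ outright, silently including the diagonal pairs $(i,i)$ for which the summand is $0/0$, and silently dropping the $f(x)\neq f(x')$ restriction in Def.~\ref{def:k-volatility}. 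Under the natural conventions you state ($0/0\coloneqq 0$, collisions in $\cZ_k$ null under $\cD_{\cX,\cY}\times\cD_{\cX,\cY}$), your computation correctly yields $\E[\hat\theta]=\tfrac{M-1}{M}\,\Exp\sigma_f^k(x,y)$ with the $1/M^2$ normalization used in \textsc{KCReg}; exact unbiasedness as literally claimed requires excluding the diagonal and renormalizing by $M(M-1)$ (or accepting unbiasedness only asymptotically in $M$). Your explicit exchangeability argument for the without-replacement index sample $\{n_1,\ldots,n_M\}\subset[N]$ is precisely what makes the paper's ``linearity of expectation'' step rigorous, and your observation that nonnegativity of the summands removes any integrability concern is a nice additional point the paper does not mention. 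In short: same approach, correct, and it exposes a small normalization discrepancy that the paper's proof glosses over.
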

\begin{proof}
Let $\hat \theta$ be the random variable representing the output of Alg.~\ref{alg:kd}. It suffices to show that \[
\E [\hat \theta] = \Exp\sigma_f^k(x,y),
\]
where the expectation on the left-hand side is taken over the set of all batches. By the definition of Alg.~\ref{alg:kd}(\textsc{EstKVol}), 
\begin{align}
    \E[\hat\theta] &= \Exp\left(\sum_{i=1}^M\sum_{j=1}^M \frac{1}{M^2}\frac{\Delta \cL_f^{(x_{n_j},y_{n_j})}(x_{n_i},y_{n_i})}{d_k(f^k(x_{n_i}), f^k(x_{n_j)})}\right), \\
    &= \sum_{i=1}^M\sum_{j=1}^M \frac{1}{M^2}\Exp\left(\frac{\Delta \cL_f^{(x_{n_j},y_{n_j})}(x_{n_i},y_{n_i})}{d_k(f^k(x_{n_i}), f^k(x_{n_j)})}\right), \\
    &= \Exp{\sigma_f^k(x,y)}.
\end{align}
The second equality follows from the linearity of expectation. 
\end{proof}

We emphasize that our estimator is very naive. Improving its efficiency could form the basis of possible future work. For example, Rao-Blackwellizing~\cite{blackwell1947conditional} Alg.~\ref{alg:kd}(\textsc{EstKVol}) to yield an estimator with smaller variance, applying rejection sampling to deal with the potential sparsity of the representation space discussed in Section~\ref{sec:expressiveness}, or adapting the regularization weight based on some bootstrapped confidence interval (if the estimate has higher variance then decrease weight on regularization and vice versa). However, we see that even with this naive algorithm we achieve improvements in robustness as well as training speed.

\subsection{Computer Vision Results}
In addition to regulating language models, we also demonstrate that \textsc{KCReg} is effective for vision tasks. This provides empirical support for the equivalences we proved in Section~\ref{sec:relation-to-lipschitz}. The exact same method of $k$-volatility estimation and loss augmentation is applied. We finetune three models ResNet50~\cite{he2016deep}, MobileNetV2~\cite{sandler2018mobilenetv2}, and ViT16~\cite{dosovitskiy2021an} on the MNIST dataset both with and without our regularization algorithm. We then apply two different adversarial attacks: FGSM~\cite{goodfellow2014explaining} and SI-NI-FGSM~\cite{Lin2020Nesterov}. We find that in both cases, regularization $k$-volatility improves/stabilizes robustness across attack strengths (see Fig.~\ref{fig:vision-reg}).

\begin{figure}
\centering
\includegraphics[width=.8\textwidth]{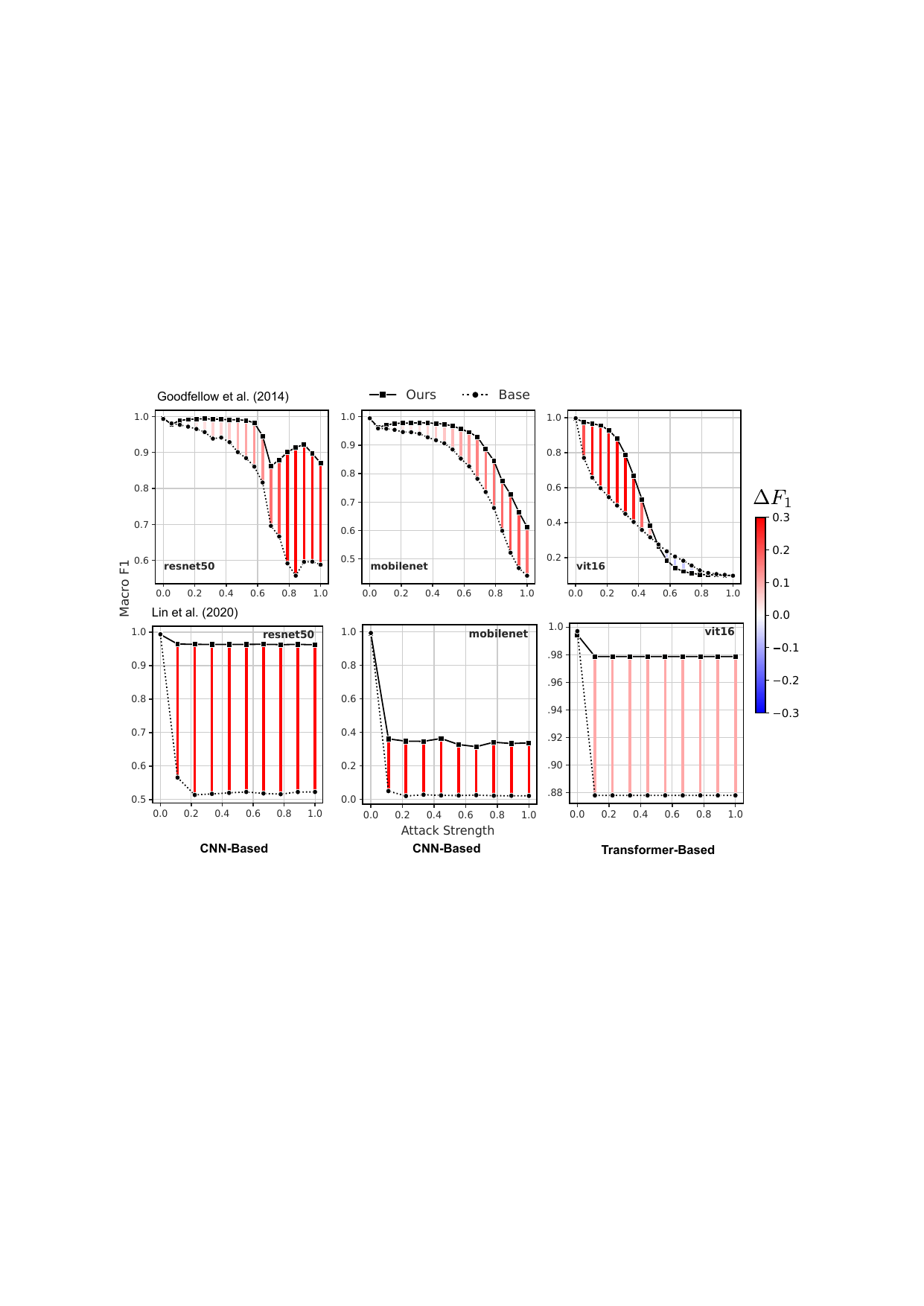}
\caption{Regularization $k$-volatility for a host of vision models. We apply two adversarial attacks FGSM~\cite{goodfellow2014explaining} (top row) and SI-NI-FGSM~\cite{Lin2020Nesterov} (bottom row) with various attack strengths. Attack strength is measured in terms of maximum $\ell_2$-norm of the applied perturbation to the image.}
\label{fig:vision-reg}
\end{figure}

\subsection{Ablation Studies}
Herein, we present ablation studies for the crucial hyperparameters in our regularization algorithm (across the natural language tasks that we explored in the main text), Alg.~\ref{alg:kd}(\textsc{KCReg}): $\lambda$ which is the weight we assign the knowledge continuity regulation loss and $(\alpha, \beta)$ which determines the sampling behavior of the index of the hidden representation space.

\textbf{Ablation Study of $\lambda$ (Fig.~\ref{fig:lam-ablate}(right)).}
The weight given to the regularizer ($\lambda$) is ablated over, with the results shown in Fig.~\ref{fig:lam-ablate}. For any positive $\lambda$, there is an immediate large improvement in adversarial robustness. Next, as $\lambda$ is systematically increased by factors of 10, we do not see a significant change in the accuracy (not under attack). This corroborates Theorem.~\ref{prop:expressiveness}, as it demonstrates that regulating knowledge discontinuities (no matter how strongly) is not at odds with minimizing the empirical risk of our model. On the other hand, we also do not see a significant increase in adversarial robustness as $\lambda$ increases. This may imply that we have reached the threshold of adversarial robustness under TextFooler~\cite{jin2020textfooler}. Specifically, the adversarial attacks generated by TextFooler may not be valid in that they have flipped the ground-truth label. Therefore, we believe that a good $\lambda$ for this particular application should lie somewhere between $0$ and $1\times 10^{-4}$.

\begin{figure}
    \centering
    \includegraphics[width=0.9\textwidth]{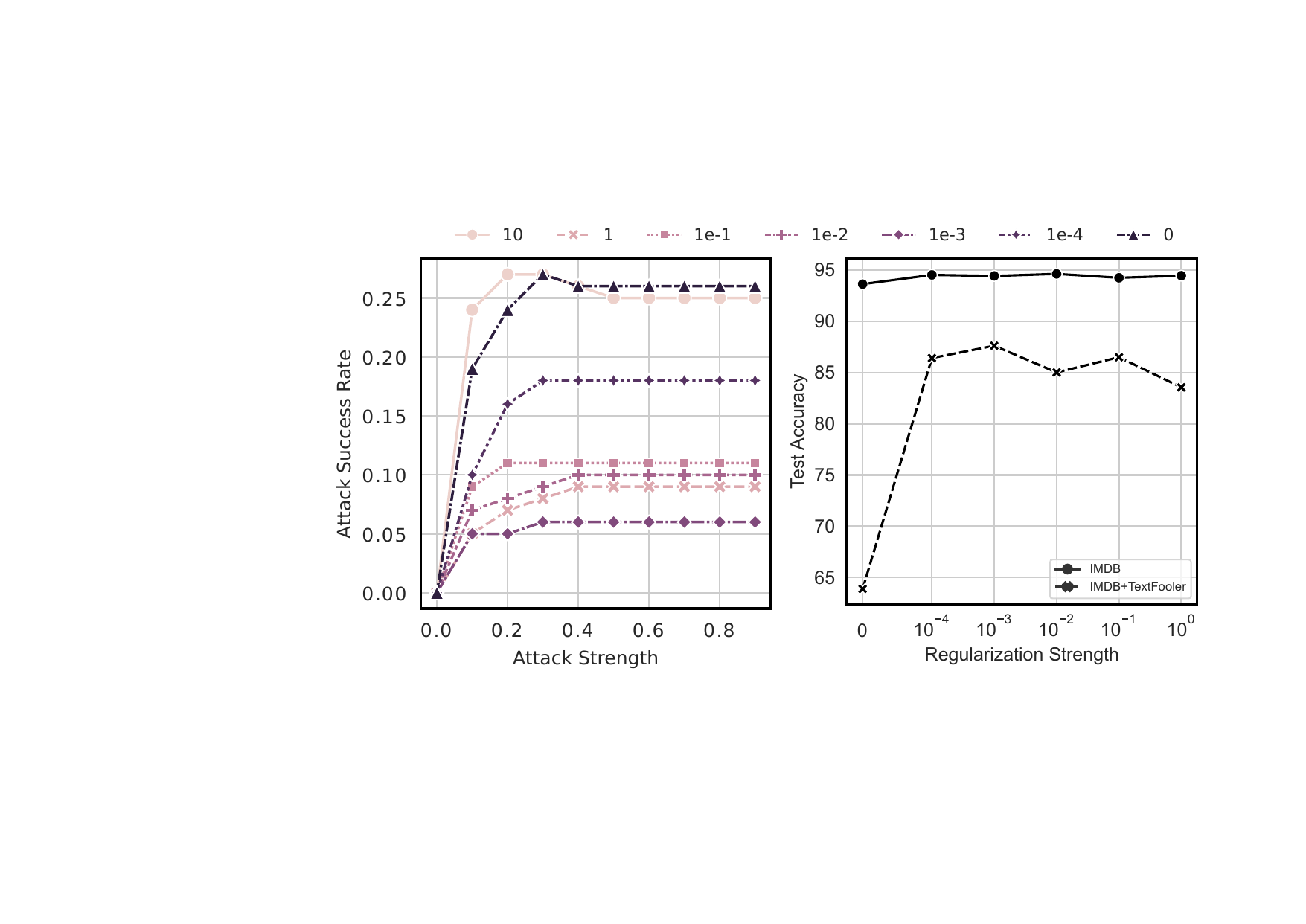}
    \caption{Ablation over the strength of regularization and its effect on the attack strength-attack success rate curves (left). Ablation over the regularization strength (for fixed attack strength = 0.3) and its effect on test accuracy (right). We see that moderate regularization significantly improves robustness across all attack strengths. This improvement does not come at the expense of test accuracy. The attack-strength is measured using the minimum angular similarity between the perturbed and original text. Both ablations are done with respect to GPT2 on the IMDB~\cite{imdb} dataset with respect to the TextFooler attack~\cite{jin2020textfooler}.}
    \label{fig:lam-ablate}
    \vspace{-1cm}
\end{figure}

\textbf{Ablation Study of Adversarial Attack Strength (Fig.~\ref{fig:lam-ablate}(left)).} For every value of $\lambda$, we also vary the strength of the adversarial attack. The adversarial attack strength is measured through the angular similarity of the embeddings between the original text and the perturbed text. Intuitively, if this constraint is loosened the adversary is allowed to find text that is semantically very different and vice versa. We see that moderate $k$-volatility regulation achieves the best adversarial robustness across all attack strengths.

\textbf{Ablation Study of $(\alpha,\beta)$}
In this subsection, we briefly discuss how the $\alpha, \beta$ hyperparameters which determine the shape of the Beta distribution in Alg.~\ref{alg:kd}(\textsc{KCReg}) affect the final performance and robustness of our model on the IMDB dataset. Recall that the shape of the Beta distribution determines the index of the hidden layers we are using the compute the knowledge continuity. Thus, they are crucial in determining the behavior of our regularizer. 

We finetune \{BERT, T5, GPT2\} models on the IMDB dataset with the hyperparameters described in the next subsection. The results are displayed in Table~\ref{table:ab}. Across all models we observe a decrease in robustness for $\alpha=1,\beta=2$. These values correspond to a right-skewed distribution which places high sampling probability on the earlier (closer to the input) hidden layers. Intuitively, perturbations in the early layers should correspond to proportional textual perturbations in the input text. Pure textual perturbations with respect to some metric like the Levenshtein distance should be only loosely if not completely (un)correlated with the actual labels of these inputs. Therefore, enforcing knowledge continuity with respect to this metric should not see increase robustness. Moreover, we also observe a larger decrease in accuracy (not under attack) with the same parameters. This suggests that maintaining this sort of knowledge continuity in the earlier layers is harder to converge on and there may be a ``push-and-pull'' behavior between optimizing knowledge continuity and accuracy (not under attack). Surprisingly, we observe no significant difference between the other $\alpha,\beta$ values shown in the table.

\begin{table}
\centering
\begin{small}
\begin{tabular}{lccccr}
\toprule
Model &  IMDB & IMDB\textsubscript{TF} \\
\midrule
BERT\textsubscript{BASE} & 93.6 & 47.9 \\ 
BERT\textsubscript{BASE}+Reg\textsubscript{(2,1)} & \textbf{94.8} & \textbf{75.1} \\
BERT\textsubscript{BASE}+Reg\textsubscript{(2,2)} & 89.2 & 74.1  \\
BERT\textsubscript{BASE}+Reg\textsubscript{(1,2)} & 87.0 & 68.2 \\
\midrule 
GPT2    & 93.6 & 63.9 \\
GPT2+Reg\textsubscript{(2,1)} & 94.6 &  85.0 \\
GPT2+Reg\textsubscript{(2,2)} & \textbf{94.9} &  \textbf{87.8} \\
GPT2+Reg\textsubscript{(1,2)} & 93.1 &  84.9  \\
\midrule 
T5\textsubscript{BASE} & 93.7 & 53.9\\
T5\textsubscript{BASE}+Reg\textsubscript{(2,1)} & \textbf{95.0} & 88.9\\
T5\textsubscript{BASE}+Reg\textsubscript{(2,2)} & 94.9 & \textbf{89.3} \\
T5\textsubscript{BASE}+Reg\textsubscript{(1,2)} & 94.6 & 88.1 \\
\bottomrule \\
\end{tabular}
\end{small}

\caption{We train finetune \{BERT, T5, GPT2\} using knowledge continuity regularization, as described in Alg.~\ref{alg:kd}(\textsc{KCReg}). We varied the $\alpha,\beta$ hyperparameters for the Beta distribution as to determine the effect of these parameters on model performance and robustness. The rows of the table are labeled with the format: $\text{Model+Reg}_{(\alpha,\beta)}$. The bolded entries of the table correspond to the best performing metrics out of the knowledge continuity regulated models.}
\label{table:ab}
\end{table}
\begin{table}

\begin{center}
\begin{tabular}{lc}
\toprule
Hyperparameter & Value \\
\midrule 
Optimizer & Adam \\
Adam $\beta_1$ & $0.9$ \\
Adam $\beta_2$ & $0.999$ \\
Adam $\epsilon$ & $1\times 10^{-8}$ \\
Max Gradient Norm & $1.0$ \\
Learning Rate Scheduler & Linear \\
Epochs & 20 \\
Batch Size & 32 \\
Learning Rate & $5\times 10^{-5}$ \\
Weight Decay & $1\times 10^{-9}$ \\
\bottomrule
\end{tabular}
\end{center}
\caption{Training hyperparameters and optimizer configurations for finetuning models \{BERT, GPT2, T5\} on IMDB without any form of regularization or adversarial training.}
\label{table:plain-training-dets}
\vspace{-0.5cm}
\end{table}

We did not formally benchmark other configurations of $\alpha,\beta$ such as increasing their magnitude to impose a sharper distribution. \textbf{\textit{Anecdotally, during training, we noticed that using these sharper distributions both significantly slowed the model's convergence and decreased the model's accuracy (not under attack). It could be that though knowledge continuity itself is a \textit{local} property and the enforcement of this \textit{local} property requires change on a \textit{global} scale. In other words, one cannot simply reduce the knowledge discontinuities or uniformly converge with respect to one layer without participation from other layers.}} The extent to which other layers are involved in the regularization of a specific one is an interesting question that we leave for future research.

\subsection{Training Details}
In this section, we describe in detail the training objectives, procedures, algorithms, and hyperparmeters that we used in the main text and further experiments done in the appendix.

\textbf{Brute-Force Adversarial Training.}
For all models undergoing adversarial training, we first finetune the model against the training set. Then, attack it using the TextFooler~\cite{jin2020textfooler} algorithm with examples from the training set. After the attacks are concluded, we then incorporate the text of successful adversarial attacks back into the training set and proceed to finetune again. This procedure iteratively continues. For the sake of computational efficiency, for all models we applied this procedure once. The parameters we are using during the adversarial attack is the same hyperparameters we actually use at test-time. Specifically, we impose a query budget of 300 queries.

\textbf{Plain Finetuning on IMDB.}
The IMDB dataset consist of 50,000 examples with 25,000 for training and 25,000 for testing. We split the test set 40\%-60\% to create a validation and test set of 10,000 and 15,000 examples, respectively. Examples were sampled uniformly at random during the splitting process. Since adversarial attacks were costly, we uniformly subsampled 5,000 examples from this 15,000 to benchmark robustness in the experiments related to the regularizer. However, for the experiments estimating the knowledge vulnerability score, we performed adversarial attacks on all 15,000 datapoints in the test set. We found no significant difference between robustness estimation on this 5,000 subsample versus and the entire 15,000 dataset.  

We train all models using the hyperparameter and optimizer configurations shown in Table~\ref{table:plain-training-dets}.

\textbf{Knowledge Discontinuity Regulation on IMDB.}
To enforce the knowledge discontinuity on IMDB, we use a constant $\lambda=1\times 10^{-2}$ for all models. As shown in Table~\ref{table:ab}, we varied $\alpha,\beta \in \{1,2\} \times \{1,2\}$ and displayed the best models in terms of robustness in Table.~\ref{table:adv-results} in the main text. We train all models for 50 epochs. Other than that all the other hyperparameters and optimizer configurations are the same as regular finetuning (see Table~\ref{table:plain-training-dets}). 

\textbf{Knowledge Discontinuity Regulation on ANLI.}
Optimizing over the ANLI dataset was significantly harder than on IMDB. As a result, for each model class \{BERT, GPT2, T5\} we performed a quick hyperparameter search over $\lambda$ ($1\times 10^{-4}$), the learning rate ($5\times 10^{-5}$), and weight decay ($1\times 10^{-9}$) fixing the parameterization of the Beta distribution to be the best values on the IMDB dataset. That is, for T5: $\alpha=2,\beta=1$; BERT-Base-Uncased: $\alpha=2,\beta=1$; GPT2: $\alpha=2,\beta=2$.

\textbf{ALUM on IMDB and ANLI.}
We train all ALUM models for 50 epochs (the same as knowledge discontinuity regularized models). For hyperpararmeters specific to the ALUM algorithm we choose all of the same ones as its authors, \cite{liu2020adversarial}, with the exception of $\alpha$ (analogous to the $\lambda$ in our algorithm, essentially the weight put on the virtual adversarial training loss term). The authors of the original paper choose $\alpha=10$. We, however, found that this applied to finetuning does not converge at all. Thus, with a rough grid search in the parameter space we found $\alpha=1\times 10^{-3}$ to be the best with respect to both performance and robustness. 

We keep the same hyperparameters on ANLI, however, we impose early stopping during the training process. That is, we choose the best model with respect to its performance on the \textbf{dev} set. 

\section{Certifying Robustness at Test-Time}\label{appendix:cert-robust-alg}

Herein, we present a certification algorithm using Thm.~\ref{thm:robustness} and our Monte-Carlo estimate of $k$-volatility~\ref{alg:kd}(\textsc{EstKVol}). Our algorithm (shown in Alg.~\ref{alg:certify-robustness}) is based on the work of~\cite{cohen2019certified}. We upperbound the $k$-volatility by bootstrapping a $1-\alpha$ confidence interval. Then, directly apply Thm.~\ref{thm:robustness} using the 0-1 loss function. Thus, Cor.~\ref{cor:certification-correctness} follows. We emphasize here that this certification algorithm may not be \textit{directly} informative, especially in the discrete/non-metrizable setting, unless we have an inverse map from the representation space back to the input space. This is discussed further in~\cite{xu2023certifiably}. Nonetheless, it can be used as a method to verify whether or not certain intervention techniques are successful before deploying them in the wild. 

\begin{corollary}\label{cor:certification-correctness}
Let $A = \{(x_i,y_i)_{i=1}^n\}$ and $A' = \{(x',y') \in \cX \times \cY: \E_{\substack{(x,y)\sim \cD_{\cX,\cY} \\ (x,y) \in A}}\Delta\cL_f^{(x,y)}(x',y') > \eta\}$. Then, with probability $1-\alpha$, the output of Alg.~\ref{alg:certify-robustness} bounds $\PP[{A' | d_j(f^j(x), f^j(A))}]$ where $\cL$ is the 0-1 loss. 
\end{corollary}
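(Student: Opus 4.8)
The plan is to split the randomness in Alg.~\ref{alg:certify-robustness} into the estimation/bootstrap noise used to upper-bound the $j$-volatility and the data-governed behavior that Thm.~\ref{thm:robustness} already controls, and to bridge the two with a monotonicity observation: the right-hand side of Eq.~\ref{eq:robustness} is increasing in $\epsilon$ (all other quantities $\delta,\eta,B,\PP[A]$ held fixed), so substituting any valid upper bound on $\Exp\sigma_f^j(x,y)$ for $\epsilon$ still yields a correct bound. First I would treat the confidence-interval step. By Prop.~\ref{prop:unbiased}, Alg.~\ref{alg:kd}(\textsc{EstKVol}) is unbiased for $\Exp\sigma_f^j(x,y)$; Alg.~\ref{alg:certify-robustness} forms a one-sided $(1-\alpha)$ interval from repeated calls to it and returns its upper endpoint $\hat\epsilon$. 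By the coverage property of that interval, with probability at least $1-\alpha$ (over the estimation randomness) we have $\Exp\sigma_f^j(x,y)\leq\hat\epsilon$, i.e. on this event $f$ is $\hat\epsilon$-knowledge continuous with respect to the $j$\textsuperscript{th} hidden layer. Since the $0$-$1$ loss satisfies $\cL(y,y')=0 \iff y=y'$, all the earlier machinery applies without change.

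Next I would condition on that event and apply Thm.~\ref{thm:robustness} (or, when $\PP[A]=0$, its corollary, Eq.~\ref{eq:cor-robustness-2}) with $\epsilon\coloneqq\hat\epsilon$ and the $0$-$1$ loss, obtaining
\begin{equation*}
\PP_{(x,y)\sim\cD_{\cX,\cY}}\!\left[A'\mid d_j(f^j(x),f^j(A))<\delta\right] \;\leq\; \frac{\hat\epsilon\,\delta}{\eta\left(1-\exp\!\left[-\Omega\!\left(\tfrac{\delta}{B}-\sqrt{\log\tfrac{1}{\PP[A]}}\right)^2\right]\right)},
\end{equation*}
whose right-hand side is exactly the value reported by Alg.~\ref{alg:certify-robustness}. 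The $1-\alpha$ probability refers only to the estimation step; conditioned on its outcome, $\hat\epsilon$ is a fixed number and the displayed inequality is a deterministic statement, which holds whenever $\hat\epsilon\geq\Exp\sigma_f^j(x,y)$ — hence with probability at least $1-\alpha$. No further union bound over the data-generating randomness is needed. By the monotonicity noted at the outset, replacing $\hat\epsilon$ by the (smaller) true volatility would only tighten the bound, so the reported value is a genuine upper bound on $\PP[A'\mid d_j(f^j(x),f^j(A))<\delta]$.

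I expect the main obstacle to be making the coverage claim rigorous: percentile-bootstrap intervals are only asymptotically exact, so a fully formal argument should instead derive a finite-sample one-sided bound — e.g. a Hoeffding- or Bernstein-type concentration inequality for the (bounded) ratios $\Delta\cL_f^{(x,y)}(x',y')/d_j(f^j(x),f^j(x'))$ averaged by \textsc{EstKVol}, in the spirit of the Clopper--Pearson interval used by~\cite{cohen2019certified} — or explicitly state the guarantee as asymptotic. A secondary point worth spelling out: since $A=\{(x_i,y_i)\}_{i=1}^n$ is finite, under an absolutely continuous data distribution $\PP_{\cD_{\cX,\cY}}[A]=0$, so the operative bound is the corollary form $\hat\epsilon\delta/\eta$ (Eq.~\ref{eq:cor-robustness-2}) rather than the general bounded expression; I would state which of these Alg.~\ref{alg:certify-robustness} actually outputs.
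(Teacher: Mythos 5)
Your proof follows the paper's own (very terse) argument exactly: bootstrap a one-sided $(1-\alpha)$ upper confidence bound $\epsilon_U$ on the expected $j$-volatility, then on that event apply Thm.~\ref{thm:robustness} with the $0$-$1$ loss and the monotonicity of the bound in $\epsilon$. Your two caveats --- that percentile-bootstrap coverage is only asymptotic and that a finite $A$ has $\PP_{\cD_{\cX,\cY}}[A]=0$ under an atomless distribution (whereas Alg.~\ref{alg:certify-robustness} effectively plugs in the empirical mass $1/n$ via the $\log 2n$ term) --- are legitimate gaps the paper glosses over, not defects of your argument.
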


\begin{algorithm}
\caption{Certifying robustness of a metric decomposable function $f$ with respect to one hidden representation using Alg.~\ref{alg:kd}(\textsc{EstKVol}) and Thm.~\ref{thm:robustness}.}
\begin{algorithmic}
\Procedure{Certify}{$f, \{(x_i,y_i)\}_{i=1}^n, k, j, \alpha, \delta, \eta$}
\State Let $\cL$ be the 0-1 loss function
\State $\epsilon_U \gets \Call{UpperConfBound}{f, \cL, \{(x_i,y_i)\}_{i=1}^n, k, j, \alpha}$
\State $B \gets \max_{1\leq a,b\leq n} d_j(f^j(x_a), f^j(x_b))$
\State $V \gets \eta \left(1 - \exp\left(-2/B^2\left(\delta - B\sqrt{\frac{1}{2}\log 2n}\right)^2\right)\right)$
\State\Return \Call{Clip}{$1-\epsilon_U\delta / V, 0, 1$}
\EndProcedure

\Procedure{UpperConfBound}{$f, \cL, \{(x_i,y_i)\}_{i=1}^n, k, j, \alpha$}
\State $U \gets \mathbf{0}_k$ 
\For{$i\gets 1\ldots k$}
	\State $S \gets \textrm{sample w/ replacement}\, n\,\textrm{points from}\,\{(x_i,y_i)\}_{i=1}^n$
	\State $U_i \gets \Call{EstKVol}{S, \cL, f, j}$
\EndFor
\State \Return $\frac{1}{k} \sum_{\ell = 1}^k U_k + \Phi^{-1}(\alpha)\textrm{std}(U)/\sqrt{k}$
\EndProcedure
\end{algorithmic}
\label{alg:certify-robustness}
\end{algorithm}

Along these lines, we apply our certification algorithm to our regularized models to verify that the certified robustness has indeed improved. These results are shown in Fig.~\ref{fig:cert-kc-reg}.

\begin{figure}
\centering
\includegraphics[width=0.8\textwidth]{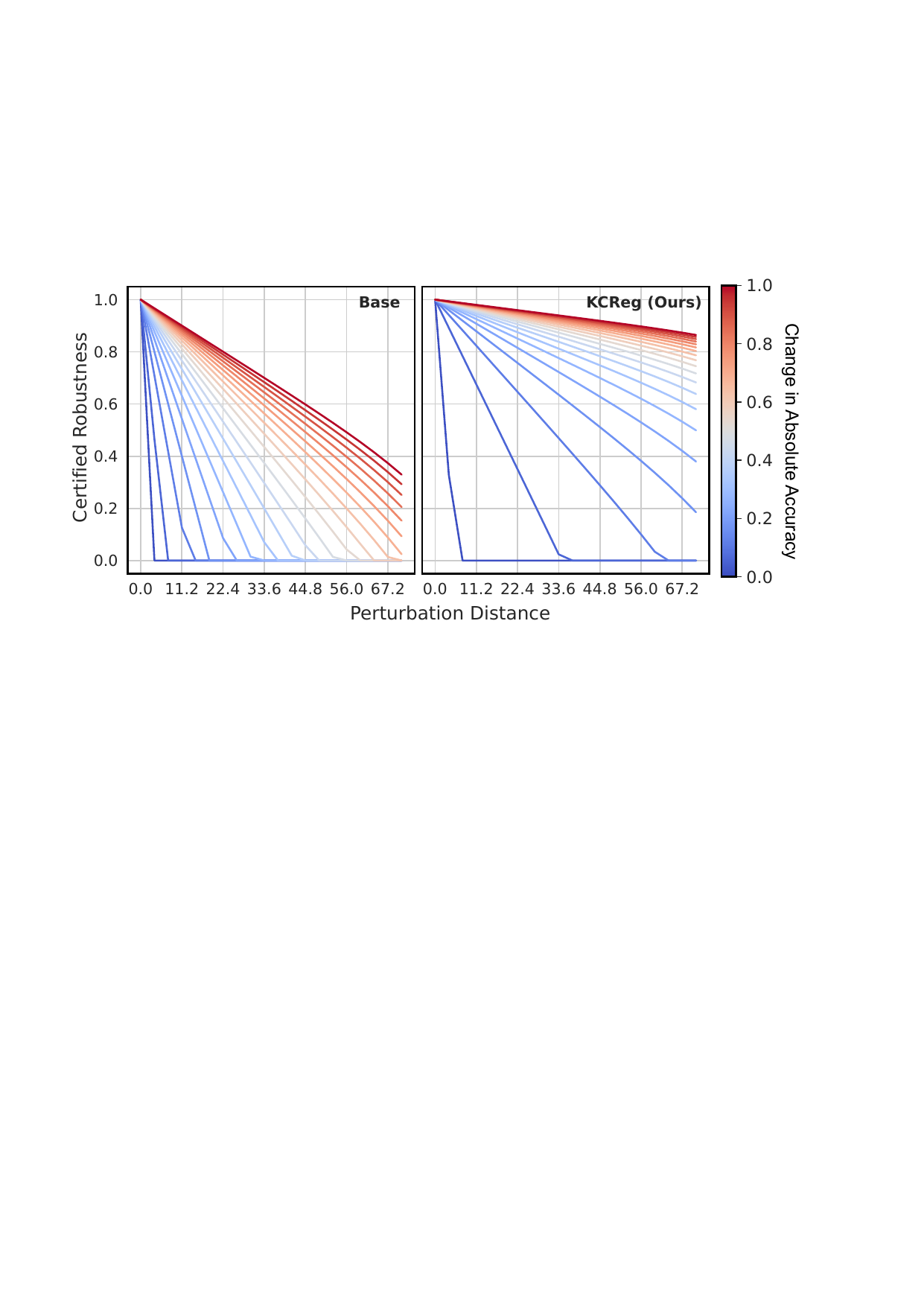}
\caption{Certification of robustness for GPT2, layer=6. We apply Alg.~\ref{alg:certify-robustness} to certify robustness of the model before and after regularization with Alg.~\ref{alg:kd}(\textsc{KDReg}). Each line corresponds 
to the change in absolute accuracy for a set of examples to be considered non-robust. The $y$-axis corresponds to the certified probability measure of the set of non-robust examples under this criterion and the $x$-axis corresponds to the maximum perturbation distance in the representation space.}
\label{fig:cert-kc-reg}
\end{figure}

\section{Broader Impacts}\label{appendix:broader-impacts}

This contribution is concerned with robust deep learning models. As deep learning becomes ubiquitous as the primary method for creating artificial intelligence, their applications in increasingly critical areas to the lay and corporations alike demand not only both high inferential accuracy and confidence but also safety and trustworthiness guarantees. Robustness addresses this latter point. More specifically, our contribution unifies separate robustness efforts from continuous and discrete domains. 

\section{Reproducibility}\label{appendix:reprodicibility}

All of our experiments were conducted on four NVIDIA RTX A6000 GPUs as well as four NVIDIA Quadro RTX 6000 GPUs. The rest of our codebase including implementations of the algorithms and figures described in the manuscript can be found at \url{https://github.com/alansun17904/kc}. 

\vspace{-3mm}
\section{Limitations}
\vspace{-3mm}
The certification guarantees of our definition knowledge continuity is a probabilistic one. Specifically, this randomness is over the data distribution. However, this does not protect against out-of-distribution attacks that plague large language models such as~\cite{wallace2021universal, zou2023universal}. More work is needed to yield deterministic results that do not become vacuous in discrete settings. As mentioned in Section~\ref{sec:expressiveness}, our expressiveness bounds only apply under little restrictions to the metric decompositions of the estimator $f$. Though we see some empirical verification for this in Appendix~\ref{appendix:regularization}, it remains unclear whether or not we can tighten these bounds. \label{sec:limitations}

% %%%%%%%%%%% Checklist %%%%%%%%%%% 
% {\footnotesize
% \input{appendix/checklist}
% }
\end{document}